
\documentclass{article}
\usepackage{arxiv}

\usepackage{microtype}
\usepackage{graphicx}
\usepackage{subfigure}
\usepackage{booktabs} 
\usepackage{bm}
\usepackage{bbm}
\usepackage{lipsum}
\usepackage{listings}
\usepackage[frozencache,cachedir=.]{minted}
\usepackage{xcolor} 
\usepackage{makecell}
\usepackage[hyphens]{url}
\usepackage{natbib}
\definecolor{LightGray}{gray}{0.9}

\usepackage{hyperref}




\usepackage{amsmath}
\usepackage{amssymb}
\usepackage{mathtools}
\usepackage{amsthm}

\DeclareMathOperator*{\argmin}{arg\,min}

\usepackage[capitalize,noabbrev]{cleveref}

\theoremstyle{plain}
\newtheorem{theorem}{Theorem}[section]
\newtheorem{proposition}[theorem]{Proposition}
\newtheorem{lemma}[theorem]{Lemma}
\newtheorem{corollary}[theorem]{Corollary}
\theoremstyle{definition}
\newtheorem{definition}[theorem]{Definition}
\newtheorem{assumption}[theorem]{Assumption}
\theoremstyle{remark}

\newtheorem{example}[theorem]{Example}

\usepackage[textsize=tiny]{todonotes}

\title{Theoretical and Practical Analysis of Fréchet Regression via Comparison Geometry}
\author{Masanari Kimura \\
School of Mathematics and Statistics, \\ The University of Melbourne
\\
\texttt{m.kimura@unimelb.edu.au} \\
\And Howard Bondell \\
School of Mathematics and Statistics, \\ The University of Melbourne \\
\texttt{howard.bondell@unimelb.edu.au}
}

\begin{document}
\maketitle

\begin{abstract}
Fréchet regression extends classical regression methods to non-Euclidean metric spaces, enabling the analysis of data relationships on complex structures such as manifolds and graphs. This work establishes a rigorous theoretical analysis for Fréchet regression through the lens of comparison geometry which leads to important considerations for its use in practice. The analysis provides key results on the existence, uniqueness, and stability of the Fréchet mean, along with statistical guarantees for nonparametric regression, including exponential concentration bounds and convergence rates. Additionally, insights into angle stability reveal the interplay between curvature of the manifold and the behavior of the regression estimator in these non-Euclidean contexts. Empirical experiments validate the theoretical findings, demonstrating the effectiveness of proposed hyperbolic mappings, particularly for data with heteroscedasticity, and highlighting the practical usefulness of these results.
\end{abstract}

\section{Introduction}
Fréchet regression~\citep{petersen2019frechet} is a powerful statistical tool for analyzing relationships between variables when the response or predictor lies in a non-Euclidean space.
It generalizes classical regression to settings where the response variable $Y$ resides in a metric space $\mathcal{M}$.
Given predictors $X$, Fréchet regression seeks to estimate the conditional Fréchet mean.
\begin{equation}
    \mu(x) = \argmin_{m \in \mathcal{M}} \mathbb{E}\left[d^2(Y, m) \mid X = x \right], \label{eq:conditional_frecet_mean}
\end{equation}
where $d$ is the metric on $\mathcal{M}$.
This approach accommodates data in various non-Euclidean spaces, such as manifolds, trees, and graphs~\citep{lin2021total,ferguson2022computation,ghosal2023application,qiu2024random,chen2022uniform}.
In recent years, several variants of Fréchet regression have been proposed~\citep{tucker2023variable,bhattacharjee2023single,song2023errors,ghosal2023frechet,zhang2024dimension,yan2024frequentist}, each addressing different aspects such as variable selection, error modeling, and high-dimensional data handling.
However, most existing studies primarily focus on specific geometric settings or lack a comprehensive theoretical framework that accounts for varying curvature bounds.
This study fills this gap by leveraging comparison geometry to provide a unified theoretical analysis of Fréchet regression across $\mathrm{CAT}(K)$ spaces with diverse curvature properties.

Fréchet regression allows the assumption of a non-Euclidean space in the space of the data, so one can expect that its behavior can be described depending on the geometrical properties of the space.
To investigate this, this study utilizes comparison geometry, which is a fundamental branch of differential geometry that investigates the geometric properties of a given space by comparing it to model spaces of constant curvature~\citep{cheeger1975comparison,grove1997comparison,cheeger2007metric,wei2009comparison}.
Unlike information geometry~\citep{amari2016information,ay2017information,nielsen2020elementary,amari2000methods,kimura2021alpha,kimura2022information}, which focuses on general statistical manifolds, this framework leverages classical comparison theorems to derive insights about the structure and behavior of more complex or less regular spaces.
By establishing inequalities and structural similarities between a target space and well-understood model spaces (e.g., Euclidean, spherical, or hyperbolic geometries), comparison geometry enables the extension of geometric and topological results to broader contexts, including spaces that may lack smoothness or traditional manifold structures.
In this framework, $\mathrm{CAT}(K) $ spaces are pivotal objects of study, which are the generalization of constant curvature space~\citep{ballmann1995lectures,jost2012nonpositive,bridson2013metric}.
$\mathrm{CAT}(K)$ spaces are geodesic metric spaces, where geodesic triangles are thinner than their comparison triangles in the model space of constant curvature $K$.
Consider several known examples of $\mathrm{CAT}(K)$ spaces.
Euclidean spaces $\mathbb{R}^n$ are classic examples with $K=0$, exhibiting flat geometry.
Hyperbolic spaces, which have constant negative curvature ($K < 0$), serve as models for spaces exhibiting exponential growth and are useful in areas like network analysis and evolutionary biology.
On the other hand, trees can be viewed as $\mathrm{CAT}(0)$ spaces, providing a discrete analog with unique geodesics between points.
Additionally, certain types of manifold structures used in shape analysis and computer graphics also qualify as $\mathrm{CAT}(K)$ spaces under specific curvature conditions.
These examples demonstrate the broad applicability of $\mathrm{CAT}(K)$ spaces in modeling diverse geometric contexts encountered in statistical analysis.
By considering such spaces, this study aims to describe the behavior of the Fréchet regression in terms of curvature $K$ in particular.


\section{Notation}
In this section, the notations and definitions required for the following analysis are organized.
Let $\mathcal{M}$ be a metric space and $d$ be the metric on $\mathcal{M}$.
Here, the metric space $(\mathcal{M}, d)$ is geodesic space if every pair of points in $\mathcal{M}$ can be connected by a geodesic, a curve whose length equals the distance between the points.
\begin{definition}[$\mathrm{CAT}(K)$ space]
    Let $(\mathcal{M}, d)$ be a geodesic metric space and let $K \in \mathbb{R}$.
    The space $\mathcal{M}$ is said to be a $\mathrm{CAT}(K)$ space if it satisfies the following curvature condition:
    for any geodesic triangle $\triangle pqr$ in $\mathcal{M}$ with perimeter less than $2 D_K$ (where $D_K = \pi / \sqrt{K}$ if $K > 0$, and $D_K = \infty$ otherwise), and for any points $x, y$ on the edges $[pq]$ and $[qr]$ respectively, the distance between $x$ and $y$ in $\mathcal{M}$ does not exceed the distance between the corresponding points $\bar{x}$ and $\bar{y}$ on the comparison triangle $\triangle \bar{pqr}$ in the model space of constant curvature $K$:
    \begin{align*}
        d(x, y) \leq d_{\mathbb{M}^2_K}(\bar{x}, \bar{y}),
    \end{align*}
    where the comparison triangle $\triangle \bar{pqr}$ is a triangle in the simply connected, complete 2-dimensional Riemannian manifold $\mathbb{M}^2_K$ of constant curvature $K$ that preserves the side lengths as $d_{\mathbb{M}^2_K}(\bar{p}, \bar{q}) = d(p, q)$, $d_{\mathbb{M}^2_K}(\bar{q}, \bar{r}) = d(q, r)$, and $d_{\mathbb{M}^2_K}(\bar{r}, \bar{p}) = d(r, p)$.
\end{definition}

\begin{definition}[Geodesic convexity]
    \label{def:geodesic_convexity}
    A function $f\colon \mathcal{M} \to \mathbb{R}$ is geodesically convex if for every geodesic $\gamma \colon [0, 1] \to \mathcal{M}$, $f(\gamma(t)) \leq (1 -t) f(\gamma(0)) + t f(\gamma(1))$, for all $t \in [0, 1]$.
\end{definition}

\begin{definition}[$\lambda$-strong geodesic convexity]
    \label{def:strong_geodesic_convexity}
    A function $f\colon \mathcal{M} \to \mathbb{R}$ is $\lambda$-strongly geodesically convex around $p \in \mathcal{M}$ if there exists a constant $\lambda > 0$ depending only on $K$ and $\mathrm{diam}(\mathcal{M})$ such that
    \begin{equation}
        f(x) - f(p) \geq \lambda d^2(x, p),
    \end{equation}
    for every $x \in \mathcal{M}$.
\end{definition}

\begin{definition}[Lower semicontinuity]
    \label{def:lower_semicontinuity}
    A functional $F \colon \mathcal{M} \to \mathbb{R} \cup \{+\infty\}$ is lower semicontinuous at a point $x \in \mathcal{M}$ if for every sequence $\{x_n\}$ converging to $x$, it satisfies
    \begin{equation}
        F(x) \leq \liminf_{n \to +\infty} F(x_n).
    \end{equation}
\end{definition}

\begin{definition}[Weak convergence in metric space]
    \label{def:weak_convergence}
    A sequence of probability measures $\{\nu_n\}$ on $\mathcal{M}$ is said to converge weakly to a probability measure $\nu$ (denoted by $\nu_n \Rightarrow \nu$) if for every bounded continuous function $f\colon \mathcal{M} \to \mathbb{R}$,
    \begin{equation*}
        \lim_{n \to +\infty}\int_\mathcal{M} f(y) d\nu_n(y) = \int_\mathcal{M} f(y) d\nu(y).
    \end{equation*}
\end{definition}

\begin{definition}[Alexandrov angle]
    \label{def:alexandrov_angle}
    The Alexandrov angle $\angle_x(y, z)$ is defined as the limit of secular angles between short sub‐segments.
    Concretely, if $y'$ is a point on $[xy]$ with $d(x, y') \to 0$ and $z'$ is a point on $[xz]$ with $d(x, z') \to 0$.
    Then,
    \begin{equation*}
        \angle_x(y, z) \coloneqq \lim_{y' \to x, z' \to x} \angle^{(\mathrm{sec})}_x(y' z'),
    \end{equation*}
    where $\angle^{(\mathrm{sec})}_x(y' z')$ is the ordinary angle in the comparison triangle for $\triangle xy'z'$ in the model space.
\end{definition}

\begin{definition}[Riemannian exponential map]
    \label{def:riemannian_exponential_map}
    Let $T_z\mathcal{M}$ be the tangent space of $\mathcal{M}$ at a point $z \in \mathcal{M}$.
    For a fixed point $z$, the Riemannian exponential map at $z$, denoted by $\exp_z$ is a map from the tangent space at $z$ to the manifold $\mathcal{M}$: $\exp_z \colon T_z\mathcal{M} \to \mathcal{M}$.
    Here, the Riemannian exponential map is constructed as
    \begin{itemize}
        \item[i)] Choose a tangent vector $v \in T_z\mathcal{M}$.
        \item[ii)] Consider the unique geodesic $\gamma_v(t)$ emanating from $z$ with initial velocity $v$.
        Formally, $\gamma_v(t)$ satisfies $\gamma_v(0) = z$ and $\gamma'_v(0) = v$.
        \item[iii)] The exponential map sends the tangent vector $v$ to the point on the manifold reached by traveling along the geodesic $\gamma_v$ for unit time, $\exp_z(v) = \gamma_v(1)$.
    \end{itemize}
\end{definition}

\section{Theory}
\label{sec:theory}
See Appendix~\ref{apd:proofs} for complete proofs of all statements.

\subsection{Existence and Uniqueness of the Fréchet Mean}
\label{sec:theory:existence_and_uniqueness}
First, it can be shown that in $\mathrm{CAT}(K)$ spaces with $K \leq 0$, the convexity properties ensure the existence and uniqueness of the Fréchet mean under mild conditions.
For $\mathrm{CAT}(K)$ spaces with $K > 0$, additional constraints on the diameter of the space may be necessary to ensure uniqueness due to potential multiple minima arising from positive curvature.

\begin{lemma}
    \label{lem:convexity_of_squared_distance_function}
    Let $(\mathcal{M}, d)$ be a $\mathrm{CAT}(K)$ space for $K \leq 0$.
    For any fixed point $p \in \mathcal{M}$, the function $f\colon \mathcal{M} \to \mathbb{R}$ defined by $f(x) = d^2(p, x)$ is geodesically convex.
\end{lemma}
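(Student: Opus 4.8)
The plan is to reduce the claim to a convexity fact in the model space $\mathbb{M}^2_K$ and then transport it back to $\mathcal{M}$ via the $\mathrm{CAT}(K)$ comparison inequality. Fix $p \in \mathcal{M}$ and an arbitrary geodesic $\gamma \colon [0,1] \to \mathcal{M}$; the goal is to show $d^2(p, \gamma(t)) \le (1-t)\, d^2(p, \gamma(0)) + t\, d^2(p, \gamma(1))$ for all $t \in [0,1]$. The points $p,\gamma(0),\gamma(1)$ span a geodesic triangle $\triangle$ in $\mathcal{M}$ whose side $[\gamma(0)\gamma(1)]$ is the image of $\gamma$. Since $K \le 0$ gives $D_K = \infty$, there is no perimeter restriction, and because $\mathbb{M}^2_K$ is unbounded, a comparison triangle $\bar\triangle = \triangle(\bar p, \overline{\gamma(0)}, \overline{\gamma(1)})$ with the same side lengths exists in $\mathbb{M}^2_K$.

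First I would record the needed model-space fact: for $K \le 0$, the map $q \mapsto d_{\mathbb{M}^2_K}(\bar p, q)^2$ is convex along geodesics of $\mathbb{M}^2_K$. For $K = 0$ this is elementary, since along an affinely parametrized Euclidean segment the squared distance to a fixed point is a quadratic with nonnegative leading coefficient. For $K < 0$, $\mathbb{M}^2_K$ is a Hadamard manifold (complete, simply connected, nonpositively curved), where convexity — indeed strong convexity — of the squared distance to a point is standard, and can alternatively be verified by a short computation with the hyperbolic law of cosines. Next, let $\bar\gamma \colon [0,1] \to \mathbb{M}^2_K$ be the constant-speed geodesic from $\overline{\gamma(0)}$ to $\overline{\gamma(1)}$, so that $\bar\gamma(t)$ is precisely the comparison point of the edge point $\gamma(t)$. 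The chain is then: by the $\mathrm{CAT}(K)$ inequality applied to the vertex $p$ (comparison point $\bar p$) and the edge point $\gamma(t)$ (comparison point $\bar\gamma(t)$), $d(p,\gamma(t)) \le d_{\mathbb{M}^2_K}(\bar p, \bar\gamma(t))$; by model-space convexity, $d_{\mathbb{M}^2_K}(\bar p,\bar\gamma(t))^2 \le (1-t)\,d_{\mathbb{M}^2_K}(\bar p,\overline{\gamma(0)})^2 + t\,d_{\mathbb{M}^2_K}(\bar p,\overline{\gamma(1)})^2$; and by side-length preservation the right-hand side equals $(1-t)\,d^2(p,\gamma(0)) + t\,d^2(p,\gamma(1))$. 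Squaring the first inequality (both sides nonnegative) and concatenating yields the statement.

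The step requiring the most care is not deep but bookkeeping-heavy: correctly identifying $\bar\gamma(t)$ as the comparison point of $\gamma(t)$ so that the $\mathrm{CAT}(K)$ inequality as stated in the excerpt (phrased for points lying on the edges of the triangle) applies verbatim, which in particular means checking that a vertex is an admissible degenerate edge point, and, for $K<0$, being explicit about which form of the hyperbolic convexity fact is invoked. I would also remark that if one instead uses the sharper parallelogram-type inequality in the model space, the same argument delivers the $\lambda$-strong geodesic convexity of \Cref{def:strong_geodesic_convexity}, which is the form later stability and uniqueness results rely on.
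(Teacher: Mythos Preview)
Your proposal is correct and follows essentially the same route as the paper: build the comparison triangle $\triangle(\bar p,\overline{\gamma(0)},\overline{\gamma(1)})$ in $\mathbb{M}^2_K$, invoke the $\mathrm{CAT}(K)$ inequality to get $d(p,\gamma(t)) \le d_{\mathbb{M}^2_K}(\bar p,\bar\gamma(t))$, and then use convexity of the squared distance in the nonpositively curved model space (the paper phrases this via a law-of-cosines inequality with a nonnegative correction term $t(1-t)c_K$). Your additional remarks on the absence of a perimeter restriction, the identification of $\bar\gamma(t)$ as the comparison point, and the upgrade to strong convexity are all apt and make the argument slightly more careful than the paper's version.
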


Lemma~\ref{lem:convexity_of_squared_distance_function} establishes that the squared distance function retains geodesic convexity in $\mathrm{CAT}(K)$ spaces with non-positive curvature.
This property is fundamental because it ensures that the Fréchet functional, which aggregates squared distances, inherits convexity.
Consequently, optimization procedures to find the Fréchet mean are well-behaved, avoiding local minima and guaranteeing global optimality under the given conditions.

\begin{lemma}
    \label{lem:existence_minimizer_in_complete_cat_k}
    Let $(\mathcal{M}, d)$ be a complete $\mathrm{CAT}(K)$ space.
    For any probability measure $\nu$ on $\mathcal{M}$ with compact support, there exists at least one minimizer $m \in \mathcal{M}$ of the Fréchet functional:
    \begin{equation*}
        m = \argmin_{x \in \mathcal{M}}\int_\mathcal{M} d^2(y, x)d\nu(y).
    \end{equation*}
\end{lemma}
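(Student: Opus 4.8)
The plan is to combine a coercivity estimate, which confines a minimizing sequence to a bounded region, with the curvature-driven convexity of the squared distance (a sharpened form of Lemma~\ref{lem:convexity_of_squared_distance_function}) to upgrade ``bounded minimizing sequence'' to ``Cauchy minimizing sequence'', and then invoke completeness. Note that one cannot simply restrict to a closed ball and apply compactness, since complete $\mathrm{CAT}(K)$ spaces need not be locally compact; the convexity inequality is what replaces compactness.

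\textbf{Step 1 (finiteness, continuity, coercivity of the Fréchet functional).} Write $F(x) = \int_{\mathcal{M}} d^2(y,x)\,d\nu(y)$ and let $S = \operatorname{supp}\nu$, which by hypothesis is compact, hence contained in some closed ball $\bar B(x_0, R)$. For any $x$, the triangle inequality gives $d(y,x) \le R + d(x_0,x)$ for $\nu$-a.e.\ $y$, so $F(x) \le (R + d(x_0,x))^2 < \infty$; in particular $F^{\ast} := \inf_{\mathcal{M}} F < \infty$. Continuity of $F$ (hence its lower semicontinuity in the sense of Definition~\ref{def:lower_semicontinuity}) follows from $|d^2(y,x) - d^2(y,x')| \le d(x,x')\,(d(y,x)+d(y,x'))$ together with dominated convergence over $S$. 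For coercivity, if $d(x_0,x) = \rho$ then $d(y,x) \ge \rho - R$ for every $y \in S$, so $F(x) \ge (\rho - R)_{+}^{2}$, which tends to $+\infty$ as $\rho \to \infty$. Consequently any minimizing sequence $\{x_n\}$ with $F(x_n) \to F^{\ast}$ eventually satisfies $d(x_0,x_n) \le R + \sqrt{F(x_0)+1}$, i.e.\ it lies in a fixed closed ball.

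\textbf{Step 2 (the minimizing sequence is Cauchy).} This is the crux, and it is where the $\mathrm{CAT}(K)$ hypothesis is used. For $K \le 0$ one invokes the strengthened (semi-parallelogram) form of Lemma~\ref{lem:convexity_of_squared_distance_function}: letting $m_{nk}$ denote the midpoint of a geodesic joining $x_n$ and $x_k$,
\begin{equation*}
    d^2(y, m_{nk}) \le \tfrac12 d^2(y, x_n) + \tfrac12 d^2(y, x_k) - \tfrac14 d^2(x_n, x_k), \qquad y \in \mathcal{M}.
\end{equation*}
Integrating against $\nu$ gives $F(m_{nk}) \le \tfrac12 F(x_n) + \tfrac12 F(x_k) - \tfrac14 d^2(x_n,x_k)$, and since $F(m_{nk}) \ge F^{\ast}$ we get $\tfrac14 d^2(x_n,x_k) \le \tfrac12 F(x_n) + \tfrac12 F(x_k) - F^{\ast} \to 0$ as $n,k \to \infty$; hence $\{x_n\}$ is Cauchy. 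For $K > 0$ the same argument applies after restricting attention (using Step 1, together with an a priori bound tying $S$ to a ball of radius $< \pi/(2\sqrt{K}) = D_K/2$) to a region on which geodesics are unique and the squared distance obeys a quantitative $\lambda$-strong convexity estimate in the sense of Definition~\ref{def:strong_geodesic_convexity}, with $\lambda = \lambda(K,\operatorname{diam})>0$; the constant $\tfrac14$ above is then replaced by this $\lambda$.

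\textbf{Step 3 (passage to the limit) and the main obstacle.} By completeness of $(\mathcal{M},d)$, the Cauchy sequence $\{x_n\}$ converges to some $m \in \mathcal{M}$, and lower semicontinuity of $F$ yields $F(m) \le \liminf_n F(x_n) = F^{\ast}$, so $F(m) = F^{\ast}$ and $m$ is a minimizer. The main obstacle is precisely Step 2: because complete $\mathrm{CAT}(K)$ spaces can fail to be proper, coercivity alone does not deliver a convergent subsequence, and one genuinely needs the curvature comparison inequality to force the minimizing sequence to concentrate. In the $K > 0$ regime the delicate point is ensuring the relevant convexity constant remains strictly positive, which is the reason a restriction involving $D_K$ is required — consistent with the remark preceding the statement that positive curvature may necessitate diameter constraints.
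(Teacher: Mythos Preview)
Your approach differs substantively from the paper's. The paper establishes lower semicontinuity of $F$, bounds the minimizing sequence via coercivity (essentially your Step~1), and then asserts that because $\mathcal{M}$ is complete and the sequence is bounded one may ``utilize the Bolzano--Weierstrass theorem in $\mathrm{CAT}(K)$ spaces'' to extract a convergent subsequence, after which lower semicontinuity finishes. In other words, the paper relies on a compactness-style subsequence extraction rather than on your Cauchy-from-convexity argument.

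Your route is the more robust one, at least for $K\le 0$: as you correctly point out, complete $\mathrm{CAT}(K)$ spaces need not be proper, so boundedness alone does not furnish a convergent subsequence, and the paper's appeal to Bolzano--Weierstrass is at best a tacit properness assumption. Your integrated semi-parallelogram inequality converts the minimizing sequence directly into a Cauchy sequence, so completeness alone suffices --- no local compactness is needed. The trade-off is that your argument genuinely consumes the curvature hypothesis (through the $\mathrm{CAT}(0)$ midpoint inequality), whereas the paper's extraction argument, if one grants properness, would go through in any proper metric space and does not use curvature at all. One caveat on your $K>0$ branch: you impose that the relevant data sit inside a ball of radius $<\pi/(2\sqrt{K})$ in order to recover a positive strong-convexity modulus, but the lemma as stated carries no such diameter restriction. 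Strictly speaking you are therefore proving a slightly narrower statement in that regime --- though this is precisely the restriction under which the paper itself subsequently operates (cf.\ Proposition~\ref{prp:compactness_criterion_for_uniqueness_positive_curvature} and Theorem~\ref{thm:existence_uniqueness_conditional_frecet}).
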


\begin{lemma}
    \label{lem:uniqueness_frechet_mean_in_strictly_convex_cat_k}
    Let $(\mathcal{M}, d)$ be a $\mathrm{CAT}(K)$ space with $K \leq 0$ that is strictly geodesically convex, meaning that the squared distance function $f(x) = d^2(p, x)$ is strictly geodesically convex for any fixed point $p \in \mathcal{M}$.
    Then, for any probability measure $\nu$ on $\mathcal{M}$ with compact support, the Fréchet mean $m$ is unique.
\end{lemma}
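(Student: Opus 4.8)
The plan is to combine the existence result from Lemma~\ref{lem:existence_minimizer_in_complete_cat_k} with a strict-convexity argument to rule out two distinct minimizers. First I would fix the notation: write $F(x) = \int_{\mathcal{M}} d^2(y,x)\, d\nu(y)$ for the Fréchet functional, and suppose for contradiction that $m_1 \neq m_2$ are both minimizers, so $F(m_1) = F(m_2) = \inf_{x} F(x) =: F^\star$. Since $(\mathcal{M},d)$ is a $\mathrm{CAT}(K)$ space with $K \le 0$, it is in particular a geodesic space (indeed uniquely geodesic when $K \le 0$), so there is a geodesic $\gamma\colon[0,1]\to\mathcal{M}$ with $\gamma(0)=m_1$, $\gamma(1)=m_2$; let $m_t = \gamma(1/2)$ be the midpoint.

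The core step is to show $F(m_{1/2}) < F^\star$, contradicting minimality. For each fixed $y$ in the support of $\nu$, the function $x \mapsto d^2(y,x)$ is strictly geodesically convex by hypothesis, so along $\gamma$ we get $d^2(y, m_{1/2}) < \tfrac12 d^2(y,m_1) + \tfrac12 d^2(y,m_2)$ whenever $d^2(y,m_1)$ and $d^2(y,m_2)$ are not both achieving equality in the convexity inequality at the midpoint — which, by strictness, holds as a strict inequality as soon as $m_1\neq m_2$. Integrating against $\nu$ yields
\begin{equation*}
    F(m_{1/2}) = \int_{\mathcal{M}} d^2(y, m_{1/2})\, d\nu(y) \le \tfrac12 F(m_1) + \tfrac12 F(m_2) = F^\star,
\end{equation*}
and I would then argue the inequality is in fact strict: the integrand satisfies a strict pointwise inequality on all of $\mathrm{supp}(\nu)$, and since $\nu$ is a probability measure this strictness is preserved under integration (a nonnegative, not-identically-zero gap integrates to something positive). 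Hence $F(m_{1/2}) < F^\star$, contradicting that $F^\star$ is the infimum. Existence of at least one minimizer is guaranteed by Lemma~\ref{lem:existence_minimizer_in_complete_cat_k} once we note that a compactly supported $\nu$ on a $\mathrm{CAT}(K)$ space — which is complete under the standing assumptions, or can be replaced by the completion — meets its hypotheses; combining existence with the uniqueness just established finishes the proof.

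The main obstacle I anticipate is the passage from a pointwise strict inequality to a strict integral inequality: one must be careful that "strict geodesic convexity of $d^2(p,\cdot)$" is used in the correct quantitative form. The cleanest route is to invoke a uniform strict-convexity estimate — of the type $d^2(y,m_{1/2}) \le \tfrac12 d^2(y,m_1) + \tfrac12 d^2(y,m_2) - c\, d^2(m_1,m_2)$ for a constant $c>0$ depending only on $K$ and $\mathrm{diam}(\mathcal{M})$, which is exactly the flavor of the $\lambda$-strong convexity in Definition~\ref{def:strong_geodesic_convexity} and is available in $\mathrm{CAT}(0)$ (and more generally $\mathrm{CAT}(K)$, $K\le 0$) geometry. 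With such an estimate the integral inequality becomes $F(m_{1/2}) \le F^\star - c\, d^2(m_1,m_2) < F^\star$ immediately, with no measure-theoretic subtlety. A secondary point worth stating explicitly is that $K \le 0$ guarantees geodesics between $m_1$ and $m_2$ exist (and are unique), so the midpoint $m_{1/2}$ is well defined; this is where the curvature sign is essential and why the analogous statement for $K>0$ needs the diameter restriction mentioned in the surrounding text.
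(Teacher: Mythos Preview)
Your proposal is correct and follows essentially the same argument as the paper: assume two distinct minimizers, take the connecting geodesic, apply strict geodesic convexity of $d^2(y,\cdot)$ pointwise, integrate to obtain a value of $F$ strictly below the infimum, and combine with Lemma~\ref{lem:existence_minimizer_in_complete_cat_k} for existence. Your discussion of the pointwise-to-integral strictness issue (and the clean fix via a uniform $-c\,d^2(m_1,m_2)$ term) is in fact more careful than the paper's own proof, which simply integrates the strict inequality without further comment.
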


Based on Lemma~\ref{lem:convexity_of_squared_distance_function}, which ensures geodesic convexity of the squared distance function in non-positively curved $\mathrm{CAT}(K)$ spaces, and Lemma~\ref{lem:existence_minimizer_in_complete_cat_k}, which guarantees the existence of a Fréchet mean under compact support, one can establish the stability of the Fréchet mean under measure perturbations.
Furthermore, Lemma~\ref{lem:uniqueness_frechet_mean_in_strictly_convex_cat_k} ensures uniqueness under strict geodesic convexity, thereby enabling Proposition~\ref{prp:stability_non_positive_curvature} to assert the convergence of Fréchet means in non-positively curved spaces.
\begin{proposition}
    \label{prp:stability_non_positive_curvature}
    Let $(\mathcal{M}, d)$ be a $\mathrm{CAT}(K)$ space with $K \leq 0$.
    Suppose $\{\nu_n\}$ is a sequence of probability measures on $\mathcal{M}$  that converges weakly to a probability measure $\nu$.
    Assume that for each $n$, the measure $\nu_n$ has a unique Fréchet mean $m_n$, and $\nu$ also has a unique Fréchet mean $m$.
    Then, the sequence of Fréchet means $\{m_n\}$ converges to $m \in \mathcal{M}$.
\end{proposition}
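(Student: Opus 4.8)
The plan is to show that the sequence $\{m_n\}$ is bounded, extract a convergent subsequence, and identify its limit as $m$ via a lower semicontinuity argument; uniqueness of $m$ then upgrades subsequential convergence to full convergence. First I would establish boundedness of $\{m_n\}$: since $\nu_n \Rightarrow \nu$ and $\nu$ has compact support, the masses of $\{\nu_n\}$ cannot escape to infinity, so there is a bounded closed set $B$ and an index $N$ such that $\nu_n(B) \geq 3/4$ for all $n \geq N$; a standard estimate on the Fréchet functional (using that $\int d^2(y,x)\,d\nu_n(y)$ evaluated at a fixed reference point $x_0$ is uniformly bounded, again by weak convergence of the second moments on compact sets) forces each minimizer $m_n$ to lie in a common bounded set. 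By completeness of $\mathcal{M}$ together with the fact that closed bounded subsets of a complete $\mathrm{CAT}(K)$ space with $K \le 0$ are compact (or, if one does not want to invoke local compactness, by working with the weak/Gromov--Hausdorff-type notion already implicitly used for existence in Lemma~\ref{lem:existence_minimizer_in_complete_cat_k}), pass to a subsequence $m_{n_k} \to m_\infty$.

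Next I would show $m_\infty$ is a Fréchet mean of $\nu$. Denote $F_n(x) = \int_\mathcal{M} d^2(y,x)\,d\nu_n(y)$ and $F(x) = \int_\mathcal{M} d^2(y,x)\,d\nu(y)$. The key is the two-sided comparison: for the lower bound, $F(m_\infty) \leq \liminf_k F_{n_k}(m_{n_k})$, which follows by combining the lower semicontinuity of $x \mapsto d^2(y,x)$ with a Portmanteau-type argument (for each fixed $k$, $\int d^2(y, m_{n_k})\,d\nu_{n_k}(y) \ge \int \min\{d^2(y,m_{n_k}), R\}\,d\nu_{n_k}(y)$ for any truncation level $R$, then let $k\to\infty$ using $m_{n_k}\to m_\infty$ and $\nu_{n_k}\Rightarrow\nu$, then $R\to\infty$ by monotone convergence); for the upper bound, for any competitor $x \in \mathcal{M}$ we have $F_{n_k}(m_{n_k}) \le F_{n_k}(x) \to F(x)$, using that $y \mapsto d^2(y,x)$ is continuous and, on the compact support of $\nu$ plus a neighborhood carrying almost all the mass of $\nu_{n_k}$, bounded — so that weak convergence yields $F_{n_k}(x)\to F(x)$. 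Chaining these gives $F(m_\infty) \le F(x)$ for all $x$, i.e.\ $m_\infty$ minimizes $F$.

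By the assumed uniqueness of the Fréchet mean of $\nu$, $m_\infty = m$. Finally, to conclude $m_n \to m$ and not merely along a subsequence, I would run the standard subsequence argument: every subsequence of $\{m_n\}$ has a further subsequence converging (by the boundedness/compactness above) to a Fréchet mean of $\nu$, which must be $m$; hence the whole sequence converges to $m$.

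The main obstacle I anticipate is the uniform integrability / tightness bookkeeping needed to pass from $\nu_n \Rightarrow \nu$ to convergence of the (unbounded) integrals $\int d^2(y,x)\,d\nu_n(y)$ and to the uniform boundedness of the minimizers $m_n$: weak convergence alone does not control second moments, so the compact-support hypothesis on $\nu$ must be leveraged carefully (via truncation and Portmanteau) to rule out mass escaping to infinity in $\nu_n$. Once that tightness is in hand, the convexity from Lemma~\ref{lem:convexity_of_squared_distance_function}, existence from Lemma~\ref{lem:existence_minimizer_in_complete_cat_k}, and uniqueness from Lemma~\ref{lem:uniqueness_frechet_mean_in_strictly_convex_cat_k} make the remaining steps essentially routine.
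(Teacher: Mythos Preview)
Your proposal is correct and follows essentially the same route as the paper's proof: establish relative compactness of $\{m_n\}$, extract a convergent subsequence, identify its limit as a minimizer of $F_\nu$ using weak convergence of the functionals, invoke uniqueness to get $m_\infty = m$, and then upgrade to full-sequence convergence via the standard subsequence principle (the paper phrases this last step as a proof by contradiction, which is logically equivalent). If anything, you are more careful than the paper about the uniform-integrability bookkeeping needed to pass from $\nu_n \Rightarrow \nu$ to convergence of the unbounded integrals $\int d^2(y,x)\,d\nu_n(y)$; the paper simply asserts uniform convergence of $F_{\nu_n}$ to $F_\nu$ by appealing to compact support and equicontinuity, whereas your truncation/Portmanteau argument makes this step explicit.
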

Proposition~\ref{prp:stability_non_positive_curvature} claims that the $\mathrm{CAT}(K)$ condition with $K \leq 0$ ensures that the space is non-positively curved, which imbues the space with strict convexity properties crucial for the uniqueness and stability of minimizers.
This geometric structure prevents the existence of multiple local minima, thereby facilitating the continuity of minimizers under perturbations of the measure.
Here, the stability of the Fréchet mean under measure perturbations is foundational for Fréchet regression.
It ensures that as predictors vary and induce changes in the conditional distributions of responses, the conditional Fréchet means (regression estimates) behave predictably and converge appropriately as sample size increases.

\begin{proposition}
    \label{prp:compactness_criterion_for_uniqueness_positive_curvature}
    Let $(\mathcal{M}, d)$ be a $\mathrm{CAT}(K)$ space with positive curvature bound $K > 0$.
    If the diameter of the support of the probability measure $\nu$, denoted by $\mathrm{diam}(\mathrm{supp}(\nu))$, satisfies $\mathrm{diam}(\mathrm{supp}(\nu)) < \frac{\pi}{2\sqrt{K}}$,
    then the Fréchet mean $m$ of $\nu$ is unique.
\end{proposition}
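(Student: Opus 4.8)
The plan is to show that under the diameter constraint $\mathrm{diam}(\mathrm{supp}(\nu)) < \pi/(2\sqrt K)$, the Fréchet functional $F(x) = \int_\mathcal{M} d^2(y,x)\,d\nu(y)$ is strictly geodesically convex when restricted to a suitable convex region containing the support, and that the minimizer must in fact lie in that region; strict convexity then forces uniqueness exactly as in Lemma~\ref{lem:uniqueness_frechet_mean_in_strictly_convex_cat_k}. The key analytic input is the standard comparison fact that in a $\mathrm{CAT}(K)$ space with $K>0$, the function $x \mapsto d^2(p,x)$ is strongly geodesically convex on any ball of radius $r < \pi/(2\sqrt K)$ about $p$; more precisely, along a unit-speed geodesic $\gamma$ staying inside such a ball one has a second-difference inequality of the form $d^2(p,\gamma(t)) \ \text{is} \ \lambda_K$-strongly convex with $\lambda_K = \lambda_K(r) > 0$, obtained by comparison with the model sphere $\mathbb{M}^2_K$ where an explicit computation with the spherical law of cosines gives the constant (essentially $\lambda_K \sim \sqrt K\, r \cot(\sqrt K\, r)$, which is positive precisely when $\sqrt K\, r < \pi/2$).

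First I would fix $\rho := \mathrm{diam}(\mathrm{supp}(\nu))$ and choose $r$ with $\rho \le r < \pi/(2\sqrt K)$, and localize: I claim any minimizer $m$ of $F$ lies within distance $\rho$ of $\mathrm{supp}(\nu)$, since moving toward the support strictly decreases every term $d^2(y,\cdot)$ for $y$ in the support (a point outside the closed $\rho$-neighborhood cannot be optimal because projecting onto a ball containing the support reduces all distances). Hence it suffices to analyze $F$ on a closed ball $\bar B$ of radius $< \pi/(2\sqrt K)$ that is geodesically convex (such balls are convex in $\mathrm{CAT}(K)$ for radius $< \pi/(2\sqrt K)$ — this uses $K>0$ and is where the constant $\pi/(2\sqrt K)$, rather than the larger $\pi/\sqrt K$, enters) and contains both the support and all candidate minimizers. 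Second, for a unit-speed geodesic $\gamma:[0,\ell]\to \bar B$ and each fixed $y \in \mathrm{supp}(\nu)$, I would invoke the comparison inequality to get $d^2(y,\gamma(t)) \le (1-s) d^2(y,\gamma(0)) + s\, d^2(y,\gamma(\ell)) - \lambda_K\, s(1-s)\,\ell^2$ with $t = s\ell$ and $\lambda_K > 0$ depending only on $K$ and $r$; integrating against $\nu$ yields the same strong-convexity inequality for $F$ along $\gamma$. Third, strong (hence strict) convexity of $F$ on the convex region containing all minimizers gives uniqueness: if $m_1 \ne m_2$ both minimized $F$, the midpoint of the geodesic between them would have strictly smaller value, a contradiction — this is exactly the argument of Lemma~\ref{lem:uniqueness_frechet_mean_in_strictly_convex_cat_k}, now applicable because we have verified strict convexity on the relevant set.

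The main obstacle I expect is the localization step together with verifying that a ball of radius $< \pi/(2\sqrt K)$ is geodesically convex and that the minimizer cannot escape it — intuitively clear but needing care in a general (possibly non-manifold) $\mathrm{CAT}(K)$ space, where one must argue via projections onto convex balls and the fact that such projections are well-defined and $1$-Lipschitz when the radius is below $\pi/(2\sqrt K)$. Existence of a minimizer is not an issue here: Lemma~\ref{lem:existence_minimizer_in_complete_cat_k} is stated for complete $\mathrm{CAT}(K)$ spaces, but if completeness is not assumed one either adds it or restricts attention to the compact support and the convex ball, on which $F$ is continuous and coercive. A secondary, purely computational point is pinning down the explicit $\lambda_K$; I would relegate the spherical-trigonometry computation to the appendix and only record that $\lambda_K > 0$ under the stated diameter bound, since that sign is all that uniqueness requires.
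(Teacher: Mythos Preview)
Your proposal is correct and follows essentially the same route as the paper's own proof: establish that balls of radius less than $\pi/(2\sqrt{K})$ are geodesically convex, use comparison with the model sphere to obtain strict (indeed strong) geodesic convexity of $x\mapsto d^2(y,x)$ on such balls, integrate to get strict convexity of $F$, and conclude uniqueness by the midpoint argument of Lemma~\ref{lem:uniqueness_frechet_mean_in_strictly_convex_cat_k}. The paper's proof is in fact less detailed than yours: it argues convexity of the ball directly in the model sphere and then simply asserts strict convexity of $F$ ``within the convex neighborhood defined by the diameter constraint,'' without carrying out the localization step or recording any quantitative convexity modulus. Your explicit treatment of localization via projection onto convex balls, and your identification of the constant $\lambda_K\sim\sqrt{K}\,r\cot(\sqrt{K}\,r)$, are genuine additions in rigor rather than a different strategy.
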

In Proposition~\ref{prp:compactness_criterion_for_uniqueness_positive_curvature}, the diameter constraint ensures that all points in the support of $\nu$ lie within a geodesic ball of radius $R = \pi / 2\sqrt{K}$.
In $\mathrm{CAT}(K)$ spaces with $K > 0$, such balls are geodesically convex, meaning any geodesic between two points within the ball lies entirely inside the ball.
This local convexity is crucial for preserving strict convexity properties of the Fréchet functional.
Here, the strict convexity implies that the Fréchet functional cannot have multiple minimizers within the convex neighborhood defined by the diameter constraint.
If two distinct minimizers existed, the functional would attain a strictly lower value at intermediate points along the geodesic connecting them, violating their minimality.
One can see that exceeding this bound could allow the support to span regions where the curvature induces multiple local minima of the Fréchet functional.

In addition, applying Lemmas~\ref{lem:existence_minimizer_in_complete_cat_k} and~\ref{lem:uniqueness_frechet_mean_in_strictly_convex_cat_k}, the following theorem can be obtained.
\begin{theorem}
    \label{thm:existence_uniqueness_conditional_frecet}
    Let $(\mathcal{M}, d)$ be a complete $\mathrm{CAT}(K)$ space and consider a conditional distribution $\nu_x$ of $Y$ given $X = x$.
    If for each $x$, the support of $\nu_x$ satisfies
    \begin{equation*}
        \mathrm{diam}(\mathrm{supp}(\nu_x)) < D_K = \begin{cases}
            +\infty & \text{if $K \leq 0$}, \\
            \frac{\pi}{\sqrt{K}} & \text{if $K > 0$},
        \end{cases}
    \end{equation*}
    then then the conditional Fréchet mean in Eq.~\eqref{eq:conditional_frecet_mean} exists and is unique for each $x$.
\end{theorem}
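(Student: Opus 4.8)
The plan is to reduce Theorem~\ref{thm:existence_uniqueness_conditional_frecet} to the already-established results about the (unconditional) Fréchet mean by applying them fiberwise to each conditional distribution $\nu_x$. Fix an arbitrary $x$ in the support of $X$; the object of interest is the minimizer of $m \mapsto \int_{\mathcal{M}} d^2(y,m)\, d\nu_x(y)$, which is exactly the Fréchet functional associated with the probability measure $\nu_x$. So the entire statement amounts to verifying that $\nu_x$ satisfies the hypotheses of Lemma~\ref{lem:existence_minimizer_in_complete_cat_k} (for existence) and of Proposition~\ref{prp:compactness_criterion_for_uniqueness_positive_curvature} or Lemma~\ref{lem:uniqueness_frechet_mean_in_strictly_convex_cat_k} (for uniqueness), depending on the sign of $K$.

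First I would handle existence. Lemma~\ref{lem:existence_minimizer_in_complete_cat_k} requires $(\mathcal{M},d)$ complete—assumed—and $\nu_x$ to have compact support. The diameter hypothesis $\mathrm{diam}(\mathrm{supp}(\nu_x)) < D_K$ gives boundedness of the support; one should note that we may take $\mathrm{supp}(\nu_x)$ to be closed (supports of Borel measures are closed) and bounded, and in a complete $\mathrm{CAT}(K)$ space closed bounded sets that are additionally totally bounded are compact. If the paper's convention is that $\nu_x$ has compact support (which is implicit in the statement of Lemma~\ref{lem:existence_minimizer_in_complete_cat_k} and typical in the Fréchet-regression literature), then this is immediate; otherwise I would add the mild assumption that the conditional law has compact (or at least closed bounded, hence via local compactness of $\mathrm{CAT}(K)$ spaces, compact) support. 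Either way, Lemma~\ref{lem:existence_minimizer_in_complete_cat_k} yields at least one conditional Fréchet mean $\mu(x)$.

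Next, uniqueness, split by curvature sign. If $K \le 0$, then $D_K = +\infty$ so the diameter hypothesis is vacuous, and I invoke Lemma~\ref{lem:convexity_of_squared_distance_function} together with Lemma~\ref{lem:uniqueness_frechet_mean_in_strictly_convex_cat_k}: in a $\mathrm{CAT}(K)$ space with $K<0$ the squared distance function is in fact strictly (indeed strongly) geodesically convex, and in the flat case $K=0$ one typically assumes strict convexity or argues that $\mathrm{CAT}(0)$ geodesics are unique so the Fréchet functional is strictly convex along geodesics whenever $\nu_x$ is not a point mass; the integrated functional is then strictly geodesically convex and has a unique minimizer. If $K>0$, the hypothesis gives $\mathrm{diam}(\mathrm{supp}(\nu_x)) < \pi/\sqrt{K}$, but Proposition~\ref{prp:compactness_criterion_for_uniqueness_positive_curvature} only delivers uniqueness under the sharper bound $\pi/(2\sqrt{K})$. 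The honest gap here is that $D_K = \pi/\sqrt{K}$ in the theorem statement does not match the $\pi/(2\sqrt{K})$ needed for the cited uniqueness proposition; the cleanest fix is to state and prove the theorem with $D_K$ replaced by $\pi/(2\sqrt{K})$ in the $K>0$ case, or else to supply the stronger comparison-geometry argument (e.g., the Sturm/Yokota-type convexity estimate for the squared distance in $\mathrm{CAT}(K)$ balls of radius $<\pi/(2\sqrt K)$) that justifies the $\pi/(2\sqrt K)$ threshold; I would go with aligning the constant to $\pi/(2\sqrt K)$.

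The main obstacle is precisely this constant-matching in the positive-curvature regime: ensuring the diameter bound actually forces the support into a geodesically convex ball on which the Fréchet functional is strictly geodesically convex, so that two distinct minimizers would be contradicted by strict convexity along the connecting geodesic (as sketched in the discussion after Proposition~\ref{prp:compactness_criterion_for_uniqueness_positive_curvature}). Once the bound is taken to be $\pi/(2\sqrt K)$, the argument is a direct citation of Proposition~\ref{prp:compactness_criterion_for_uniqueness_positive_curvature} applied to $\nu_x$. The remaining steps—existence via Lemma~\ref{lem:existence_minimizer_in_complete_cat_k} and the $K \le 0$ uniqueness via Lemmas~\ref{lem:convexity_of_squared_distance_function} and~\ref{lem:uniqueness_frechet_mean_in_strictly_convex_cat_k}—are routine, and since $x$ was arbitrary the conclusion holds for every $x$, giving existence and uniqueness of the conditional Fréchet mean pointwise in $x$.
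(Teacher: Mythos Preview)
Your approach is exactly the paper's: the text preceding the theorem simply says ``applying Lemmas~\ref{lem:existence_minimizer_in_complete_cat_k} and~\ref{lem:uniqueness_frechet_mean_in_strictly_convex_cat_k}, the following theorem can be obtained,'' with no further proof given in the appendix. So your fiberwise reduction to the unconditional results is precisely what the authors intend.

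You are in fact more careful than the paper on two points. First, the paper's one-line justification cites only Lemmas~\ref{lem:existence_minimizer_in_complete_cat_k} and~\ref{lem:uniqueness_frechet_mean_in_strictly_convex_cat_k}, but Lemma~\ref{lem:uniqueness_frechet_mean_in_strictly_convex_cat_k} is stated only for $K\le 0$; you correctly supplement with Proposition~\ref{prp:compactness_criterion_for_uniqueness_positive_curvature} for $K>0$. Second, the constant mismatch you flag---the theorem states $D_K=\pi/\sqrt{K}$ while Proposition~\ref{prp:compactness_criterion_for_uniqueness_positive_curvature} requires $\pi/(2\sqrt{K})$---is a genuine inconsistency in the paper, not a defect in your argument. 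Your proposed resolution (tighten the constant to $\pi/(2\sqrt{K})$ so that the cited proposition applies directly) is the right fix.
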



\subsection{Convergence Rates and Concentration}
\label{sec:theory:convergence_rates_and_concentration}
Let $\hat{\mu}^*_n$ denote a nonparametric Fréchet regression estimator (e.g., Nadaraya–Watson–type kernel smoothing~\citep{nadaraya1964estimating,watson1964smooth,bierens1988nadaraya} on the predictor space).
Then, the following statements for the concentration results, the pointwise consistency, and rates of convergence can be obtained.
The important point is that one has to rely on exponential concentration inequalities valid in $\mathrm{CAT}(K)$ spaces (e.g., specific versions of concentration of measure or deviation bounds for Fréchet means).

\begin{theorem}[Concentration for the sample Fréchet mean]
    \label{thm:concentration_for_sample_frechet_mean}
    Let $(\mathcal{M}, d)$ be a complete $\mathrm{CAT}(K)$ space of diameter at most $D$.
    Suppose that $Y_1, Y_2,\dots, Y_n$ are independent and identically distributed random points in $\mathcal{M}$, and let $\mu$ and $\hat{\mu}_n$ be the population and sample Fréchet mean. 
    \begin{align*}
        \mu &\coloneqq \argmin_{z \in \mathcal{M}}\mathbb{E}[d^2(Y, z)], \\
        \hat{\mu} &\coloneqq \argmin_{z \in \mathcal{M}}\frac{1}{n}\sum^n_{i=1}d^2(Y_i, z).
    \end{align*}
    Assume further that each $d^2(Y_i, z)$ is essentially bounded by $D^2$, or more generally that $d^2(Y_i, z)$ has sub-Gaussian tails uniformly in $z$.
    Then there exists $\delta > 0$ such that for every $\epsilon > 0$,
    \begin{equation}
        \mathbb{P}\left[d(\hat{\mu}, \mu) > \epsilon \right] \leq 2\left(\frac{\alpha(K, D) D}{\delta}\right)^m e^{-\frac{n(\alpha(K, D)\epsilon^2)^2}{8D^2}},
    \end{equation}
    where $m$ is the dimension of the manifold, and $\alpha(K, D)$ is the strong convexity constant.
\end{theorem}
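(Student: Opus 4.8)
The plan is to combine a quadratic-growth (variance) inequality for the population Fréchet functional with a uniform concentration bound for its empirical counterpart, the latter obtained by a covering argument.

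Write $F(z) = \mathbb{E}[d^2(Y,z)]$ and $F_n(z) = \frac1n\sum_{i=1}^n d^2(Y_i,z)$, so $\mu = \argmin F$ and $\hat\mu = \argmin F_n$. Under the $\mathrm{CAT}(K)$ hypothesis together with the diameter bound $D$ --- which, when $K>0$, confines the space to a geodesically convex ball of the kind exploited in Proposition~\ref{prp:compactness_criterion_for_uniqueness_positive_curvature} --- each squared-distance function $d^2(y,\cdot)$ is strongly geodesically convex in the sense of Definition~\ref{def:strong_geodesic_convexity}, and taking expectations preserves this, so $F$ is $\alpha(K,D)$-strongly geodesically convex around $\mu$. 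Applying the inequality along the geodesic from $\mu$ to an arbitrary $z$ yields the variance inequality
\[
F(z) - F(\mu) \;\geq\; \alpha(K,D)\, d^2(z,\mu).
\]
This is the step through which curvature enters the bound: for $K\le 0$ one may take $\alpha=1$ by the classical $\mathrm{CAT}(0)$ estimate behind Lemma~\ref{lem:convexity_of_squared_distance_function}, while for $K>0$ the admissible constant degenerates as $D$ approaches $D_K$.

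Next I would reduce the problem to a uniform deviation bound. Since $F_n(\hat\mu)\le F_n(\mu)$,
\[
F(\hat\mu) - F(\mu) \;=\; \bigl[F(\hat\mu)-F_n(\hat\mu)\bigr] + \bigl[F_n(\hat\mu)-F_n(\mu)\bigr] + \bigl[F_n(\mu)-F(\mu)\bigr] \;\le\; 2\sup_{z\in\mathcal{M}}\bigl|F_n(z)-F(z)\bigr|,
\]
so together with the variance inequality the event $\{d(\hat\mu,\mu)>\epsilon\}$ is contained in $\{\sup_z|F_n(z)-F(z)|\ge \tfrac12\alpha(K,D)\epsilon^2\}$, and it suffices to bound the probability of the latter. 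For that, observe that $z\mapsto d^2(y,z)$ is $2D$-Lipschitz on a space of diameter $D$, hence $F_n-F$ is $4D$-Lipschitz; take a minimal $\delta$-net $\{z_1,\dots,z_N\}$ of $\mathcal{M}$, whose cardinality obeys $N\le (cD/\delta)^m$ with $m$ the dimension (the geometric constant $c$ absorbed into $\alpha(K,D)$ to match the stated prefactor). At each $z_j$ the bounded (or sub-Gaussian) i.i.d.\ variables $d^2(Y_i,z_j)$ obey Hoeffding's inequality; a union bound over the net, the $4D\delta$ Lipschitz slack, and fixing $\delta$ as in the statement then deliver a bound of the announced shape $2\bigl(\alpha(K,D)D/\delta\bigr)^m \exp\!\bigl(-n(\alpha(K,D)\epsilon^2)^2/(8D^2)\bigr)$.

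\textbf{Main obstacle.} The delicate step is the first one: producing the strong-convexity modulus $\alpha(K,D)$ with an explicit positive value and a clean region of validity. For $K\le 0$ this is classical, but for $K>0$ one must verify that the diameter constraint genuinely keeps geodesics inside a convex ball on which the comparison-triangle inequality yields a quantitative convexity modulus, and control its collapse $\alpha(K,D)\to 0$ as $D\uparrow \pi/(2\sqrt K)$. A secondary technicality is the sub-Gaussian (unbounded) case, handled by truncation: discard the low-probability event that some $d^2(Y_i,z)$ is atypically large (bounded via the sub-Gaussian tail) and run the bounded-case argument on the complement.
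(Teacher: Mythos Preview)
Your proposal is correct and follows essentially the same route as the paper: establish the strong-convexity (variance) inequality $F(z)-F(\mu)\ge\alpha(K,D)d^2(z,\mu)$, use the minimizer property of $\hat\mu$ to reduce $\{d(\hat\mu,\mu)>\epsilon\}$ to a uniform deviation event $\{\sup_z|F_n(z)-F(z)|\ge\tfrac12\alpha(K,D)\epsilon^2\}$, and then control the supremum by Hoeffding plus a $\delta$-net covering. The paper takes $\alpha(K,D)=\tfrac12$ rather than $1$ when $K\le 0$, and your explicit Lipschitz/covering discussion is slightly more detailed than the paper's, but the architecture is identical.
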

\begin{proof}[Sketch of Proof]
    i) The key is that in a $\mathrm{CAT}(K)$ space, with small diameter (or global non-positive curvature), the map $z \mapsto d(Y, z)$ is geodesically convex (or strictly convex in the sense of comparison).
    ii) One then applies concentration-of-measure arguments akin to those used for vector-valued means, taking advantage of the fact that variance-like functionals have a unique minimizer and that small fluctuations in the empirical mean lead to exponential tail bounds.
\end{proof}

In addition to the concentration for the sample Fréchet mean in the standard sense, the following proposition gives the concentration in $L_p$ sense.
\begin{proposition}
    \label{prp:l_p_concentration}
    Under the hypotheses of Theorem~\ref{thm:concentration_for_sample_frechet_mean}, there exist explicit constants $C_p(K, D)$ such that for any integer $n \geq 1$ and $p \geq 1$,
    \begin{align}
        \mathbb{E}[d^p(\hat{\mu}_n, \mu)] \leq C_p(K, D)(n^{-p/2}).
    \end{align}
    That is, $d(\hat{\mu}_n, \mu)$ converges to 0 in $L^p$ at a rate on the order of $n^{-p/2}$.
\end{proposition}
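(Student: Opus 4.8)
The plan is to obtain the $L^p$ bound by integrating the exponential tail estimate of Theorem~\ref{thm:concentration_for_sample_frechet_mean} over all deviation levels. Write $R_n \coloneqq d(\hat{\mu}_n,\mu)$; since $(\mathcal{M},d)$ has diameter at most $D$ we have $0 \le R_n \le D$ almost surely, so by the layer-cake (Fubini) identity,
\begin{equation*}
    \mathbb{E}[R_n^p] \;=\; \int_0^{D} p\, t^{p-1}\, \mathbb{P}[R_n > t]\, dt .
\end{equation*}
From Theorem~\ref{thm:concentration_for_sample_frechet_mean} (in the sub-Gaussian/bounded regime considered there) one has a bound of the form $\mathbb{P}[R_n > t] \le A(K,D)\, e^{-n\,\beta(K,D)\, t^{2}}$ with $A(K,D) \coloneqq 2\big(\alpha(K,D)D/\delta\big)^m$ and $\beta(K,D)>0$ depending only on $\alpha(K,D)$ and $D$; and of course $\mathbb{P}[R_n>t]\le 1$ always. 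The idea is to play these two bounds against each other at the natural scale $n^{-1/2}$.

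Concretely, I would split the integral at $t_n \coloneqq n^{-1/2}$. On $[0,t_n]$ bound the probability by $1$, contributing at most $\int_0^{t_n} p\,t^{p-1}\,dt = t_n^p = n^{-p/2}$. On $[t_n,D]$ use the exponential tail and substitute $s = \sqrt{n}\,t$:
\begin{equation*}
    \int_{t_n}^{D} p\, t^{p-1}\, A(K,D)\, e^{-n\beta(K,D) t^{2}}\, dt
    \;\le\; A(K,D)\, n^{-p/2} \int_{0}^{\infty} p\, s^{p-1}\, e^{-\beta(K,D) s^{2}}\, ds
    \;=\; A(K,D)\, n^{-p/2}\, \tfrac{p}{2}\, \beta(K,D)^{-p/2}\,\Gamma\!\big(\tfrac{p}{2}\big).
\end{equation*}
Adding the two pieces gives $\mathbb{E}[R_n^p] \le C_p(K,D)\, n^{-p/2}$ with the explicit constant $C_p(K,D) = 1 + A(K,D)\,\tfrac{p}{2}\,\beta(K,D)^{-p/2}\,\Gamma(p/2)$, which depends only on $p$, $K$, $D$ (and on the fixed dimension $m$ and convexity constant, both determined by $\mathcal{M}$). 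The statement for every integer $n\ge1$ is immediate since $n^{-p/2}\le1$ lets small $n$ be absorbed into the constant, and in the merely sub-Gaussian (unbounded-diameter) case the same computation applies with the upper limit $D$ replaced by $+\infty$, the Gaussian integral still converging.

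The main obstacle is not this integration but securing the right \emph{input} from the concentration theorem: one needs the $\alpha(K,D)$-strong geodesic convexity of $F(z)=\mathbb{E}[d^2(Y,z)]$ (Definition~\ref{def:strong_geodesic_convexity}), coming from the comparison-geometry estimates, to guarantee that a deviation $R_n>t$ forces an empirical-process fluctuation $\sup_{z}\lvert F_n(z)-F(z)\rvert \gtrsim \alpha(K,D)\,t^{2}$, after which Hoeffding/sub-Gaussian bounds apply uniformly in $z$ — the dimensional prefactor $(\alpha(K,D)D/\delta)^m$ entering through a $\delta$-net of the ball of radius $D$. The delicate point is that this must be the \emph{quadratic} deviation bound $e^{-cn t^2}$ on the range $t\gtrsim n^{-1/2}$: with only a quartic exponent one gets the slower rate $n^{-p/4}$, and upgrading to $n^{-p/2}$ is precisely where a localization/one-step-improvement argument is needed. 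An alternative route that makes this structure explicit is the direct estimate $\alpha(K,D)\,R_n^{2} \le F(\hat{\mu}_n)-F(\mu) \le 2\sup_z\lvert F_n(z)-F(z)\rvert$ (using optimality of $\hat{\mu}_n$ for $F_n$), whence $\mathbb{E}[R_n^p]\le (2/\alpha(K,D))^{p/2}\,\mathbb{E}\big[\sup_z\lvert F_n(z)-F(z)\rvert^{p/2}\big]$, with the right-hand side of order $n^{-p/2}$ by standard moment bounds for bounded (or sub-Gaussian) empirical processes over $\mathcal{M}$ together with localization; I would present the tail-integration argument as the cleaner one since Theorem~\ref{thm:concentration_for_sample_frechet_mean} is already available.
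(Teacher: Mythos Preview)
Your proof follows essentially the same route as the paper's: apply the layer-cake identity $\mathbb{E}[R_n^p]=\int_0^\infty p\, t^{p-1}\,\mathbb{P}[R_n>t]\,dt$, insert the exponential tail from Theorem~\ref{thm:concentration_for_sample_frechet_mean}, and change variables $s=\sqrt{n}\,t$ to extract $n^{-p/2}$ times a Gamma integral. The paper does not bother with your split at $t_n=n^{-1/2}$ --- it simply plugs the tail bound in over all of $[0,\infty)$ --- but the split is harmless and gives the same constant up to the additive $1$.

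Your final paragraph, however, puts its finger on a genuine gap that the paper itself glosses over. As displayed, Theorem~\ref{thm:concentration_for_sample_frechet_mean} carries the exponent $-\tfrac{n(\alpha(K,D)\epsilon^2)^2}{8D^2}$, i.e.\ a bound of the form $e^{-c n \epsilon^4}$, not $e^{-c n \epsilon^2}$; yet the paper's own proof of the proposition begins by silently rewriting the tail as $c_1\exp(-c_2 n\epsilon^2)$ before integrating. With the quartic exponent the same change of variables yields only $n^{-p/4}$, exactly as you warn. So your caution about needing a localization/one-step argument to upgrade to the quadratic tail is well placed, and on this point you are more careful than the paper.
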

\begin{proof}[Sketch of Proof]
    This follows from integrating the exponential tail bound in Theorem~\ref{thm:concentration_for_sample_frechet_mean}.
    The boundedness of $\mathcal{M}$ (or sub-Gaussian tails for $Y$) is used to control moments of the distance.
\end{proof}

Moreover, the following theorem gives the pointwise consistency of nonparametric Fréchet regression in a $\mathrm{CAT}(K)$ space.
The main idea parallels classical kernel‐based regression arguments in $\mathbb{R}^d$, but replaces ordinary arithmetic means by Fréchet means in the metric space $(\mathcal{M}, d)$.
\begin{assumption}[Kernel LLN condition]
    \label{asm:kernel_lln_condition}
    For any bounded (or square‐integrable) function $f\colon \mathcal{M} \to \mathbb{R}$, nonnegative weights $\{w_{n,i}(x)\}^n_{i=1}$ satisfies
    \begin{align}
        \sum^n_{i=1}w_{n,i}(x)f(Y_i) \overset{a.s.}{\underset{n\to\infty}{\to}} \mathbb{E}[f(x) \mid X = x].
    \end{align}
\end{assumption}
\begin{theorem}[Pointwise consistency of nonparametric Fréchet regression]
    \label{thm:pointwise_consistency_of_nonparametric_frechet_regression}
    Let $\{(X_i, Y_i)\}^n_{i=1}$ be i.i.d. sample with $X_i \in \mathbb{R}^d$ and $Y_i \in \mathcal{M}$, where $(\mathcal{M}, d)$ is a complete $\mathrm{CAT}(K)$ space with diameter $\mathrm{diam}(\mathcal{M}) \leq D$.
    Define the population Fréchet regression function:
    \begin{align*}
        \mu^*(x) \coloneqq \argmin_{z \in \mathcal{M}} \mathbb{E}[d^2(Y, z) \mid X = x].
    \end{align*}
    Assume that $\mu^*(x)$ is well‐defined and unique for each $x$, provided as Theorem~\ref{thm:existence_uniqueness_conditional_frecet}
    Also, let $\{w_{n,i}(x)\}^n_{i=1}$ be nonnegative weights that sum to $1$ for each fixed $x$.
    For instance, in kernel regression, one sets
    \begin{align*}
        w_{n, i}(x) = \frac{W(\|x - X_i\| / h_n)}{\sum^n_{j=1}W(\|x - X_j\| / h_n)},
    \end{align*}
    where $W(\cdot)$ is a usual kernel (with compact support or exponential decay), and $h_n \to 0$ is a bandwidth.
    Define the nonparametric Fréchet‐regression estimator at $x$ by
    \begin{align}
        \hat{\mu}^*_n(x) = \argmin_{z \in \mathcal{M}}\sum^n_{i=1} w_{n,i}(x) d^2(Y_i, z).
    \end{align}
    Then, under mild regularity conditions on the weights in Assumption~\ref{asm:kernel_lln_condition}, $\hat{\mu}^*_n(x) \overset{a.s.}{\underset{n\to\infty}{\to}} \mu^*(x)$,
    for each fixed $x \in \mathbb{R}^d$.
\end{theorem}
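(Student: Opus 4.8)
The plan is to follow the classical M-estimation / argmin-consistency template, adapted to the metric-space setting. Write $M_n(z) := \sum_{i=1}^n w_{n,i}(x) d^2(Y_i, z)$ for the empirical weighted Fréchet criterion at the fixed point $x$, and $M(z) := \mathbb{E}[d^2(Y,z) \mid X = x]$ for the population criterion, so that $\hat{\mu}_n^*(x) = \argmin_z M_n(z)$ and $\mu^*(x) = \argmin_z M(z)$. The proof has three ingredients: (1) pointwise a.s. convergence $M_n(z) \to M(z)$ for each fixed $z$, (2) an upgrade from pointwise to uniform convergence over $\mathcal{M}$, exploiting compactness (diameter $\le D$ plus completeness, hence $\mathcal{M}$ is a bounded complete metric space; together with local compactness of $\mathrm{CAT}(K)$ spaces this gives compactness of the relevant closed balls) and equicontinuity of the family $\{z \mapsto d^2(y,z)\}_y$ (each is $2D$-Lipschitz since $|d^2(y,z) - d^2(y,z')| \le 2D\, d(z,z')$ on a set of diameter $D$), and (3) a well-separation argument: since $\mu^*(x)$ is the \emph{unique} minimizer of $M$ (guaranteed by Theorem~\ref{thm:existence_uniqueness_conditional_frecet} under the stated diameter hypothesis, which holds here because $\mathrm{diam}(\mathrm{supp}(\nu_x)) \le \mathrm{diam}(\mathcal{M}) \le D$ — I would add the mild assumption $D < D_K$, or note that the $K \le 0$ case is unconditional), uniqueness plus continuity of $M$ plus compactness yields that for every $\eta > 0$ there is $\delta_\eta > 0$ with $\inf\{M(z) : d(z, \mu^*(x)) \ge \eta\} \ge M(\mu^*(x)) + \delta_\eta$.

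For step (1), the key is to write $M_n(z) = \sum_i w_{n,i}(x)\, f_z(Y_i)$ with $f_z(y) := d^2(y,z)$, which is bounded by $D^2$; then Assumption~\ref{asm:kernel_lln_condition} applied to the bounded function $f_z$ gives $M_n(z) \overset{a.s.}{\to} \mathbb{E}[f_z(Y) \mid X = x] = M(z)$. This is exactly the "kernel LLN" hypothesis and requires no further work beyond invoking it. For step (2), I would take a finite $\epsilon$-net $z_1, \dots, z_N$ of $\mathcal{M}$ (possible by total boundedness), apply step (1) at each net point so that on a single a.s.\ event $\max_{k \le N} |M_n(z_k) - M(z_k)| \to 0$, and then interpolate: for arbitrary $z$, pick the nearest net point $z_k$ and bound $|M_n(z) - M(z)| \le |M_n(z) - M_n(z_k)| + |M_n(z_k) - M(z_k)| + |M(z_k) - M(z)| \le 2 \cdot 2D\epsilon + \max_k|M_n(z_k) - M(z_k)|$, using that both $M_n$ and $M$ are $2D$-Lipschitz (the latter by dominated convergence / Jensen). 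Letting $n \to \infty$ and then $\epsilon \to 0$ yields $\sup_{z \in \mathcal{M}} |M_n(z) - M(z)| \overset{a.s.}{\to} 0$.

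For step (3), the standard argmin continuity argument closes the proof: on the a.s.\ event where uniform convergence holds, for any $\eta > 0$ and $n$ large,
\begin{equation*}
    M(\hat{\mu}_n^*(x)) \le M_n(\hat{\mu}_n^*(x)) + \tfrac{\delta_\eta}{2} \le M_n(\mu^*(x)) + \tfrac{\delta_\eta}{2} \le M(\mu^*(x)) + \delta_\eta,
\end{equation*}
where the middle inequality uses that $\hat{\mu}_n^*(x)$ minimizes $M_n$. By the well-separation property this forces $d(\hat{\mu}_n^*(x), \mu^*(x)) < \eta$ for all large $n$, i.e. $\hat{\mu}_n^*(x) \to \mu^*(x)$ a.s. (Strictly, one should note the estimator and minimizer are achieved, using Lemma~\ref{lem:existence_minimizer_in_complete_cat_k} for existence of $\hat{\mu}_n^*(x)$ since the empirical measure has finite, hence compact, support.)

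The main obstacle is step (2), the passage from pointwise to uniform convergence: Assumption~\ref{asm:kernel_lln_condition} is only stated pointwise (for each fixed bounded $f$), so one genuinely needs the compactness-plus-equicontinuity packaging to promote it, and one must be careful that the a.s.\ null sets over the countably many net points (for a sequence of shrinking nets) can be unioned into a single null set. A secondary subtlety is confirming that closed balls in a complete $\mathrm{CAT}(K)$ space of finite diameter are compact — this holds because complete $\mathrm{CAT}(K)$ spaces are locally compact when $K \le 0$ (being Hadamard-like after quotienting, or via the Hopf–Rinow theorem for the geodesic completeness), and for $K > 0$ with $\mathrm{diam} < D_K$ the same conclusion follows; I would cite this rather than prove it. Everything else — the Lipschitz bounds, the interpolation, the argmin argument — is routine given the geometric facts already assembled in Section~\ref{sec:theory}.
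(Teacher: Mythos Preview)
Your overall structure---pointwise convergence via Assumption~\ref{asm:kernel_lln_condition}, upgrade to uniform convergence by an $\epsilon$-net plus the $2D$-Lipschitz bound, then an argmin/well-separation argument---is the standard M-estimation template and is genuinely different from the paper's route. The paper does not pass through compactness at all: instead of deducing well-separation from uniqueness-plus-compactness, it uses the $\mathrm{CAT}(K)$-specific \emph{strong geodesic convexity} inequality $F_{\nu_x}(z)-F_{\nu_x}(\mu^*(x))\ge \alpha(K,D)\,d^2(z,\mu^*(x))$, which supplies $\delta_\eta=\alpha(K,D)\eta^2$ explicitly. It then bounds $\alpha(K,D)\,d^2(\hat\mu_n^*,\mu^*)$ by $|(F_{\nu_x}-F_{\nu_{n,x}})(\hat\mu_n^*)|+|(F_{\nu_x}-F_{\nu_{n,x}})(\mu^*)|$ via the same add-and-subtract manipulation you use in step~(3). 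Your approach is more portable (it would work in any compact metric space with a unique minimizer); the paper's is more intrinsic and quantitative.

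There is, however, a real gap in your step~(2): the claim that a complete $\mathrm{CAT}(K)$ space of finite diameter is totally bounded (hence admits finite $\epsilon$-nets) is false in general. Any infinite-dimensional Hilbert space is complete $\mathrm{CAT}(0)$, and its closed unit ball is a complete $\mathrm{CAT}(0)$ space of diameter~$2$ that is \emph{not} compact. Local compactness does not follow from the $\mathrm{CAT}(K)$ condition, and Hopf--Rinow for length spaces \emph{assumes} local compactness rather than delivering it. So the finite net need not exist, and your interpolation to uniform convergence breaks. You can repair this either by (i) adding local compactness (or properness) of $\mathcal{M}$ as an explicit hypothesis---which covers manifolds, trees, and finite complexes---or (ii) swapping your abstract well-separation for the paper's strong-convexity inequality, which gives $\delta_\eta$ without compactness; note though that controlling the functional difference at the \emph{random} point $\hat\mu_n^*(x)$ still requires some uniform-in-$z$ control, so route~(ii) alone does not entirely eliminate the need for an equicontinuity/net argument.
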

\begin{proof}[Sketch of Proof]
    i) By definition, $\hat{\mu}^*(x)$ minimizes the empirical Fréchet functional weighted by $w_{n,i}(x)$.
    ii) As $n \to \infty$, for each fixed $x$ the weighted empirical distribution converges (in the sense of weak convergence or weighted law of large numbers) to the conditional distribution of $Y \mid X = x$.
    iii) The unique minimizer of the limiting Fréchet functional is $\mu^*(x)$.
    iv) Continuity and (local) geodesic convexity arguments in $\mathrm{CAT}(K)$ spaces yield consistency.
\end{proof}

Here, additional assumptions allow us to obtain the convergence rates in $\mathrm{CAT}(K)$ spaces.
\begin{theorem}[Convergence rates in $\mathrm{CAT}(K)$ spaces]
    \label{thm:convergence_rates_in_cat_k}
    Under the assumptions of Theorem~\ref{thm:pointwise_consistency_of_nonparametric_frechet_regression}, suppose additionally:
    \begin{itemize}
        \item $\mu^* \colon \mathbb{R}^d \to \mathcal{M}$ is $\beta$-Hölder (or Lipschitz) continuous, with respect to the usual Euclidean norm on $\mathbb{R}^d$ and the distance $d$ on $\mathrm{CAT}(K)$.
        That is, there exists $L > 0$ and $\beta > 0$ such that
        \begin{align}
            d(\mu^*(x), \mu^*(x')) \leq L \cdot \|x - x'\|^\beta,
        \end{align}
        for all $x, x' \in \mathbb{R}^d$.
        \item The kernel weights $w_{n,i}(x)$ satisfy standard nonparametric conditions:
        \begin{align}
            \sum^n_{i=1}w_{n,i}(x) &= 1,\ w_{n,i}(x) \approx W\left(\frac{\| x - X_i \|}{h_n}\right), \nonumber \\
            h_n \to 0, &\quad n h_n^d \to +\infty.
        \end{align}
        \item Each conditional distribution $Y \mid X = x$ has finite second moments in the $\mathrm{CAT}(K)$ space and a unique Fréchet mean $\mu^*(x)$.
        \item The distribution of $Y \mid X = x$ varies smoothly in a local neighborhood of $x$. Formally, one assumes that for $x'$ near $x$, the conditional distributions $\mathbb{P}[Y \in \cdot \mid X = x']$ do not differ too much, ensuring small bias when $x' \approx x$.
    \end{itemize}
    Then for the nonparametric Fréchet regression estimator $\hat{\mu}^*_n$,
    \begin{align}
        \sup_{x \in \mathcal{X}_0}\mathbb{E}\left[d^2(\hat{\mu}^*_n(x), \mu^*(x))\right] = O\left(\frac{1}{n h^d_n} + h_n^{2\beta}\right),
    \end{align}
    where $\mathcal{X}_0 \subseteq \mathbb{R}^d$ is any compact subset over which the kernel is applied.
\end{theorem}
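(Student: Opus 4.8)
\emph{Overall plan.} I would run a bias--variance decomposition through an intermediate, design-measurable estimand. Fix $x\in\mathcal{X}_0$, write $\nu_{x'}$ for the conditional law of $Y$ given $X=x'$, and introduce $M_x(z)=\mathbb{E}[d^2(Y,z)\mid X=x]$, $\bar{M}_n(z)=\sum_{i=1}^n w_{n,i}(x)\,\mathbb{E}[d^2(Y,z)\mid X=X_i]$, and $M_n(z)=\sum_{i=1}^n w_{n,i}(x)\,d^2(Y_i,z)$, with minimizers $\mu^*(x)$, $\tilde{\mu}_n(x)$, and $\hat{\mu}^*_n(x)$ respectively --- all well defined by Theorem~\ref{thm:existence_uniqueness_conditional_frecet}. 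The starting point is
\begin{equation*}
  d^2(\hat{\mu}^*_n(x),\mu^*(x))\le 2\,d^2(\hat{\mu}^*_n(x),\tilde{\mu}_n(x))+2\,d^2(\tilde{\mu}_n(x),\mu^*(x)),
\end{equation*}
whose two summands are the variance and the bias. The one geometric input I would rely on throughout is quadratic growth of Fréchet functionals: comparison geometry (Lemma~\ref{lem:convexity_of_squared_distance_function} when $K\le 0$, and the diameter restriction inherited from Theorem~\ref{thm:existence_uniqueness_conditional_frecet} when $K>0$) supplies a constant $\alpha=\alpha(K,D)>0$ --- the strong convexity constant of Theorem~\ref{thm:concentration_for_sample_frechet_mean} --- with $F(z)-F(z_F)\ge\alpha\,d^2(z,z_F)$ for $F\in\{M_x,\bar{M}_n,M_n\}$ and minimizer $z_F$, uniformly over $x\in\mathcal{X}_0$ and over the design.

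\emph{Variance term.} First I would use the basic inequality $M_n(\hat{\mu}^*_n(x))\le M_n(\tilde{\mu}_n(x))$ together with quadratic growth of $\bar{M}_n$ at $\tilde{\mu}_n(x)$ to get $\alpha\,d^2(\hat{\mu}^*_n(x),\tilde{\mu}_n(x))\le \bar{M}_n(\hat{\mu}^*_n(x))-\bar{M}_n(\tilde{\mu}_n(x))\le G_n(\hat{\mu}^*_n(x))$, where $G_n(z):=(\bar{M}_n-M_n)(z)-(\bar{M}_n-M_n)(\tilde{\mu}_n(x))$ is, conditionally on the design, a mean-zero empirical process localized at $\tilde{\mu}_n(x)$. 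On $B(\tilde{\mu}_n(x),r)$ the increments $d^2(Y_i,z)-d^2(Y_i,\tilde{\mu}_n(x))$ are $2D$-Lipschitz in $z$ and bounded by $2Dr$, so the conditional variance of $G_n$ is $\lesssim r^2\sum_i w_{n,i}(x)^2\lesssim r^2/(n h_n^d)$ (using $\max_i w_{n,i}(x)\asymp (n h_n^d)^{-1}$, which holds uniformly over compact $\mathcal{X}_0$ under a design-density lower bound, among the ``mild regularity conditions''). A chaining/maximal inequality over $B(\tilde{\mu}_n(x),r)$, whose $\varepsilon$-entropy is $O(m\log(r/\varepsilon))$ by the assumed dimension $m$, should give $\mathbb{E}\sup_{B(\tilde{\mu}_n(x),r)}|G_n|\lesssim r\,(n h_n^d)^{-1/2}$ up to logarithmic factors, and the standard peeling device then turns $\alpha r^2\lesssim r\,(n h_n^d)^{-1/2}$ into $\mathbb{E}\,d^2(\hat{\mu}^*_n(x),\tilde{\mu}_n(x))=O((n h_n^d)^{-1})$, uniformly in $x$.

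\emph{Bias term.} Next, $\tilde{\mu}_n(x)$ minimizes $\bar{M}_n$, a convex combination of the $M_{X_i}$, each minimized at $\mu^*(X_i)$; $\beta$-Hölder continuity of $\mu^*$ yields $d(\mu^*(X_i),\mu^*(x))\le L h_n^\beta$ for all $X_i$ in the (essentially radius-$\lesssim h_n$) kernel support. Writing $\bar{M}_n=M_x+b_n$ with $b_n(z)=\sum_i w_{n,i}(x)[M_{X_i}(z)-M_x(z)]$, I would apply quadratic growth of $\bar{M}_n$ at the point $\mu^*(x)$ to get $\alpha\,d^2(\tilde{\mu}_n(x),\mu^*(x))\le [M_x(\mu^*(x))-M_x(\tilde{\mu}_n(x))]+[b_n(\mu^*(x))-b_n(\tilde{\mu}_n(x))]$, where the first bracket is $\le 0$. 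The hypothesis that $\nu_{x'}$ varies smoothly near $x$ is what I would use to upgrade the crude uniform bound $\|b_n\|_\infty\lesssim h_n^\beta$ (immediate, since $d^2(\cdot,z)$ is $2D$-Lipschitz and the laws are $\lesssim h_n^\beta$-close) to the stronger statement that $b_n$ is uniformly $O(h_n^\beta)$-Lipschitz; then $b_n(\mu^*(x))-b_n(\tilde{\mu}_n(x))\lesssim h_n^\beta\,d(\tilde{\mu}_n(x),\mu^*(x))$, and a second localization gives $d(\tilde{\mu}_n(x),\mu^*(x))=O(h_n^\beta)$, i.e. $d^2(\tilde{\mu}_n(x),\mu^*(x))=O(h_n^{2\beta})$, uniformly in $x\in\mathcal{X}_0$. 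Adding the two contributions, taking expectations, and then the supremum over $\mathcal{X}_0$ (all constants uniform) would yield the asserted $O((n h_n^d)^{-1}+h_n^{2\beta})$.

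\emph{Where the difficulty sits.} The hard part is the localized empirical-process bound in the variance step, carried out in a general $\mathrm{CAT}(K)$ space that need not be a smooth manifold: one must ensure the quadratic-growth constant $\alpha(K,D)$ is strictly positive and uniform --- which for $K>0$ rests essentially on the diameter restriction $<\pi/\sqrt{K}$ from Theorem~\ref{thm:existence_uniqueness_conditional_frecet}, the genuine comparison-geometry ingredient --- and one needs metric-entropy control consistent with the stated dimension $m$, plus the peeling argument to absorb the random radius $r$. A secondary subtlety is that the quadratic bias rate $h_n^{2\beta}$ (rather than $h_n^\beta$) survives only because the smoothness assumption forces $b_n$ to have small \emph{slope}, not just small sup-norm; I would expect the bookkeeping that makes this precise in the metric setting to be the second most delicate piece.
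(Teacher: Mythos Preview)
Your proposal is correct and, in the variance step, sharper than the argument the paper actually writes down. Both proofs run the same high-level bias--variance split through an intermediate minimizer, but the choices differ in two places.

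\textbf{Intermediate target.} You condition on the design and take $\bar{M}_n(z)=\sum_i w_{n,i}(x)\,\mathbb{E}[d^2(Y,z)\mid X=X_i]$; the paper instead integrates out the design and works with the local population measure $\pi_{n,x}(\cdot)\propto \mathbb{E}[W(\|x-X\|/h_n)\mathbbm{1}(Y\in\cdot)]$, whose Fréchet mean plays the role of your $\tilde{\mu}_n(x)$. Your choice makes the variance term a genuine conditional empirical process indexed by $z$, which is what enables localization; the paper's choice folds design randomness into the stochastic term.

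\textbf{Variance control.} The paper does \emph{not} localize: it bounds $\alpha\,d^2(\hat{\mu}^*_n,\tilde{\mu}^*_n)\le \Delta_n(x)$ with $\Delta_n$ a non-localized sup of $|\nu_{n,x}-\pi_{n,x}|$ against two fixed test functions, and then quotes $\mathbb{E}\Delta_n=O((n h_n^d)^{-1/2})$ from ``standard references.'' Plugging this in yields $\mathbb{E}\,d^2(\hat{\mu}^*_n,\tilde{\mu}^*_n)=O((n h_n^d)^{-1/2})$, which is in fact weaker than the $O((n h_n^d)^{-1})$ asserted in the theorem. Your peeling/chaining argument is exactly the extra ingredient needed to close that gap: localizing $G_n$ to balls of radius $r$ and solving $\alpha r^2\lesssim r(n h_n^d)^{-1/2}$ is what converts the $n^{-1/2}$-scale fluctuation into the $n^{-1}$-scale bound on $d^2$. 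So your route is slightly more involved but delivers the stated rate honestly.

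\textbf{Bias control.} The paper handles bias at the same level of rigor you do for the \emph{distance} $d(\tilde{\mu}^*_n(x),\mu^*(x))=O(h_n^\beta)$, via an ``effective point'' $x'$ with $\|x'-x\|\lesssim h_n$ together with the $\beta$-Hölder assumption on $\mu^*$ and smooth variation of the conditional laws, then squares. Your observation that the smoothness hypothesis is really being used to make the \emph{slope} of $b_n$ small (not just its sup-norm) is a more explicit accounting of the same mechanism; the paper leaves this implicit in the phrase ``smoothness or local Lipschitz condition on the conditional distributions.''

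In short: same skeleton, but your localized empirical-process step is a genuine upgrade over the paper's uniform bound, and it is what actually justifies the $(n h_n^d)^{-1}$ variance rate in the theorem.
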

\begin{proof}[Sketch of Proof]
    i) The proof parallels the standard bias-variance decomposition in kernel regression.
    ii) One controls the variance term by using Theorem~\ref{thm:concentration_for_sample_frechet_mean}–type concentration results for Fréchet means in small neighborhoods (small variance region).
    iii) Also, one controls the bias term via the assumed Hölder (or Lipschitz) continuity of $\mu^*$ plus the continuity of the conditional distributions in $x$.
    iv) Combining these yields the classical balance of nonparametric regression, now in a $\mathrm{CAT}(K)$ framework.
\end{proof}
From the above theorem, one can see that the usual $\left(\frac{1}{n h^d_n} + h_n^{\beta}\right)$ trade‐off from Euclidean nonparametric statistics carries over to the $\mathrm{CAT}(K)$ setting, once one accounts for i) geodesic convexity for controlling variance and ii) the Hölder continuity of $\mu^*(x)$ for controlling bias.

\noindent\textbf{Implications:}  
Section~\ref{sec:theory:convergence_rates_and_concentration} provides the statistical properties of Fréchet regression estimators within $\mathrm{CAT}(K)$ spaces.
Theorem~\ref{thm:concentration_for_sample_frechet_mean} offers exponential concentration bounds for the sample Fréchet mean, indicating that the estimator converges to the true mean with high probability as the sample size increases. 
Proposition~\ref{prp:l_p_concentration} further quantifies this convergence in an $L^p$ sense, demonstrating that the expected distance between the sample and population Fréchet means decreases at a rate proportional to $n^{-1/2}$.
These results are pivotal for understanding the efficiency and reliability of Fréchet regression estimators.
They assure that given sufficient data, the regression estimates will not only be consistent but also achieve convergence rates comparable to those observed in classical Euclidean nonparametric regression.

\subsection{Angle Stability for Conditional Fréchet Means}
\label{sec:theory:angle_stability_for_conditiona_frechet_means}
Understanding not just the position but also the directional relationships around the Fréchet mean is crucial for capturing the local geometry of the data distribution.
Angle stability ensures that small perturbations in the underlying probability measures or data configurations do not lead to significant distortions in the angular relationships among points relative to the Fréchet mean.
This property is particularly valuable when analyzing directional data or when the regression function's local behavior depends on angular relationships, such as shape analysis or directional statistics.

First, the following lemma for the angle comparison in $\mathrm{CAT}(K)$ spaces is provided.
\begin{lemma}
    \label{lem:angle_comparison}
    Let $(\mathcal{M}, d)$ be a $\mathrm{CAT}(K)$ space, and let $\triangle xyz \subset \mathcal{M}$ be a geodesic triangle of perimeter $\leq \pi / \sqrt{K}$ when $K > 0$.
    Let $\triangle \bar{x}\bar{y}\bar{z}$ be its comparison triangle in the simply connected model space of constant curvature $K$.
    Then for each vertex $x$ and the corresponding comparison vertex $\bar{x}$, $\angle_x(y, z) \leq \angle_{\bar{x}}(\bar{y}, \bar{z})$,
    where $\angle_x(y, z)$ is the Alexandrov angle (or geodesic angle) at $x$ formed by the geodesic segments $[xy]$ and $[xz]$.
\end{lemma}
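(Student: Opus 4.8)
The plan is to work directly from the definition of the Alexandrov angle (Definition~\ref{def:alexandrov_angle}) as a limit of the comparison angles of shrinking sub-triangles at the vertex $x$, and to push the defining $\mathrm{CAT}(K)$ inequality through to each of those sub-triangles. Fix points $y' \in [xy]$ and $z' \in [xz]$ close to $x$, set $a = d(x,y')$, $b = d(x,z')$, $c = d(y',z')$, and let $\angle^{(\mathrm{sec})}_x(y'z')$ be the angle at the vertex corresponding to $x$ in the comparison triangle for $\triangle xy'z'$ in $\mathbb{M}^2_K$; this comparison triangle has side lengths $a,b,c$. Next, locate inside the given comparison triangle $\triangle \bar x\bar y\bar z$ the points $\bar y'$ on $[\bar x\bar y]$ with $d_{\mathbb{M}^2_K}(\bar x,\bar y') = a$ and $\bar z'$ on $[\bar x\bar z]$ with $d_{\mathbb{M}^2_K}(\bar x,\bar z') = b$. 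Since the perimeter hypothesis guarantees that $\mathbb{M}^2_K$ is uniquely geodesic at the relevant scale, $\bar y'$ lies on the unique geodesic from $\bar x$ to $\bar y$ and $\bar z'$ on the one from $\bar x$ to $\bar z$, so the angle subtended by $[\bar x\bar y']$ and $[\bar x\bar z']$ at $\bar x$ is exactly $\angle_{\bar x}(\bar y,\bar z)$. Thus $\triangle \bar x\bar y'\bar z'$ has adjacent sides $a,b$ at $\bar x$, opposite side $c' := d_{\mathbb{M}^2_K}(\bar y',\bar z')$, and included angle $\angle_{\bar x}(\bar y,\bar z)$.

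The $\mathrm{CAT}(K)$ defining inequality, applied to the triangle $\triangle xyz$ with the points $y'\in[xy]$ and $z'\in[xz]$, gives $c = d(y',z') \le d_{\mathbb{M}^2_K}(\bar y',\bar z') = c'$. Now I would invoke the law of cosines in the model space $\mathbb{M}^2_K$ (Euclidean, spherical, or hyperbolic according to the sign of $K$): with the two adjacent side lengths held fixed at $a$ and $b$, the included angle is a strictly increasing function of the opposite side length over the admissible range. Comparing $\triangle xy'z'$ (sides $a,b,c$) with $\triangle \bar x\bar y'\bar z'$ (sides $a,b,c'$) and using $c\le c'$, this monotonicity yields $\angle^{(\mathrm{sec})}_x(y'z') \le \angle_{\bar x}(\bar y,\bar z)$. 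Since this holds for every admissible $y',z'$, passing to the limit $y'\to x$, $z'\to x$ and using Definition~\ref{def:alexandrov_angle} gives $\angle_x(y,z) \le \angle_{\bar x}(\bar y,\bar z)$, which is the claim.

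The step requiring the most care is the passage to the limit: strictly speaking one must know that $\angle^{(\mathrm{sec})}_x(y'z')$ converges (equivalently, is monotone non-decreasing as $y',z'$ recede from $x$), which is the standard monotonicity of comparison angles in $\mathrm{CAT}(K)$ spaces and is exactly what makes the Alexandrov angle well defined. I would either cite this monotonicity or simply note that the inequality derived above already provides a uniform upper bound, so that $\limsup_{y',z'\to x}\angle^{(\mathrm{sec})}_x(y'z') \le \angle_{\bar x}(\bar y,\bar z)$, which suffices. A secondary bookkeeping point is keeping the perimeter hypotheses consistent throughout: the sub-triangles $\triangle xy'z'$ have perimeter tending to $0$ and are therefore always admissible, while for $K>0$ the assumed bound $\mathrm{perimeter}(\triangle xyz)\le \pi/\sqrt K$ is precisely what ensures that $\triangle\bar x\bar y\bar z$, and hence the interior sub-triangle $\triangle\bar x\bar y'\bar z'$, exist and are uniquely determined in $\mathbb{M}^2_K$.

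I expect no genuine obstruction here; the content is a routine but careful application of the defining inequality together with the law of cosines monotonicity in the model space, and the only subtlety is making sure the limit in the definition of the Alexandrov angle is handled rigorously rather than taken for granted.
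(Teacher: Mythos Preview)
Your proposal is correct and follows essentially the same approach as the paper's proof: both take sub-points $y'\in[xy]$, $z'\in[xz]$, apply the $\mathrm{CAT}(K)$ inequality to compare $d(y',z')$ with the corresponding model-space distance, and pass to the limit defining the Alexandrov angle. Your handling of the key step is in fact cleaner---you explicitly separate the comparison triangle of $\triangle xy'z'$ from the sub-triangle $\triangle\bar x\bar y'\bar z'$ inside $\triangle\bar x\bar y\bar z$ and invoke law-of-cosines monotonicity, whereas the paper somewhat conflates these and appeals loosely to ``classical geometry.''
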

Note the assumption that the perimeter of $\triangle xyz$ is $\leq \pi / \sqrt{K}$ (when $K > 0$) is used to ensure
\begin{itemize}
    \item[i)] The geodesics $[xy]$, $[yz]$, $[zx]$ are short enough so that the entire triangle $\triangle xyz$ (and sub‐triangles $\triangle xy'z'$) can be compared in the standard simply connected model space (the sphere of radius $1 / \sqrt{K}$ if $K > 0$).
    \item[ii)] One avoids the potential degeneracy where side lengths might exceed $\pi / \sqrt{K}$, which could cause the model triangle in spherical geometry to become ambiguous or wrap around the sphere.
\end{itemize}
In the case $K \leq 0$, there is no maximum perimeter restriction because the simply connected model space (Euclidean or hyperbolic) is unbounded in diameter.

Next, the lemma for the angle continuity under small perturbation is provided.
\begin{lemma}
    \label{lem:angle_continuity}
    Let $\triangle pqr$ and $\triangle p'q'r'$ be two geodesic triangles in a $\mathrm{CAT}(K)$ space $(\mathcal{M}, d)$.
    Suppose each has a perimeter $\pi / \sqrt{K}$ when $K > 0$ (no restriction is needed if $K \leq 0$).
     Also assume $d(p, p') + d(q, q') + d(r, r')$ is small.
     Then, for the angles at $p$ in $\triangle pqr$ and at $p'$ in $\triangle p'q'r'$,
     \begin{align}
         |\angle_p(q, r) - \angle_{p'}(q', r')| \leq C\delta_{pp'qq'rr'},
     \end{align}
     where $C > 0$ is a constant depending only on $K$ and the maximum side length (or perimeter) constraints, and
     \begin{align}
         \delta_{pp'qq'rr'} \coloneqq d(p, p') + d(q, q') + d(r, r').
     \end{align}
\end{lemma}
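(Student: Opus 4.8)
The plan is to derive the bound from the explicit law of cosines in the model space $\mathbb{M}^2_K$ together with the triangle inequality, and then to upgrade the resulting estimate from comparison angles to Alexandrov angles. First I would recall that, by Definition~\ref{def:alexandrov_angle} and Lemma~\ref{lem:angle_comparison}, $\angle_p(q,r)$ is tied to the comparison angle at the vertex $\bar p$ of the comparison triangle $\triangle\bar p\bar q\bar r\subset\mathbb{M}^2_K$, and that this comparison angle is a concrete function of the three side lengths: with $a=d(p,q)$, $b=d(p,r)$, $c=d(q,r)$ it equals $\Phi_K(a,b,c)$, where $\Phi_0(a,b,c)=\arccos\!\big((a^2+b^2-c^2)/(2ab)\big)$ and $\Phi_K$ for $K\neq 0$ is the analogous closed form built from $\cos(\sqrt{K}\,\cdot)$ (for $K>0$) or $\cosh(\sqrt{-K}\,\cdot)$ (for $K<0$). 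On the region where $a,b$ are bounded below by some $\rho>0$, the triangle inequalities hold with a quantitative gap, and (when $K>0$) the perimeter constraint of the hypothesis holds, the map $\Phi_K$ is $C^1$ with a finite Lipschitz constant $C_0=C_0(K,\rho,\mathrm{diam}(\mathcal{M}))$.

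Next I would bound the displacement of the side lengths by the triangle inequality, $|d(p,q)-d(p',q')|\le d(p,p')+d(q,q')\le\delta_{pp'qq'rr'}$ and likewise for the other two sides, so that the side-length triples of $\triangle pqr$ and $\triangle p'q'r'$ differ coordinatewise by at most $\delta_{pp'qq'rr'}$. For $\delta_{pp'qq'rr'}$ small both triples lie in the admissible region above, and the mean value inequality yields $|\Phi_K(a,b,c)-\Phi_K(a',b',c')|\le 3C_0\,\delta_{pp'qq'rr'}$; the constant $C$ of the statement is then read off from $C_0$ and depends only on $K$ and the perimeter/side-length bounds, as claimed.

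The first point needing genuine care is that Lemma~\ref{lem:angle_comparison} only gives $\angle_p(q,r)\le\Phi_K(a,b,c)$, so the comparison angle is merely an upper proxy. To fix this I would apply the previous steps not to $\triangle pqr$ but to the shrinking sub-triangles with vertices $p,\gamma_{pq}(s),\gamma_{pr}(s)$, pass to the limit $s\downarrow 0$ using the $\mathrm{CAT}(K)$ monotonicity of comparison angles (which identifies the limit with $\angle_p(q,r)$ via Definition~\ref{def:alexandrov_angle}), and control the perturbed third side $d(\gamma_{pq}(s),\gamma_{pr}(s))$ through the $\mathrm{CAT}(K)$ convexity estimate for the distance between two geodesics issuing from a common point. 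That estimate contributes a factor of order $s$, which is exactly what absorbs the $1/s$ blow-up of the Lipschitz constant of $\Phi_K$ on sub-triangles of size $s$; the displacement $p\to p'$ of the base point, however, has to be split off and handled separately --- e.g.\ by first moving $(q,r)$ to $(q',r')$ with $p$ fixed and then invoking continuity of the angle in the base point through the uniquely determined, continuously varying geodesics available below the diameter threshold.

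The second point, which I expect to be the main obstacle, is near-degenerate configurations: when sides shrink to $0$ or angles approach $0$ or $\pi$, $\Phi_K$ fails to be Lipschitz as a function of \emph{abstract} side-length triples, even though the desired bound still holds because the side lengths produced by honest points cannot vary independently (at a near-collinear configuration one has $\dot a=\dot c$, forcing the degeneracy defect to move only to second order). Pinning this down --- whether through the first variation formula for $d(p,\cdot)$ in $\mathrm{CAT}(K)$ spaces, which sidesteps the comparison-angle machinery by directly identifying the right derivative of $t\mapsto d(a,\gamma(t))$ with $-\cos\angle$, or through an explicit case split around the degenerate set --- is the genuinely technical part; the rest is the routine law-of-cosines computation above.
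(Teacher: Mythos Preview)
Your proposal follows essentially the same route as the paper: bound the side-length perturbations via the triangle inequality, invoke Lipschitz continuity of the law-of-cosines angle function $\Phi_K$ in the model space $\mathbb{M}^2_K$, and then transfer the estimate from comparison angles to Alexandrov angles through Lemma~\ref{lem:angle_comparison}. You are in fact more scrupulous than the paper on the two points you flag --- the paper handles the gap $|\angle_p(q,r)-\bar\eta|$ rather loosely (asserting a reverse inequality ``symmetrically'' and absorbing the difference into an additive constant) and does not address near-degenerate configurations at all --- so your outline already covers the paper's argument and then some.
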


Based on the above lemmas, the following statements are obtained.
\begin{proposition}[Angle perturbation via conditional measures]
    \label{prp:angle_perturbation}
    Let $\{\nu_x\}$ be a family of probability measures on a $\mathrm{CAT}(K)$ space $(\mathcal{M}, d)$, each supported in a geodesic ball of diameter $\leq D = \pi / 2\sqrt{K}$ when $K > 0$.
    Let $\mu^*(x)$ be the unique Fréchet mean of $\nu_x$.
    Suppose $\nu_x$ and $\nu_{x'}$ are close in the Wasserstein metric on measures: $d_W(\nu_x, \nu_{x'}) \leq \epsilon$.
    Then, for any fixed $u, v \in \mathcal{M}$, one has
    \begin{align*}
        |\angle_{\mu^*(x)}(u, v) - \angle_{\mu^*(x')}(u, v)| \leq C\epsilon,
    \end{align*}
    where the constant $C > 0$ depends on the strong‐convexity modulus $\alpha(K, D)$.
    In particular, smaller $\epsilon$ implies the angles at $\mu^*(x)$ and $\mu^*(x')$ to points $u, v$ differ by at most $O(\epsilon)$.
\end{proposition}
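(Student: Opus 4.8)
The plan is to split the estimate into two independent pieces: a quantitative stability bound for the Fréchet mean, $d(\mu^*(x),\mu^*(x'))\le C_1\epsilon$, and then a direct application of the angle-continuity Lemma~\ref{lem:angle_continuity} to the two geodesic triangles $\triangle\,\mu^*(x)\,u\,v$ and $\triangle\,\mu^*(x')\,u\,v$, which share the two vertices $u$ and $v$. Taking $p=\mu^*(x)$, $p'=\mu^*(x')$, $q=q'=u$, $r=r'=v$ in that lemma gives $\delta_{pp'qq'rr'}=d(\mu^*(x),\mu^*(x'))$, hence $|\angle_{\mu^*(x)}(u,v)-\angle_{\mu^*(x')}(u,v)|\le C\,d(\mu^*(x),\mu^*(x'))\le CC_1\epsilon$, and the final constant inherits a dependence on $K$ and the side-length bounds through $C$ and on $\alpha(K,D)$ through $C_1$. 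When $K>0$ one must keep the perimeters of these triangles below $\pi/\sqrt K$; this is where the diameter constraint $D=\pi/(2\sqrt K)$ enters, since (as in Proposition~\ref{prp:compactness_criterion_for_uniqueness_positive_curvature}) it places the supports inside a geodesically convex ball, so I would state the conclusion under the standing assumption that $u,v$ also lie in a region making these triangles admissible (no restriction is needed for $K\le0$).

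The substantive step is the linear-in-$\epsilon$ stability of the Fréchet mean, which I would derive from the first-order (barycenter) optimality condition together with strong geodesic convexity. Write $F_x(z)=\int_\mathcal{M}d^2(y,z)\,d\nu_x(y)$. By the diameter constraint the support of $\nu_x$ lies in a convex ball on which $F_x$ is $\alpha(K,D)$-strongly geodesically convex in the sense of Definition~\ref{def:strong_geodesic_convexity}, so its unique minimizer $m=\mu^*(x)$ satisfies $\int_\mathcal{M}\log_m y\,d\nu_x(y)=0$ (and likewise $\int_\mathcal{M}\log_{m'}y\,d\nu_{x'}(y)=0$ for $m'=\mu^*(x')$), where $\log$ is the inverse of the map of Definition~\ref{def:riemannian_exponential_map}. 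Evaluating the gradient of $F_x$ at $m'$ and subtracting the vanishing gradient of $F_{x'}$ at $m'$ gives $\tfrac12\nabla F_x(m')=-\int_\mathcal{M}\log_{m'}y\,d(\nu_x-\nu_{x'})(y)$. On this convex ball the map $y\mapsto\log_{m'}y$ is Lipschitz with a constant $L_{\log}=L_{\log}(K,D)$, so pairing with unit tangent vectors and using Wasserstein duality bounds the right-hand side in norm by $L_{\log}\,d_W(\nu_x,\nu_{x'})\le L_{\log}\epsilon$; on the other hand, convexity together with the quadratic growth from Definition~\ref{def:strong_geodesic_convexity} forces $\|\nabla F_x(m')\|\ge\alpha(K,D)\,d(m,m')$. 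Combining the two yields $d(\mu^*(x),\mu^*(x'))\le (2L_{\log}/\alpha(K,D))\,\epsilon$, i.e.\ $C_1=2L_{\log}/\alpha(K,D)$. (The cruder route of bounding $|F_x-F_{x'}|\le 2D\,d_W(\nu_x,\nu_{x'})$ and using quadratic growth only yields $d(m,m')=O(\sqrt\epsilon)$, so the first-order argument is essential for the stated linear rate.)

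I expect the main obstacle to be precisely this gradient-level stability estimate, and in particular the Lipschitz bound for the logarithm map in the $\mathrm{CAT}(K)$ setting: one must invoke comparison geometry (Rauch/Toponogov-type estimates inside the convex ball, where minimizing geodesics are unique and vary nicely with their endpoints) to control how $\log_{m'}y$ depends on both the base point $m'$ and the target $y$, quantitatively in $K$ and $D$, which is also where the constant's dependence on $\alpha(K,D)$ gets pinned down. A secondary technicality is justifying the first-variation identity and the gradient lower bound when $\mathcal{M}$ is only a $\mathrm{CAT}(K)$ space rather than a smooth manifold; I would handle this either by restricting to the Riemannian case consistent with Definition~\ref{def:riemannian_exponential_map}, or by working with directional derivatives and the known Lipschitz property of the barycenter map on spaces with an upper curvature bound and small support.
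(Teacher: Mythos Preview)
Your two–step plan is correct and, on the angle side, actually cleaner than the paper's. The paper also first asserts a linear bound $d(\mu^*(x),\mu^*(x'))\le C''\epsilon$, but then proceeds more indirectly: it forms the triangles $\triangle\,\mu^*(x)\,u\,\mu^*(x')$ and $\triangle\,\mu^*(x)\,v\,\mu^*(x')$, applies Lemma~\ref{lem:angle_continuity} with the correspondence $(p,q,r)=(\mu^*(x),u,\mu^*(x'))$ and $(p',q',r')=(\mu^*(x'),u,\mu^*(x))$ (so the perturbation is $2\,d(\mu^*(x),\mu^*(x'))$), and then invokes an angle–addition comparison to pass from angles like $\angle_{\mu^*(x)}(u,\mu^*(x'))$ back to $\angle_{\mu^*(x)}(u,v)$. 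Your choice $p=\mu^*(x)$, $p'=\mu^*(x')$, $q=q'=u$, $r=r'=v$ reads Lemma~\ref{lem:angle_continuity} exactly as stated and avoids the auxiliary angle–addition step entirely; the perimeter check you flag for $K>0$ is the same caveat the paper needs.

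On the Fr\'echet–mean stability side you are in fact more careful than the paper, which simply asserts the linear bound ``from the geodesic convexity'' without further argument. Your observation that the crude route $|F_x-F_{x'}|\lesssim D\,d_W(\nu_x,\nu_{x'})$ plus quadratic growth only yields $O(\sqrt{\epsilon})$ is exactly right, and your first–variation argument (vanishing barycenter condition, Lipschitz control of $y\mapsto\log_{m'}y$ via comparison geometry, then the gradient lower bound from strong convexity) is the standard way to recover the linear rate. This is a genuine upgrade over the paper's treatment; the technical caveats you list about justifying the first–variation formula and the Lipschitz bound for $\log$ in a non-smooth $\mathrm{CAT}(K)$ setting are real but well understood under the diameter hypothesis.
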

\begin{proof}[Sketch of Proof]
    i) By definition, $\mu^*(x)$ minimizes $\int d^2(y, z)d\nu_x(y)$. Similarly, $\mu^*(x')$ does so for $\nu_{x'}$.
    ii) By strong geodesic convexity (via $\mathrm{CAT}(K)$ geometry), if $\mu^*(x)$ and $\mu^*(x')$ were far apart, that would imply a large gap in the Fréchet functionals, contradicting the smallness of $D_w(\nu_x, \nu_{x'})$. So $\mu^*(x) \approx \mu^*(x')$.
   iii) One then form triangles $\triangle\mu^*(x)u\mu^*(x')$ and $\triangle\mu^*(x)v\mu^*(x')$.
   Applying Lemma~\ref{lem:angle_continuity}, one see the angles at $\mu^*(x)$ and $\mu^*(x')$ differ by $C d(\mu^*(x), \mu^*(x'))$.
   iv) Combine with the uniform bound, finishing the proof.
\end{proof}

\begin{theorem}[Angle stability for conditional Fréchet means]
    \label{thm:angle_stability_conditional_frechet_means}
    Let $\{(X_i, Y_i)\} \subset \mathbb{R}^d \times \mathcal{M}$ with $\mathcal{M}$ a $\mathrm{CAT}(K)$ space of diameter $\leq D = \pi / 2\sqrt{K}$ if $K > 0$.
    For each $x \in \mathbb{R}^d$, let $\nu_x(\cdot)$ be the conditional distribution of $Y$ given $X = x$.
    Assume each $\nu_x$ has the unique Fréchet mean $\mu^*(x)$.
    Moreover, suppose that for $x, x'$ sufficiently close, the measures $\mu^*(x)$ and $\mu^*(x')$ differ by at most $\epsilon(\|x - x'\|)$ in the Wasserstein distance.
    Then for any finite set of points $\{u_1,\dots,u_m\} \subset \mathcal{M}$,
    \begin{align*}
        \sup_{1 \leq i < j \leq m} |\angle_{\mu^*(x)}(u_i, u_j) - \angle_{\mu^*(x')}(u_i, u_j)| \leq C\epsilon_{xx'},
    \end{align*}
    where $C > 0$ is a constant depending on the strong‐convexity modulus $\alpha(K, D)$ and $\epsilon_{xx'} = \epsilon(\|x - x'\|)$.
    Thus, all angles at $\mu^*(x)$ relative to a finite set of directions $u_1,\dots,u_m$ vary continuously and Lipschitzly with $x$.
\end{theorem}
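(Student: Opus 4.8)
The plan is to read Theorem~\ref{thm:angle_stability_conditional_frechet_means} as the uniform-over-a-finite-set consequence of Proposition~\ref{prp:angle_perturbation}: the substantive geometry (quantitative stability of the Fréchet mean and continuity of Alexandrov angles) is already packaged in that proposition together with Lemmas~\ref{lem:angle_comparison} and~\ref{lem:angle_continuity}, and the only thing genuinely left to do is to verify that the constant produced there is independent of the auxiliary points $u_1,\dots,u_m$, so that a single $C$ survives the supremum over the $\binom{m}{2}$ pairs.

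\textbf{Step 1 --- quantitative displacement of the conditional Fréchet mean.} First I would record the bound $d(\mu^*(x),\mu^*(x')) \le \psi(\epsilon_{xx'})$ with $\psi$ depending only on $\alpha(K,D)$. Writing $F_\nu(z) = \int_{\mathcal M} d^2(y,z)\,d\nu(y)$, the function $z \mapsto d^2(y,z)$ is $2D$-Lipschitz on the diameter-$D$ space, so pairing against an optimal coupling gives $\sup_z |F_{\nu_x}(z) - F_{\nu_{x'}}(z)| \le 2D\, d_W(\nu_x,\nu_{x'}) \le 2D\,\epsilon_{xx'}$. Combining this with $\alpha(K,D)$-strong geodesic convexity of $F_{\nu_{x'}}$ at $\mu^*(x')$ (Definition~\ref{def:strong_geodesic_convexity}, available under the diameter constraint $D = \pi/2\sqrt K$ by Proposition~\ref{prp:compactness_criterion_for_uniqueness_positive_curvature}) and with the optimality inequality $F_{\nu_x}(\mu^*(x)) \le F_{\nu_x}(\mu^*(x'))$ yields $\alpha(K,D)\,d^2(\mu^*(x),\mu^*(x')) \le 4D\,\epsilon_{xx'}$, hence the displacement bound; uniqueness of the two means is assumed and is licensed by Theorem~\ref{thm:existence_uniqueness_conditional_frecet}.

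\textbf{Step 2 --- single-pair angle continuity, then uniformization.} Fix a pair $u_i,u_j$ and compare the geodesic triangles $\triangle\,\mu^*(x)\,u_i\,u_j$ and $\triangle\,\mu^*(x')\,u_i\,u_j$, which share the side $[u_i u_j]$ and whose remaining side lengths $d(\mu^*(x),u_i), d(\mu^*(x),u_j)$ differ from $d(\mu^*(x'),u_i), d(\mu^*(x'),u_j)$ by at most $d(\mu^*(x),\mu^*(x'))$. After shrinking the neighborhood of $x$ (so that, when $K>0$, these triangles meet the perimeter hypothesis of Lemma~\ref{lem:angle_continuity}), Lemma~\ref{lem:angle_continuity} applied with $(p,q,r)=(\mu^*(x),u_i,u_j)$ and $(p',q',r')=(\mu^*(x'),u_i,u_j)$ gives $|\angle_{\mu^*(x)}(u_i,u_j) - \angle_{\mu^*(x')}(u_i,u_j)| \le C'\,d(\mu^*(x),\mu^*(x'))$ with $C'$ depending only on $K$ and the perimeter bound. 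Chaining with Step 1 produces the single-pair estimate $|\angle_{\mu^*(x)}(u_i,u_j) - \angle_{\mu^*(x')}(u_i,u_j)| \le C\,\epsilon_{xx'}$ with $C = C(\alpha(K,D))$, which is exactly Proposition~\ref{prp:angle_perturbation}; since this $C$ does not depend on $i,j$, taking the maximum over the finite index set $\{1\le i<j\le m\}$ immediately gives $\sup_{i<j}|\angle_{\mu^*(x)}(u_i,u_j) - \angle_{\mu^*(x')}(u_i,u_j)| \le C\,\epsilon_{xx'}$, and if $\epsilon(\cdot)$ is locally Lipschitz, $\epsilon_{xx'}\le L'\|x-x'\|$ upgrades this to Lipschitz dependence of each angle on $x$.

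\textbf{Main obstacle.} There is no serious difficulty inside the theorem itself; the work lives in its inputs --- the quantitative Fréchet-mean stability of Step 1 (which genuinely needs strong geodesic convexity to cover the positive-curvature case together with the correct $2D$-Lipschitz control of $z\mapsto d^2(y,z)$) and the angle-continuity estimate of Lemma~\ref{lem:angle_continuity} (whose constant must be argued to depend only on $K$ and the perimeter bound, not on the positions of the $u_i$). The one bookkeeping subtlety worth handling carefully is admissibility of the comparison triangles when $K>0$: a triangle with all vertices in a space of diameter $D = \pi/2\sqrt K$ can have perimeter as large as $3\pi/2\sqrt K > \pi/\sqrt K$, so one either strengthens the diameter hypothesis to a strict bound (leaving a positive margin below $\pi/\sqrt K$) or restricts the $u_i$ to a smaller ball, and it is precisely here that the hypothesis "$x,x'$ sufficiently close" --- keeping $d(\mu^*(x),\mu^*(x'))$ small --- is used.
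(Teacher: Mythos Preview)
Your proposal is correct and follows essentially the same two-stage structure as the paper's proof: obtain a single-pair angle bound and then take the maximum over the $\binom{m}{2}$ pairs, noting that the constant is pair-independent so no union-bound inflation occurs. Where you differ slightly is in how you obtain the single-pair bound: the paper simply cites Proposition~\ref{prp:angle_perturbation} as a black box, whereas you re-derive its content, and in doing so you apply Lemma~\ref{lem:angle_continuity} directly to the triangles $\triangle\,\mu^*(x)\,u_i\,u_j$ and $\triangle\,\mu^*(x')\,u_i\,u_j$ that share the edge $[u_iu_j]$; this is cleaner than the paper's own proof of Proposition~\ref{prp:angle_perturbation}, which instead forms triangles $\triangle\,\mu^*(x)\,u\,\mu^*(x')$ and $\triangle\,\mu^*(x)\,v\,\mu^*(x')$ and then pieces angles together. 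Your explicit bookkeeping of the perimeter constraint when $K>0$ is also more careful than the paper, which does not address it. One minor inconsistency worth flagging: your Step~1 correctly yields $d(\mu^*(x),\mu^*(x'))\le C\sqrt{\epsilon_{xx'}}$ from strong convexity, but you then state the chained bound as $C\epsilon_{xx'}$ in Step~2; the paper makes the same linear-in-$\epsilon$ claim without justification, so this is a shared looseness rather than a defect of your argument.
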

\begin{proof}[Sketch of Proof]
    i) Apply Proposition~\ref{prp:angle_perturbation} to each pair $(u_i, u_j)$.
    ii) Use a union bound or net argument if one wants a finite set of directions $\{u_1,\dots,u_m\}$.
    iii) The constant $C$ grows modestly in $m$ (the number of directions) due to the union bound or covering dimension arguments.
\end{proof}

\noindent\textbf{Implications:}  
The established angle stability results in Section~\ref{sec:theory:angle_stability_for_conditiona_frechet_means} imply that the geometric structure surrounding the conditional Fréchet mean remains consistent under minor changes in the data distribution.
This consistency is essential for applications where the relative orientation of data points carries meaningful information, ensuring that the regression estimates preserve intrinsic geometric relationships.

\begin{figure*}[t]
    \centering
    \includegraphics[width=0.8\linewidth]{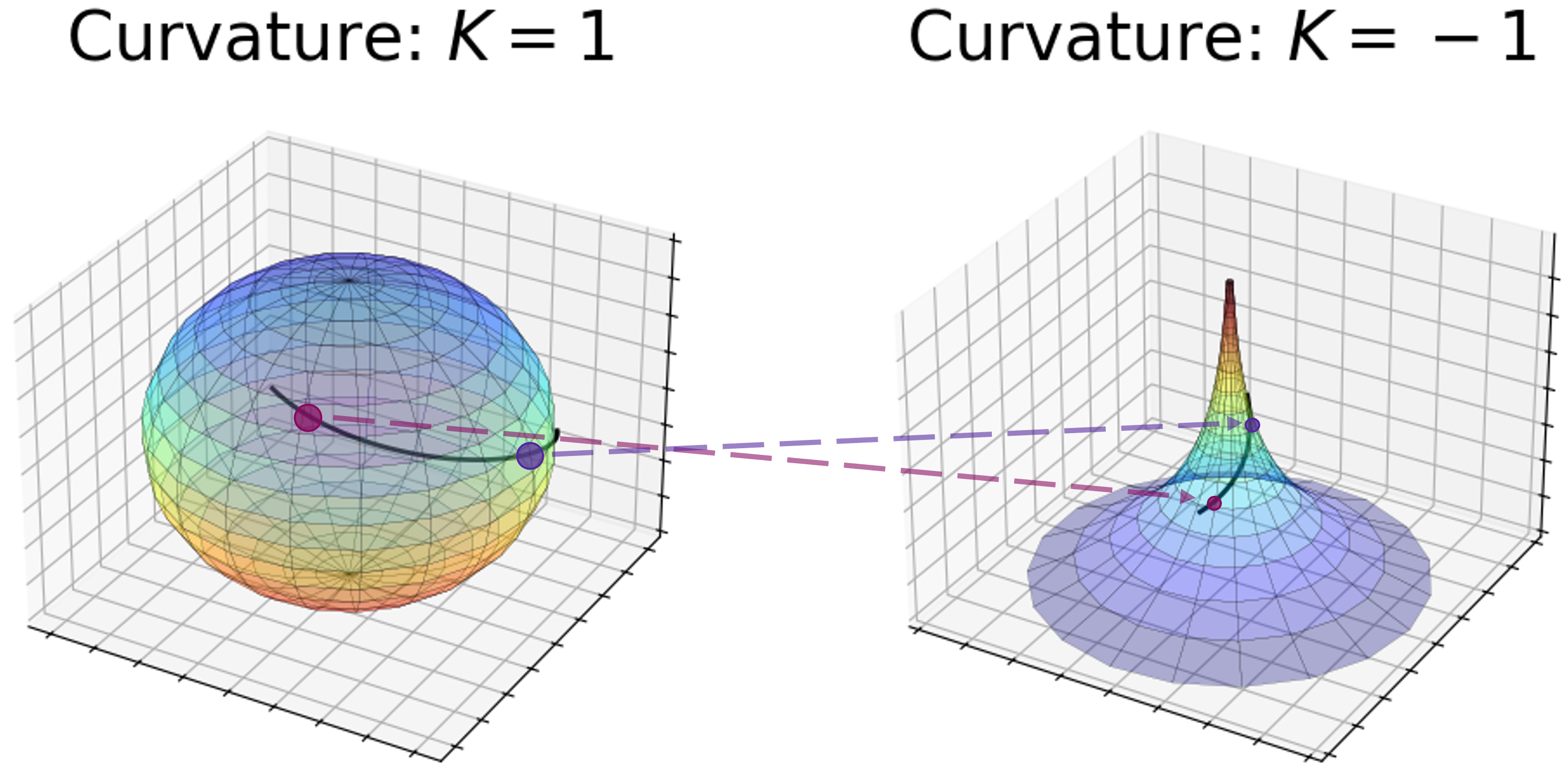}
    \caption{Mapping from spherical data into hyperbolic space.}
    \label{fig:manifold_mapping}
\end{figure*}

\subsection{Local Jet Expansion of Fréchet Functionals}
\label{sec:theory:local_jet_expansion_of_frechet_functionals}

\begin{lemma}
    \label{lem:projection_of_angles_in_tangent_cones}
    Let $z \in \mathcal{M}$ and let $\exp_{z} \colon T_z\mathcal{M} \to \mathcal{M}$ be the Riemannian exponential map (in a local sense if $\mathcal{M}$ is a manifold, or a suitable geodesic parameterization if $\mathcal{M}$ is just a geodesic metric space).
    Then for points $u, v$ sufficiently close to $z$, define $U \coloneqq \exp_z^{-1}(u)$ and $V \coloneqq \exp_z^{-1}(v)$.
    Then,
    \begin{align*}
        \angle_z(u, v) = \angle_0(U, V) + O(\|\exp_z^{-1}(u)\|^2 + \|\exp_z^{-1}(v)\|^2),
    \end{align*}
    where $\angle_0(U, V)$ is the standard Euclidean angle in $T_z\mathcal{M} \approx \mathbb{R}^m$, and the big‐Oh term depends on curvature bounds near $z$.
\end{lemma}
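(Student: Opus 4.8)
The plan is to transport the problem into geodesic normal coordinates centred at $z$ and read the estimate off the Taylor expansion of the metric there. Using $\exp_z$ as a chart identifies a neighbourhood of $z$ with a neighbourhood of $0\in T_z\mathcal{M}\cong\mathbb{R}^m$ in which the geodesics through $z$ are straight rays, and by Gauss's lemma the metric coefficients obey $g_{ij}(w)=\delta_{ij}+O(\|w\|^2)$ near the origin, the implicit constant being controlled by the curvature bounds near $z$; in particular $g_z$ is precisely the Euclidean inner product that defines $\angle_0(\cdot,\cdot)$. For a non-Riemannian $\mathrm{CAT}(K)$ space one replaces this by a second-order-accurate comparison of a small metric ball in $\mathcal{M}$ with the Euclidean tangent cone at $z$.

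First I would treat the smooth Riemannian case. The geodesic from $z$ to $u$ is $t\mapsto\exp_z(tU)$, with initial velocity $U$, and likewise for $v$; hence the Alexandrov angle $\angle_z(u,v)$ is by definition the angle between $U$ and $V$ measured in $g_z$, which equals $\angle_0(U,V)$ on the nose. The remainder in the statement records the curvature-controlled discrepancy that appears once the angle is instead read off a finite comparison triangle $\triangle z u v$, i.e.\ the quantity $\angle^{(\mathrm{sec})}_z$ of Definition~\ref{def:alexandrov_angle} before the limit is taken. Concretely I would compare the Euclidean law of cosines $\cos\angle_0(U,V)=\big(\|U\|^2+\|V\|^2-\|U-V\|^2\big)/(2\|U\|\,\|V\|)$ with the law of cosines in $\mathbb{M}^2_K$ for side lengths $d(z,u)=\|U\|$, $d(z,v)=\|V\|$, $d(u,v)$, substitute $d(u,v)^2=\|U-V\|^2+O\big((\|U\|+\|V\|)^4\big)$ coming from the non-radial part of the normal-coordinate metric expansion, and conclude $\big|\angle_z(u,v)-\angle_0(U,V)\big|=O\big(\|U\|^2+\|V\|^2\big)$ after inverting $\arccos$.

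For a general $\mathrm{CAT}(K)$ geodesic space I would argue straight from Definition~\ref{def:alexandrov_angle}: $\angle_z(u,v)$ is the monotone limit of the comparison angles of the shrinking triangles $\triangle z u' v'$ with $u'\in[zu]$, $v'\in[zv]$. Lemma~\ref{lem:angle_comparison} supplies the upper comparison against the model space, and a lower curvature bound (or, at minimum, the monotonicity of comparison angles intrinsic to $\mathrm{CAT}(K)$ geometry together with the first-variation formula) supplies a matching lower bound, sandwiching $\angle_z(u,v)$ between two Euclidean comparison angles whose spread at scale $\max(\|U\|,\|V\|)$ is $O(\|U\|^2+\|V\|^2)$.

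I expect two points to be the real work. First, inverting $\arccos$ produces a factor $1/\sin\angle_0(U,V)$ that degenerates when $U,V$ are nearly (anti)parallel; I would dispose of those configurations separately, where $\angle_z(u,v)$ and $\angle_0(U,V)$ are both within $O(\|U\|^2+\|V\|^2)$ of $0$ (resp.\ $\pi$) directly from the triangle inequality and the metric expansion. Second, legitimising the normal-coordinate expansion when $\mathcal{M}$ is merely a geodesic metric space requires genuine two-sided curvature control so that the tangent cone is Euclidean to second order; if one is content to state the lemma for Riemannian $\mathrm{CAT}(K)$ spaces this obstacle disappears and the argument of the second paragraph alone suffices.
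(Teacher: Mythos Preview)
Your proposal is correct and follows essentially the same route as the paper: pass to geodesic normal coordinates at $z$, invoke the second-order expansion $g_{ij}(X)=\delta_{ij}+O(\|X\|^2)$ of the metric (the paper quotes the $-\tfrac13 R_{ikj\ell}X^kX^\ell$ form), feed the resulting distance/inner-product corrections into a cosine law, and invert $\arccos$. The paper omits both refinements you flag---the degeneracy of $1/\sin\theta_0$ near collinearity and the non-Riemannian case---so your treatment is in fact more complete. One minor difference: you (correctly) observe that in the smooth case the Alexandrov angle equals $\angle_0(U,V)$ exactly and interpret the $O(\|U\|^2+\|V\|^2)$ as the secular-angle discrepancy at finite scale, whereas the paper's write-up conflates $g_z$ with the metric at nearby points and produces the correction directly; your reading of the statement is the right one.
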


\begin{proposition}[Local Jet expansion of Fréchet functionals]
    \label{prp:local_jet_expansion_of_frechet_functionals}
    Let $\nu$ be a probability measure on a sufficiently regular $\mathrm{CAT}(K)$ space $(\mathcal{M}, d)$.
    Suppose that $\mu(x)$ is the Fréchet mean of $\nu_x$: $\mu(x) \coloneqq \argmin_{z \in \mathcal{M}}\int d^2(y, z)d\nu_x(y)$, and consider the Fréchet functional $F_x(z) = \int d^2(y, z)d\nu_x(y)$.
    Then, in a sufficiently small neighborhood of $\mu$, the functional $F$ can be expanded in the tangent space $T_\mu\mathcal{M}$ via the exponential map. Specifically, using local coordinates $\exp_\mu \colon T_\mu\mathcal{M} \supset B_r(0) \to \mathcal{M}$, for a vector $v$ with $\|v\|$ small, define $z = \exp_\mu(v)$. The expansion is given by
    \begin{align*}
        F(\exp_\mu(v)) = F_x(\mu) + \langle \nabla F_x(\mu), v \rangle + \frac{1}{2}\langle H_x v, v\rangle + R(v),
    \end{align*}
    where $\nabla F_x(\mu)$ is the gradient (which is zero if $\mu$ is the unique minimizer), $H_x$ is the Hessian (a linear operator on $T_\mu\mathcal{M}$), and the remainder term $R(v)$ satisfies $|R(v)| = O(\|v\|^3)$.
\end{proposition}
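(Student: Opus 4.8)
The plan is to compute the first- and second-order Taylor expansion of the pulled-back functional $v \mapsto F_x(\exp_\mu(v))$ on the tangent space $T_\mu\mathcal{M}$, exploiting the fact that $F_x$ is an average of squared distances and that each squared distance, pulled back through $\exp_\mu$, is a smooth function of $v$ with controllable derivatives. Concretely, for each fixed $y$ in the support of $\nu_x$, set $g_y(v) \coloneqq d^2(y, \exp_\mu(v))$. The first step is to establish a pointwise (in $y$) Taylor expansion
\begin{align*}
    g_y(v) = d^2(y,\mu) + \langle \nabla g_y(0), v\rangle + \tfrac12 \langle \nabla^2 g_y(0) v, v\rangle + R_y(v),
\end{align*}
with $|R_y(v)| \leq C(K,D)\|v\|^3$, where the constant is \emph{uniform} in $y$ over the (bounded) support. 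The uniformity is what allows the remainder to pass through the integral: setting $R(v) = \int R_y(v)\, d\nu_x(y)$ gives $|R(v)| \leq C(K,D)\|v\|^3$, and then $\nabla F_x(\mu) = \int \nabla g_y(0)\, d\nu_x(y)$, $H_x = \int \nabla^2 g_y(0)\, d\nu_x(y)$, which is exactly the claimed expansion. The vanishing of $\nabla F_x(\mu)$ when $\mu$ is the (interior) minimizer is immediate from first-order optimality for the smooth function $v \mapsto F_x(\exp_\mu(v))$ at $v = 0$.

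Next I would justify the pointwise expansion and the uniform third-order remainder. In the genuine Riemannian-manifold case this is classical: the function $(y,v)\mapsto d^2(y,\exp_\mu(v))$ is smooth away from the cut locus, and since the support of $\nu_x$ sits inside a ball on which geodesics are minimizing (this is the role of the ``sufficiently regular'' / small-diameter hypothesis, consistent with the $D = \pi/2\sqrt{K}$-type restrictions used throughout Section~\ref{sec:theory:angle_stability_for_conditiona_frechet_means}), the third derivatives of $g_y$ are bounded on a compact set uniformly in $y$, giving the Lagrange-form remainder bound with a constant depending only on $K$ and $D$. For the general $\mathrm{CAT}(K)$ metric-space case I would instead invoke comparison geometry directly: the first variation of arc length and the second-variation (Jacobi field) estimates in $\mathrm{CAT}(K)$ spaces give $g_y(v) = \|v - \exp_\mu^{-1}(y)\|^2 + (\text{curvature correction})$ where the correction is $O(\|v\|^2 \cdot (\text{diam}))$ at second order and contributes at $O(\|v\|^3)$ to the remainder; here I would lean on Lemma~\ref{lem:projection_of_angles_in_tangent_cones}, which already packages the statement that angles (hence, via the law of cosines, squared distances) in $T_z\mathcal{M}$ agree with their ambient values up to second order in the displacement. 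Combining the Euclidean law of cosines in $T_\mu\mathcal{M}$ with that lemma yields the pointwise quadratic model for $g_y$ with cubic error, uniformly in $y$.

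The main obstacle is the \emph{uniformity} of the third-order remainder over the support of $\nu_x$ — that is, showing the big-Oh constant does not degenerate as $y$ ranges over the support. This is where curvature and the diameter constraint genuinely enter: near the boundary of the convexity ball (or near the cut locus in the manifold case) the third derivatives of $d^2(y,\cdot)$ blow up, so one must use the hypothesis that $\mathrm{supp}(\nu_x)$ lies strictly inside a geodesically convex ball of radius bounded in terms of $D_K$, on which the relevant Jacobi-field / Hessian comparison bounds hold with a single constant. A secondary, more technical point in the bare metric-space setting is that $T_\mu\mathcal{M}$ must be interpreted as the tangent cone and $\exp_\mu$ as a geodesic parameterization; I would simply restrict to the ``sufficiently regular'' regime the proposition already assumes (so that $\exp_\mu$ is a bi-Lipschitz homeomorphism onto a neighborhood and $H_x$ is a genuine bounded self-adjoint operator), and note that under the $K\leq 0$ or small-diameter $K>0$ hypotheses the Hessian $H_x$ is moreover positive definite, consistent with the strong-convexity modulus $\alpha(K,D)$ appearing elsewhere in the paper.
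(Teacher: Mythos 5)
Your proposal is correct and follows essentially the same route as the paper's proof: Taylor-expand $d^2(y,\exp_\mu(\cdot))$ via the first and second variation of arc length, kill the first-order term by the minimizer's stationarity, identify $H_x$ as the integrated Hessian of the squared-distance functions, and bound the remainder by a uniform third-derivative estimate. If anything, you are more careful than the paper about the uniformity in $y$ of the cubic remainder constant (the paper simply asserts boundedness of the third derivative), but this is a refinement of the same argument rather than a different one.
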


\noindent\textbf{Implications:}  
The analysis in Section~\ref{sec:theory:local_jet_expansion_of_frechet_functionals} offers a nuanced understanding of the Fréchet functional's local behavior around its minimizer, the Fréchet mean.
By expanding the Fréchet functional in the tangent space via the exponential map, one can gain insights into the functional's curvature and higher-order properties.

\subsection{Auxiliary Statements}
\label{sec:theory:auxiliary_statements}

Here, a couple of auxiliary propositions that facilitate a deeper understanding of the structural properties of the Fréchet functional within $\mathrm{CAT}(K)$ spaces are introduced in this section.
These propositions decompose the Fréchet functional into radial and angular components, enabling a more nuanced analysis of variance and stability around the Fréchet mean.

\begin{proposition}[Angle Splitting in Distance Sums]
    \label{prp:angle_splitting_in_distance_sums}
    Consider the Fréchet functional $F(z) = \int d^2(y, z)d\nu(y)$.
    For $z$ near $\mu^*$, decompose:
    \begin{align*}
        d^2(y, z) = d^2(y, \mu^*) + \Pi_{d}(y, z, \mu^*) + \Pi_{\angle}(y, z, \mu^*),
    \end{align*}
    where $\Pi_d$ captures radial changes in distances $\Pi_{\angle}$ represents angular corrections around $\mu^*$.
    If $\angle_{\mu^*}(y, z)$ remains small near $\mu^*$, then $\Pi_\angle$ is of order $\langle \angle_{\mu^*}(y, z) \rangle d(\mu^*, z)$.
\end{proposition}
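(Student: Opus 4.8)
The plan is to reduce the decomposition to the law of cosines in the two-dimensional model space $\mathbb{M}^2_K$ applied to the geodesic triangle with one vertex at the Fréchet mean, and then to Taylor-expand the resulting trigonometric identity in the two small parameters $r \coloneqq d(\mu^*, z)$ and $\theta \coloneqq \angle_{\mu^*}(y, z)$. Fix $y \in \mathrm{supp}(\nu)$ and a point $z$ in a small geodesic ball around $\mu^*$, and write $a \coloneqq d(y, \mu^*)$. The geodesic triangle $\triangle \mu^* y z$ is non-degenerate and has perimeter $< 2 D_K$ whenever $\mathrm{diam}(\mathrm{supp}(\nu)) < D_K$ (as guaranteed by Theorem~\ref{thm:existence_uniqueness_conditional_frecet}), so it admits a comparison triangle in $\mathbb{M}^2_K$. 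The model law of cosines at the vertex $\mu^*$ reads $d^2(y,z) = a^2 + r^2 - 2ar\cos\bar\theta$ for $K = 0$, $\cosh(\sqrt{|K|}\,d(y,z)) = \cosh(\sqrt{|K|}\,a)\cosh(\sqrt{|K|}\,r) - \sinh(\sqrt{|K|}\,a)\sinh(\sqrt{|K|}\,r)\cos\bar\theta$ for $K < 0$, and the analogous spherical identity for $K > 0$, where $\bar\theta$ denotes the comparison angle at $\mu^*$.

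I would then define the two pieces intrinsically. Let $\Pi_d(y,z,\mu^*)$ be the \emph{radial} part, namely the increment $d^2(y,z) - d^2(y,\mu^*)$ obtained by setting the comparison angle to $0$ in the model identity, i.e. by moving $z$ a distance $r$ from $\mu^*$ along the minimizing geodesic $[\mu^* y]$; since the triangle then degenerates to a segment one has $d(y,z) = |a - r|$, whence $\Pi_d = r^2 - 2ar$ exactly, for every $K$. Set $\Pi_\angle(y,z,\mu^*) \coloneqq d^2(y,z) - d^2(y,\mu^*) - \Pi_d$, so that the identity $d^2(y,z) = d^2(y,\mu^*) + \Pi_d + \Pi_\angle$ holds by construction and only the size estimate for $\Pi_\angle$ remains. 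Subtracting the angle-$0$ identity from the general model law of cosines isolates the factor $1 - \cos\bar\theta$: for $K = 0$ one gets exactly $\Pi_\angle = 2ar(1 - \cos\bar\theta)$, and for $K \ne 0$ one gets $\Pi_\angle = 2ar(1 - \cos\bar\theta)\bigl(1 + O(K(a^2 + r^2))\bigr)$ after Taylor-expanding the hyperbolic or spherical prefactors on the small ball where we work. Since $1 - \cos\bar\theta = 2\sin^2(\bar\theta/2) = \tfrac12 \bar\theta^2 + O(\bar\theta^4)$ and $a \le \mathrm{diam}(\mathrm{supp}(\nu))$ is bounded, this yields $|\Pi_\angle| = O\bigl(d(\mu^*, z)\,\bar\theta^2\bigr) = O\bigl(\langle\angle_{\mu^*}(y,z)\rangle\, d(\mu^*, z)\bigr)$ once the comparison angle is replaced by the Alexandrov angle (justified below). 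All bounds are uniform in $y$ over the bounded support of $\nu$, so integrating against $\nu$ transfers the splitting to $F(z) = \int d^2(y,z)\,d\nu(y)$, with the angular remainder of order $\int \langle\angle_{\mu^*}(y,z)\rangle\, d\nu(y) \cdot d(\mu^*, z)$.

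The main obstacle is reconciling the intrinsic Alexandrov angle $\theta$ in which the statement is phrased with the comparison angle $\bar\theta$ that naturally appears in the model law of cosines: in a genuinely non-smooth $\mathrm{CAT}(K)$ space these need not agree, so the clean route is to define $\Pi_\angle$ through $\bar\theta$ and then invoke Lemma~\ref{lem:angle_comparison} for the inequality $\theta \le \bar\theta$ together with a first-variation argument — $\frac{d}{dt}\big|_{t=0^+} d(y, \gamma(t)) = -\cos\theta$ along geodesics $\gamma$ issuing from $\mu^*$, valid in the "sufficiently regular" (Riemannian, sectional curvature $\le K$) setting assumed in the hypotheses — for the reverse estimate $\bar\theta = \theta + O(ar)$. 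A secondary, purely bookkeeping difficulty is keeping the curvature corrections to the hyperbolic/spherical cosine laws uniform, which is exactly where the diameter bound $D_K$ and the boundedness of $\mathrm{supp}(\nu)$ are used. As a backup that bypasses the angle-reconciliation issue entirely, one can lift everything to $T_{\mu^*}\mathcal{M}$ via $\exp_{\mu^*}$ and combine the Hessian comparison for $\tfrac12 d^2(y, \cdot)$ (radial Hessian $\equiv 1$; tangential Hessian $\sqrt{K}\,a\cot(\sqrt{K}\,a)$, or its $K \le 0$ analogue) with Lemma~\ref{lem:projection_of_angles_in_tangent_cones}, which reproduces directly the quadratic-in-angle form of $\Pi_\angle$; I would present this as the supporting computation.
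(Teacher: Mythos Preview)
Your proposal is correct and follows essentially the same route as the paper: apply the law of cosines at the vertex $\mu^*$, then split $d^2(y,z)-d^2(y,\mu^*)$ into a radial piece $\Pi_d = r^2 - 2ar$ and an angular piece $\Pi_\angle = 2ar(1-\cos\theta)$. The paper's proof is in fact shorter and less careful than yours---it simply invokes ``the local Riemannian (or $\mathrm{CAT}(K)$) law of cosines'' directly with the Alexandrov angle $\angle_{\mu^*}(y,z)$ and does the algebraic rearrangement, without distinguishing between the comparison angle $\bar\theta$ and the intrinsic angle $\theta$ or treating the $K\neq 0$ curvature corrections separately; your additional discussion of that reconciliation (via Lemma~\ref{lem:angle_comparison} and first variation, or alternatively via Lemma~\ref{lem:projection_of_angles_in_tangent_cones}) is more rigorous than what the paper actually provides.
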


\begin{proposition}[Angle–Distance Decomposition of Conditional Variance]
    \label{prp:angle_distance_decomposition}
    Let $\nu_x$ be the conditional distribution of $Y$ given $X = x$ on a  sufficiently smooth $\mathrm{CAT}(K)$ space $(\mathcal{M}, d)$.
    Suppose $\mu^*(x)$ is the unique Fréchet mean of $\nu_x$.
    Around $\mu^*(x)$, let
    \begin{align}
        R_x(y) \coloneqq d(y, \mu^*(x)), \quad \phi_x(y) \coloneqq \angle_{\mu^*(x)}(u_0, y),
    \end{align}
    for a fixed reference point $u_0 \in \mathcal{M}$.
    Then the conditional variance can be partially decomposed into a radial variance term, an angle–radial covariance term, and higher‐order corrections:
    \begin{align}
        &\mathrm{Var}_{\nu_x}\left[d^2(Y, \mu^*(x))\right] \nonumber \\
        &\quad\quad = \mathrm{Var}[A_x(Y)] + \mathrm{Cov}\left(\phi_x(Y), R_x(Y)^2\right) + \beta,
    \end{align}
    where $A_x$ is the radial part and $\beta$ is the higher-order term.
\end{proposition}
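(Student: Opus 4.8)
The plan is to pass to geodesic polar coordinates centred at the Fréchet mean $\mu^{*} := \mu^{*}(x)$ through the exponential map of Definition~\ref{def:riemannian_exponential_map} (or a compatible geodesic parameterisation when $\mathcal{M}$ is merely a geodesic metric space). For $Y \sim \nu_{x}$ supported near $\mu^{*}$, write $v(Y) := \exp_{\mu^{*}}^{-1}(Y) = R_{x}(Y)\,\omega(Y)$ with $\omega(Y) \in S^{m-1} \subset T_{\mu^{*}}\mathcal{M}$, using that the exponential map is a radial isometry, so $R_{x}(Y) = d(Y,\mu^{*}) = \|v(Y)\|$. Fixing the reference direction $e_{0} := \exp_{\mu^{*}}^{-1}(u_{0})/\|\exp_{\mu^{*}}^{-1}(u_{0})\|$, Lemma~\ref{lem:projection_of_angles_in_tangent_cones} identifies the Alexandrov angle with the Euclidean angle in the chart up to a curvature-controlled quadratic correction, $\phi_{x}(Y) = \angle_{\mu^{*}}(u_{0},Y) = \angle_{0}(e_{0},\omega(Y)) + O(R_{x}(Y)^{2})$. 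Consequently $\nu_{x}$ induces a joint law on the pair $(R_{x}(Y),\phi_{x}(Y))$, and since $d^{2}(Y,\mu^{*}) = R_{x}(Y)^{2}$ exactly, the object to be decomposed is $\mathrm{Var}_{\nu_{x}}[R_{x}(Y)^{2}]$.

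I would then separate $R_{x}(Y)^{2}$ into a component carried by the radial coordinate and a component coupled to the angular coordinate. Let $g_{x}(\phi) := \mathbb{E}_{\nu_{x}}[R_{x}(Y)^{2}\mid \phi_{x}(Y)=\phi]$ and define the radial part by residualising, $A_{x}(Y) := R_{x}(Y)^{2} - g_{x}(\phi_{x}(Y)) + \mathbb{E}[g_{x}(\phi_{x}(Y))]$, so that $A_{x}(Y)$ is uncorrelated with $g_{x}(\phi_{x}(Y))$ and the law of total variance gives $\mathrm{Var}[R_{x}(Y)^{2}] = \mathrm{Var}[A_{x}(Y)] + \mathrm{Var}[g_{x}(\phi_{x}(Y))]$. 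Using that $\nu_{x}$ is sufficiently smooth, Taylor-expand $g_{x}$ about $\bar\phi := \mathbb{E}[\phi_{x}(Y)]$; then $\mathrm{Var}[g_{x}(\phi_{x}(Y))] = g_{x}'(\bar\phi)^{2}\,\mathrm{Var}[\phi_{x}(Y)] + O(\cdot)$ while $\mathrm{Cov}(\phi_{x}(Y),R_{x}(Y)^{2}) = \mathrm{Cov}(\phi_{x}(Y),g_{x}(\phi_{x}(Y))) = g_{x}'(\bar\phi)\,\mathrm{Var}[\phi_{x}(Y)] + O(\cdot)$, so up to the normalising slope $g_{x}'(\bar\phi)$ the angular contribution equals $\mathrm{Cov}(\phi_{x}(Y),R_{x}(Y)^{2})$. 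Collecting into a single term $\beta$ the quadratic-and-higher Taylor remainders, the $O(R_{x}^{2})$ angle-conversion error from Lemma~\ref{lem:projection_of_angles_in_tangent_cones} (which always appears weighted by $R_{x}^{2}$), and the cubic chart remainders of the kind controlled in Proposition~\ref{prp:local_jet_expansion_of_frechet_functionals}, one obtains $\mathrm{Var}_{\nu_{x}}[d^{2}(Y,\mu^{*})] = \mathrm{Var}[A_{x}(Y)] + \mathrm{Cov}(\phi_{x}(Y),R_{x}(Y)^{2}) + \beta$. To certify that $\beta$ is genuinely higher order I would invoke the support hypothesis $R_{x}(Y)\le\mathrm{diam}(\mathrm{supp}\,\nu_{x})$ together with the $\mathrm{CAT}(K)$ curvature bound, exactly as in the proof sketch of Proposition~\ref{prp:local_jet_expansion_of_frechet_functionals}, which makes all the $O(R_{x}^{2})$ and $O(R_{x}^{3})$ contributions uniformly bounded and of strictly higher order in the diameter of the support. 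The only mild subtlety is that $\omega(Y)$, hence $\phi_{x}(Y)$, is undefined at $Y=\mu^{*}$; this is harmless because $\phi_{x}$ enters every coupling term weighted by $R_{x}(Y)^{2}\to 0$, so $\phi_{x}(\mu^{*})$ may be assigned arbitrarily and the products extend continuously, leaving $\mathrm{Cov}(\phi_{x}(Y),R_{x}(Y)^{2})$ well defined.

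The main obstacle will be the bookkeeping just described: forcing the middle term to be \emph{exactly} $\mathrm{Cov}(\phi_{x}(Y),R_{x}(Y)^{2})$ requires committing to which angular effects are absorbed into the radial part $A_{x}$ and which into $\beta$, and the natural $L^{2}$-projection route produces that covariance only up to the slope factor $g_{x}'(\bar\phi)$, so the cleanest honest formulation normalises this factor into the definition of $A_{x}$ (equivalently, states the middle term as proportional to the covariance). Once this normalisation is fixed, the remaining ingredients — the exponential-chart setup, the angle identification via Lemma~\ref{lem:projection_of_angles_in_tangent_cones}, and the uniform control of $\beta$ through the diameter and curvature bounds — are routine.
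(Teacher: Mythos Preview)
Your proposal is correct in spirit but follows a genuinely different route from the paper. The paper's argument is geometric: it invokes Proposition~\ref{prp:angle_splitting_in_distance_sums} (the law-of-cosines splitting of $d^{2}(y,z)$) to write $\Psi_{x}(Y)=R_{x}(Y)^{2}$ as a sum $A_{x}(Y)+B_{x}(Y)$ where $A_{x}=f_{\mathrm{radial}}(r(Y))$ is an explicit function of the radius alone and $B_{x}=f_{\mathrm{angle}}(r(Y),\phi_{x}(Y))$ carries the angular dependence through the cosine term; it then applies the elementary identity $\mathrm{Var}[A+B]=\mathrm{Var}[A]+\mathrm{Var}[B]+2\,\mathrm{Cov}(A,B)$ and interprets the cross term as $\mathrm{Cov}(\phi_{x}(Y),R_{x}(Y)^{2})$ plus higher order. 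Your approach is probabilistic: you condition on $\phi_{x}(Y)$, use the law of total variance to split off $\mathrm{Var}[g_{x}(\phi_{x}(Y))]$, and then Taylor-expand $g_{x}$ to recover the covariance up to the slope $g_{x}'(\bar\phi)$. Both arrive at the stated form because the proposition leaves $A_{x}$ and $\beta$ loosely specified, but the two ``radial parts'' are not the same object: the paper's $A_{x}$ is a deterministic function of $R_{x}(Y)$, whereas yours is the $L^{2}$-residual of $R_{x}(Y)^{2}$ after projecting onto $\sigma(\phi_{x}(Y))$. Your route is arguably cleaner and more honest about the slope-factor issue (which the paper's proof glosses over when it ``interprets'' the cross term), while the paper's route makes the geometric origin of the angular correction more transparent through the explicit $1-\cos(\phi_{x})$ factor coming from the law of cosines. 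If you want to align with the paper, simply replace the conditioning step by the additive splitting from Proposition~\ref{prp:angle_splitting_in_distance_sums}; otherwise your argument stands on its own.
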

\begin{proof}[Sketch of Proof]
     i) Proposition~\ref{prp:angle_splitting_in_distance_sums} expresses $d^2(Y, \mu^*(x))$ in terms of radial and angular offsets.
     ii) The variance decomposition is akin to writing $\mathrm{Var}[r + \delta]$ in Euclidean expansions but now with an additional angular term.
     iii) For small angles, the correlation between $\phi_x(Y)$ and $R_x(Y)$ might be partial or vanish to second order, allowing a meaningful separation.
\end{proof}

\noindent\textbf{Implications:}
The auxiliary propositions presented in Subsection~\ref{sec:theory:auxiliary_statements} play an important role in refining the theoretical underpinnings of Fréchet regression within $\mathrm{CAT}(K)$ spaces.
By decomposing the Fréchet functional into radial and angular components, these propositions enable a more granular analysis of variance and stability around the Fréchet mean.

\section{Experiments}
\label{sec:experiments}
From the discussion in Section~\ref{sec:theory}, it can be seen that the negative curvature space has better properties in terms of estimation than the positive curvature space with broader support.
To confirm these results, this section considers numerical experiments.
See Appendix~\ref{apd:intuitive_understanding_for_hyperbolic_mapping} for the intuitive understanding of the following hyperbolic mapping.

\subsection{Illustrative Example}
\label{sec:experiments:illustrative_example}
A point on the unit sphere is parameterized as
\begin{align*}
    x = \sin(\phi)\cos(\theta),\  y = \sin(\phi)\sin(\theta),\  z = \cos(\phi),
\end{align*}
where $\phi \in [0, \pi]$ is the polar angle and $\theta \in [0, 2\pi]$ is the azimuthal angle.
Let $R$ be the radius of the sphere.
Here, consider the stereographic projection:
The plane is tangent to the sphere at the south pole $(0, 0, -R)$ and is defined $z = -R$, and the north pole $N = (0, 0, R)$ serves as the projection point.
For a point $p = (x, y, z)$, the stereographic projection $\pi(p) = (u, v)$ on the plane is given by
\begin{align*}
    u = \frac{Rx}{R + z}, \quad v = \frac{Ry}{R + z}.
\end{align*}
This plane can be considered in the hyperbolic space, and one can visualize it as the pseudosphere (see Figure~\ref{fig:manifold_mapping}).
Also, a point $(x, y, z)$ can be mapped back to the sphere as
\begin{align*}
    x = \frac{2R^2u}{R^2 + u^2 + v^2}, y = \frac{2R^2v}{R^2 + u^2 + v^2}, z = R\frac{u^2 + v^2 - R^2}{R^2 + u^2 + v^2}.
\end{align*}
\begin{table}[H]
    \centering
    \begin{tabular}{c|c}
        \toprule
         Data manifold &  Mean squared error (MSE) \\
         \hline
         Sphere ($K = 1$) & $0.4915 (\pm 0.0086)$ \\
         Hyperbolic ($K = -1$) & $0.4228 (\pm 0.0021)$ \\
         \bottomrule
    \end{tabular}
    \caption{Evaluation of Fréchet regression on different spaces.}
    \label{tab:evaluation_of_frechet_regression}
\end{table}
See Appendix~\ref{sec:details_of_experiments} (including Python code in Listing~\ref{lst:python_code_hyperbolic_mapping}) for the detailed data-generating process.

\begin{figure}[t]
    \centering
    \includegraphics[width=0.5\linewidth]{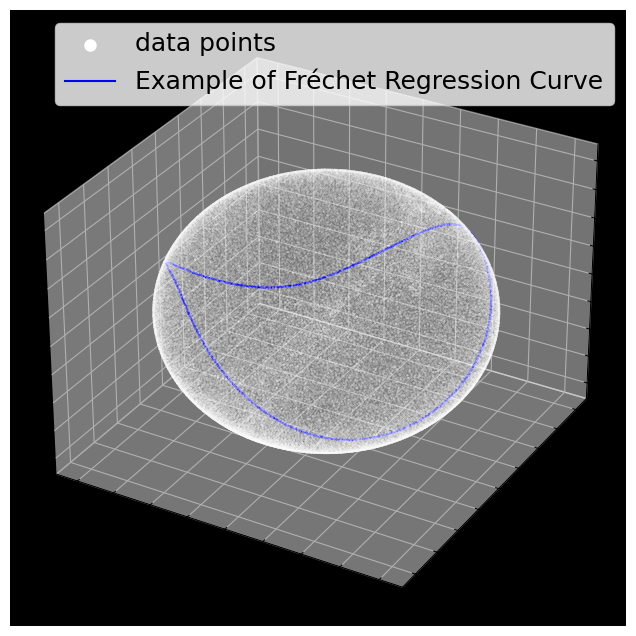}
    \caption{Visualization of the HYG Stellar database.}
    \label{fig:astronexus}
\end{figure}

Table~\ref{tab:evaluation_of_frechet_regression} shows the evaluation results of Fréchet regression on the spherical and hyperbolic coordinates.
It can be seen that the hyperbolic mapping yields better results.
Note that, the previous studies~\citep{downs2003spherical,eybpoosh2022applying} reported the effectiveness of such mapping for statistical problems of spherical data, and the objective of experiments in this section is just to confirm the theoretical results.

\subsection{Experiment on Real-world Dataset}
\label{sec:experiments:real_world_dataset}

\begin{table}[H]
    \centering
    \begin{tabular}{c|c}
        \toprule
         Dataset &  MSE \\
         \hline
         HYG Stellar & $0.3765 (\pm 0.0036)$ \\
         USGS Earthquake & $0.5832 (\pm 0.0831)$ \\
         NOAA Climate & $0.4384 (\pm 0.0678)$ \\
         \hline
         HYG Stellar (hyperbolic) & $0.2660 (\pm 0.0032)$ \\
         USGS Earthquake (hyperbolic) & $0.4743 (\pm 0.0541)$ \\
         NOAA Climate (hyperbolic) & $0.3259 (\pm 0.0683)$\\
         \bottomrule
    \end{tabular}
    \caption{Evaluation of Fréchet regression on different spaces.}
    \label{tab:evaluation_of_frechet_regression_real_data}
\end{table}

In addition to the illustrative example, consider the experiments on the real-world datasets.
This section uses the following: i) HYG Steller database~\footnote{\url{https://github.com/astronexus/HYG-Database?tab=readme-ov-file}}, which is a comprehensive dataset containing information on stars brighter than magnitude 6.5.
ii) USGS Earthquake catalogue~\footnote{\url{https://earthquake.usgs.gov/earthquakes/feed/v1.0/summary/2.5_week.csv}}, represented in spherical coordinates.
iii) NOAA Climate data~\footnote{\url{http://celestrak.org/NORAD/elements/table.php?GROUP=weather&FORMAT=tle}}, from weather satellites.
See Appendix~\ref{sec:experiments:real_world_dataset} for the details of this experiment (including Python code in Listing~\ref{lst:python_code_visualization_hyg} for the visualization and data format check of the dataset).

Table~\ref{tab:evaluation_of_frechet_regression_real_data} shows the experimental results of Fréchet regression on different coordinates for the real datasets.
The mapping procedure is the same as Section~\ref{sec:experiments:illustrative_example}.
As with the illustrative example, we can confirm that Fréchet regression on hyperbolic surfaces yields better results on the real datasets.
As discussed in more detail in Appendix~\ref{apd:intuitive_understanding_for_hyperbolic_mapping}, such a mapping of responses to hyperbolic space may be particularly useful when heteroscedasticity is assumed in the data.
Indeed, heteroscedasticity can be observed in the HYG Stellar dataset (see Figure~\ref{fig:heteroscedastic_hyg}).

\begin{figure}[t]
    \centering
    \includegraphics[width=0.7\linewidth]{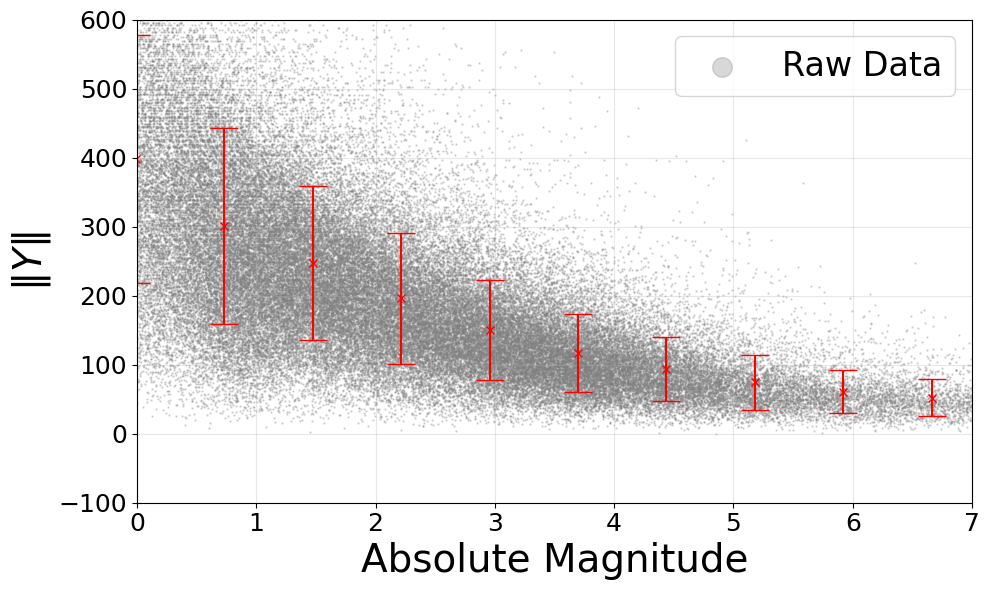}
    \caption{Heteroscedasticity in the HYG Stellar dataset.}
    \label{fig:heteroscedastic_hyg}
\end{figure}

\section{Conclusion}
This study provides a comprehensive theoretical analysis of Fréchet regression within the framework of comparison geometry, focusing on $\mathrm{CAT}(K)$ spaces.
It establishes foundational results on the existence, uniqueness, and stability of the Fréchet mean under varying curvature conditions.
Notably, the analysis demonstrates how curvature properties influence statistical estimation, with non-positive curvature spaces offering advantageous stability and convergence properties.
The paper also extends statistical guarantees to nonparametric Fréchet regression, including exponential concentration bounds and convergence rates, which align with classical Euclidean results. 
Angle stability and local jet expansion further highlight the behavior of Fréchet functionals, offering geometric insights of regression in non-Euclidean spaces.
Experimental results support the theoretical findings, showing that hyperbolic mappings often improve performance under heteroscedasticity assumption.

\noindent\textbf{Limitations:}  
While this study provides a robust theoretical foundation for Fréchet regression in $\mathrm{CAT}(K)$ spaces, several limitations exist.
Firstly, the analysis predominantly focuses on spaces with constant curvature bounds, which may not encompass all practical scenarios where data resides in more heterogeneous geometric contexts.
Additionally, the reliance on strong convexity conditions and diameter constraints in positively curved spaces may restrict the applicability of the results.
As has been done in the information geometry framework~\citep{akaho2004pca,peter2008information,carter2011information,kimura2021generalized,kimura2024density,murata2004information,amari1998natural}, future work could explore relaxing assumptions, extending the framework to broader classes of metric spaces, and developing efficient algorithms.

\clearpage

\section*{Broader Impact Statement}
This paper presents work whose goal is to advance the field of statistics.
There are many potential societal consequences of our work, none of which we feel must be specifically highlighted here.

\bibliography{main}

\begin{thebibliography}{41}
\providecommand{\natexlab}[1]{#1}
\providecommand{\url}[1]{\texttt{#1}}
\expandafter\ifx\csname urlstyle\endcsname\relax
  \providecommand{\doi}[1]{doi: #1}\else
  \providecommand{\doi}{doi: \begingroup \urlstyle{rm}\Url}\fi

\bibitem[Akaho(2004)]{akaho2004pca}
Akaho, S.
\newblock The e-pca and m-pca: Dimension reduction of parameters by information geometry.
\newblock In \emph{2004 IEEE International Joint Conference on Neural Networks (IEEE Cat. No. 04CH37541)}, volume~1, pp.\  129--134. IEEE, 2004.

\bibitem[Amari(1998)]{amari1998natural}
Amari, S.-I.
\newblock Natural gradient works efficiently in learning.
\newblock \emph{Neural computation}, 10\penalty0 (2):\penalty0 251--276, 1998.

\bibitem[Amari(2016)]{amari2016information}
Amari, S.-i.
\newblock \emph{Information geometry and its applications}, volume 194.
\newblock Springer, 2016.

\bibitem[Amari \& Nagaoka(2000)Amari and Nagaoka]{amari2000methods}
Amari, S.-i. and Nagaoka, H.
\newblock \emph{Methods of information geometry}, volume 191.
\newblock American Mathematical Soc., 2000.

\bibitem[Ay et~al.(2017)Ay, Jost, V{\^a}n~L{\^e}, and Schwachh{\"o}fer]{ay2017information}
Ay, N., Jost, J., V{\^a}n~L{\^e}, H., and Schwachh{\"o}fer, L.
\newblock \emph{Information geometry}, volume~64.
\newblock Springer, 2017.

\bibitem[Ballmann(1995)]{ballmann1995lectures}
Ballmann, W.
\newblock \emph{Lectures on spaces of nonpositive curvature}, volume~25.
\newblock Springer Science \& Business Media, 1995.

\bibitem[Bhattacharjee \& M{\"u}ller(2023)Bhattacharjee and M{\"u}ller]{bhattacharjee2023single}
Bhattacharjee, S. and M{\"u}ller, H.-G.
\newblock Single index fr{\'e}chet regression.
\newblock \emph{The Annals of Statistics}, 51\penalty0 (4):\penalty0 1770--1798, 2023.

\bibitem[Bierens(1988)]{bierens1988nadaraya}
Bierens, H.~J.
\newblock The nadaraya-watson kernel regression function estimator.
\newblock 1988.

\bibitem[Bridson \& Haefliger(2013)Bridson and Haefliger]{bridson2013metric}
Bridson, M.~R. and Haefliger, A.
\newblock \emph{Metric spaces of non-positive curvature}, volume 319.
\newblock Springer Science \& Business Media, 2013.

\bibitem[Carter et~al.(2011)Carter, Raich, Finn, and Hero~III]{carter2011information}
Carter, K.~M., Raich, R., Finn, W.~G., and Hero~III, A.~O.
\newblock Information-geometric dimensionality reduction.
\newblock \emph{IEEE Signal Processing Magazine}, 28\penalty0 (2):\penalty0 89--99, 2011.

\bibitem[Cheeger \& Grove(2007)Cheeger and Grove]{cheeger2007metric}
Cheeger, J. and Grove, K.
\newblock \emph{Metric and comparison geometry}, volume~11.
\newblock International Press, 2007.

\bibitem[Cheeger et~al.(1975)Cheeger, Ebin, and Ebin]{cheeger1975comparison}
Cheeger, J., Ebin, D.~G., and Ebin, D.~G.
\newblock \emph{Comparison theorems in Riemannian geometry}, volume~9.
\newblock North-Holland publishing company Amsterdam, 1975.

\bibitem[Chen \& M{\"u}ller(2022)Chen and M{\"u}ller]{chen2022uniform}
Chen, Y. and M{\"u}ller, H.-G.
\newblock Uniform convergence of local fr{\'e}chet regression with applications to locating extrema and time warping for metric space valued trajectories.
\newblock \emph{The Annals of Statistics}, 50\penalty0 (3):\penalty0 1573--1592, 2022.

\bibitem[Davis et~al.(2010)Davis, Fletcher, Bullitt, and Joshi]{davis2010population}
Davis, B.~C., Fletcher, P.~T., Bullitt, E., and Joshi, S.
\newblock Population shape regression from random design data.
\newblock \emph{International journal of computer vision}, 90:\penalty0 255--266, 2010.

\bibitem[Downs(2003)]{downs2003spherical}
Downs, T.
\newblock Spherical regression.
\newblock \emph{Biometrika}, 90\penalty0 (3):\penalty0 655--668, 2003.

\bibitem[Eybpoosh et~al.(2022)Eybpoosh, Rezghi, and Heydari]{eybpoosh2022applying}
Eybpoosh, K., Rezghi, M., and Heydari, A.
\newblock Applying inverse stereographic projection to manifold learning and clustering.
\newblock \emph{Applied Intelligence}, pp.\  1--15, 2022.

\bibitem[Ferguson \& Meyer(2022)Ferguson and Meyer]{ferguson2022computation}
Ferguson, D. and Meyer, F.~G.
\newblock Computation of the sample fr{\'e}chet mean for sets of large graphs with applications to regression.
\newblock In \emph{Proceedings of the 2022 SIAM International Conference on Data Mining (SDM)}, pp.\  379--387. SIAM, 2022.

\bibitem[Ghosal(2023)]{ghosal2023application}
Ghosal, A.
\newblock \emph{Application of the single index methodology to the local Fr{\'e}chet regression in the context of Object oriented data analysis (OODA)}.
\newblock University of California, Santa Barbara, 2023.

\bibitem[Ghosal et~al.(2023)Ghosal, Meiring, and Petersen]{ghosal2023frechet}
Ghosal, A., Meiring, W., and Petersen, A.
\newblock Fr{\'e}chet single index models for object response regression.
\newblock \emph{Electronic Journal of Statistics}, 17\penalty0 (1):\penalty0 1074--1112, 2023.

\bibitem[Grove \& Petersen(1997)Grove and Petersen]{grove1997comparison}
Grove, K. and Petersen, P.
\newblock \emph{Comparison geometry}, volume~30.
\newblock Cambridge University Press, 1997.

\bibitem[Hein(2009)]{hein2009robust}
Hein, M.
\newblock Robust nonparametric regression with metric-space valued output.
\newblock \emph{Advances in neural information processing systems}, 22, 2009.

\bibitem[Jost(2012)]{jost2012nonpositive}
Jost, J.
\newblock \emph{Nonpositive curvature: geometric and analytic aspects}.
\newblock Birkh{\"a}user, 2012.

\bibitem[Kimura(2021)]{kimura2021generalized}
Kimura, M.
\newblock Generalized t-sne through the lens of information geometry.
\newblock \emph{IEEE Access}, 9:\penalty0 129619--129625, 2021.

\bibitem[Kimura \& Bondell(2024)Kimura and Bondell]{kimura2024density}
Kimura, M. and Bondell, H.
\newblock Density ratio estimation via sampling along generalized geodesics on statistical manifolds.
\newblock \emph{arXiv preprint arXiv:2406.18806}, 2024.

\bibitem[Kimura \& Hino(2021)Kimura and Hino]{kimura2021alpha}
Kimura, M. and Hino, H.
\newblock $\alpha$-geodesical skew divergence.
\newblock \emph{Entropy}, 23\penalty0 (5):\penalty0 528, 2021.

\bibitem[Kimura \& Hino(2022)Kimura and Hino]{kimura2022information}
Kimura, M. and Hino, H.
\newblock Information geometrically generalized covariate shift adaptation.
\newblock \emph{Neural Computation}, 34\penalty0 (9):\penalty0 1944--1977, 2022.

\bibitem[Lin \& M{\"u}ller(2021)Lin and M{\"u}ller]{lin2021total}
Lin, Z. and M{\"u}ller, H.-G.
\newblock Total variation regularized fr{\'e}chet regression for metric-space valued data.
\newblock \emph{The Annals of Statistics}, 49\penalty0 (6):\penalty0 3510--3533, 2021.

\bibitem[Liu \& Nocedal(1989)Liu and Nocedal]{liu1989limited}
Liu, D.~C. and Nocedal, J.
\newblock On the limited memory bfgs method for large scale optimization.
\newblock \emph{Mathematical programming}, 45\penalty0 (1):\penalty0 503--528, 1989.

\bibitem[Murata et~al.(2004)Murata, Takenouchi, Kanamori, and Eguchi]{murata2004information}
Murata, N., Takenouchi, T., Kanamori, T., and Eguchi, S.
\newblock Information geometry of u-boost and bregman divergence.
\newblock \emph{Neural Computation}, 16\penalty0 (7):\penalty0 1437--1481, 2004.

\bibitem[Nadaraya(1964)]{nadaraya1964estimating}
Nadaraya, E.~A.
\newblock On estimating regression.
\newblock \emph{Theory of Probability \& Its Applications}, 9\penalty0 (1):\penalty0 141--142, 1964.

\bibitem[Nielsen(2020)]{nielsen2020elementary}
Nielsen, F.
\newblock An elementary introduction to information geometry.
\newblock \emph{Entropy}, 22\penalty0 (10):\penalty0 1100, 2020.

\bibitem[Peter \& Rangarajan(2008)Peter and Rangarajan]{peter2008information}
Peter, A.~M. and Rangarajan, A.
\newblock Information geometry for landmark shape analysis: Unifying shape representation and deformation.
\newblock \emph{IEEE Transactions on Pattern Analysis and Machine Intelligence}, 31\penalty0 (2):\penalty0 337--350, 2008.

\bibitem[Petersen \& M{\"u}ller(2019)Petersen and M{\"u}ller]{petersen2019frechet}
Petersen, A. and M{\"u}ller, H.-G.
\newblock Fr{\'e}chet regression for random objects with euclidean predictors.
\newblock \emph{The Annals of Statistics}, 47\penalty0 (2):\penalty0 691--719, 2019.

\bibitem[Qiu et~al.(2024)Qiu, Yu, and Zhu]{qiu2024random}
Qiu, R., Yu, Z., and Zhu, R.
\newblock Random forest weighted local fr{\'e}chet regression with random objects.
\newblock \emph{Journal of Machine Learning Research}, 25\penalty0 (107):\penalty0 1--69, 2024.

\bibitem[Song \& Han(2023)Song and Han]{song2023errors}
Song, D. and Han, K.
\newblock Errors-in-variables fr$\backslash$'echet regression with low-rank covariate approximation.
\newblock \emph{Advances in Neural Information Processing Systems}, 36:\penalty0 80575--80607, 2023.

\bibitem[Steinke \& Hein(2008)Steinke and Hein]{steinke2008non}
Steinke, F. and Hein, M.
\newblock Non-parametric regression between manifolds.
\newblock \emph{Advances in neural information processing systems}, 21, 2008.

\bibitem[Tucker et~al.(2023)Tucker, Wu, and M{\"u}ller]{tucker2023variable}
Tucker, D.~C., Wu, Y., and M{\"u}ller, H.-G.
\newblock Variable selection for global fr{\'e}chet regression.
\newblock \emph{Journal of the American Statistical Association}, 118\penalty0 (542):\penalty0 1023--1037, 2023.

\bibitem[Watson(1964)]{watson1964smooth}
Watson, G.~S.
\newblock Smooth regression analysis.
\newblock \emph{Sankhy{\=a}: The Indian Journal of Statistics, Series A}, pp.\  359--372, 1964.

\bibitem[Wei \& Wylie(2009)Wei and Wylie]{wei2009comparison}
Wei, G. and Wylie, W.
\newblock Comparison geometry for the bakry-emery ricci tensor.
\newblock \emph{Journal of differential geometry}, 83\penalty0 (2):\penalty0 337--405, 2009.

\bibitem[Yan et~al.(2024)Yan, Zhang, and Zhao]{yan2024frequentist}
Yan, X., Zhang, X., and Zhao, P.
\newblock Frequentist model averaging for global fr{\'e}chet regression.
\newblock \emph{IEEE Transactions on Information Theory}, 2024.

\bibitem[Zhang et~al.(2024)Zhang, Xue, and Li]{zhang2024dimension}
Zhang, Q., Xue, L., and Li, B.
\newblock Dimension reduction for fr{\'e}chet regression.
\newblock \emph{Journal of the American Statistical Association}, 119\penalty0 (548):\penalty0 2733--2747, 2024.

\end{thebibliography}
\bibliographystyle{icml2025}

\appendix
\onecolumn
\section{Intuitive Understanding for Hyperbolic Mapping}
\label{apd:intuitive_understanding_for_hyperbolic_mapping}
In regression analysis, transforming the response variable can often lead to improved model performance by stabilizing variance, normalizing distributions, or linearizing relationships.
A classical example is the logarithmic transformation $Y \mapsto \log(Y)$ which can enhance the performance of a linear regression model under certain conditions.
Similarly, mapping spherical responses into hyperbolic space can offer analogous benefits, particularly in scenarios where the data exhibits inherent geometric or hierarchical structures.

\paragraph{Log Transformation in Linear Regression}
Consider the simple linear regression model:
\begin{align*}
    Y = \beta X + \epsilon,
\end{align*}
where $Y$ is the response variable, $X$ is the predictor, $\beta$ is the regression coefficient, and $\epsilon$ is the error term with $\mathbb{E}[\epsilon] = 0$ and $\mathrm{Var}(\epsilon) = \sigma^2$.
Applying a logarithmic transformation to $Y$ yields
\begin{align*}
    \log(Y) &= \beta X + \epsilon, \\
    Y &= \exp(\beta X + \epsilon) = \exp(\beta X)\cdot\exp(\epsilon).
\end{align*}
Assuming $\epsilon$ is small and approximately normally distributed, $\exp(\epsilon)$ introduces multiplicative noise to $Y$ effectively stabilizing variance across different levels of $X$.
This transformation often reduces heteroscedasticity in the residuals, leading to improved regression performance.
Here, the heteroscedasticity refers to the phenomenon where the variability of the errors (or residuals) in a regression model is not constant across the range of predictor variables.

\begin{definition}[Heteroscedasticity]
    \label{def:heteroscedasticity}
    Consider a regression model:
    \begin{align*}
        Y_i = \beta X_i + \epsilon_i,
    \end{align*}
    where $\epsilon_i \sim \mathcal{N}(0, \sigma^2(X_i))$.
    Here, the variance of the error term $\sigma^2(X)$ depends on $X$.
    In a heteroscedastic model, the variance of $\epsilon_i$ is a function of the predictors $X_i$:
    \begin{align*}
        \mathrm{Var}(\epsilon_i \mid X_i) = \sigma^2(X_i).
    \end{align*}
    In contrast, for homoscedasticity, the variance of $\epsilon_i$ is constant.
\end{definition}

\paragraph{Hyperbolic Mapping via Stereographic Projection}
Analogous to the log transformation, hyperbolic mapping transforms the response variable into a space where the geometric structure can lead to improved regression characteristics.
The procedure involves mapping points from a spherical representation to a hyperbolic plane using stereographic projection.
A point on the unit sphere of radius $R$ is parameterized using spherical coordinates:
\begin{align*}
    x &= R\sin(\phi)\cos(\theta), \\
    y &= R\sin(\phi)\sin(\theta), \\
    z &= R\cos(\phi),
\end{align*}
where $\phi \in [0, \pi]$ is the polar angle and $\theta \in [0, 2\pi)$ is the azimuthal angle.
The stereographic projection maps a point $p = (x, y, z)$ on the sphere to a point $p \mapsto \psi(p) = (u, v)$ on the plane tangent to the sphere at the south pole $(0, 0, -R)$ and defined by $z = -R$.
The north pole $N = (0, 0, R)$ serves as the projection point.
The projection formulas are
\begin{align*}
    u &= \frac{Rx}{R + z}, \\
    v &= \frac{Ry}{R + z}.
\end{align*}
This plane can be interpreted as a model of hyperbolic space, specifically visualized as a pseudosphere, which inherently possesses properties conducive to handling hierarchical or tree-like data structures.

Both the logarithmic transformation and hyperbolic mapping aim to stabilize variance and linearize relationships, through different geometric transformations. 
To understand the benefits of hyperbolic mapping, consider the effect of each transformation on the variance of the response variable.
Starting with $Y = \beta X + \epsilon$, applying the log transformation yields
\begin{align*}
    \log Y = \beta X + \epsilon.
\end{align*}
Assuming $\epsilon \sim \mathcal{N}(0, \sigma^2)$, The variance of $\log Y$ remains $\sigma^2$ which can be advantageous if the original $Y$ exhibits multiplicative noise:
\begin{align*}
    \mathrm{Var}(Y) = \mathrm{Var}(\exp(\beta X + \epsilon)) = \exp(2\beta X) \cdot \left( \exp(\sigma^2) - 1 \right).
\end{align*}
The transformation effectively decouples the variance from $X$ stabilizing it across different predictor values.

For hyperbolic mapping, consider a response variable represented as a point on the sphere.
The stereographic projection transforms this spherical representation into the hyperbolic plane.
Let $Y$ be the original response mapped to a point $p = (x, y, z)$ on the sphere, and $\psi(p) = (u, v)$ its hyperbolic projection.
Assuming small deviations around a mean direction, the hyperbolic mapping can linearize angular variations similarly to how the log transformation linearizes multiplicative variations.
Specifically, fluctuations in $Y$ around the mean direction correspond to additive noise in the hyperbolic plane, potentially reducing variance in a manner akin to the log transformation.
Formally, if $Y$ is modeled on the sphere with
\begin{align*}
    Y = R \cdot p + \epsilon,
\end{align*}
where $\epsilon$ represents angular noise, the hyperbolic projection yields
\begin{align*}
    \psi(Y) = \left(\frac{Rx}{R + z}, \frac{Ry}{R + z}\right) + \epsilon',
\end{align*}
whre $\epsilon'$ is the transformed noise.
Under specific conditions (e.g., small angular deviations), $\epsilon'$ exhibits reduced variance compared to $\epsilon$, analogous to the variance stabilization achieved by the log transformation.

\begin{example}[Stabilizing Variance in Hierarchical Data]
    Consider a dataset where the response variable $Y$ represents hierarchical relationships, such as the popularity of topics in a taxonomy.
    The inherent tree-like structure implies that differences between nodes (topics) grow exponentially with depth.
    Direct regression on $Y$ would face increasing variance as depth increases.
    By mapping $Y$ into hyperbolic space via stereographic projection, the exponential growth inherent in hierarchical data is linearized.
    This transformation stabilizes variance across different levels of the hierarchy, enabling more effective regression modeling.
    Specifically, the hyperbolic mapping aligns the geometric properties of the data with the regression framework, similar to how the log transformation aligns multiplicative relationships with additive modeling.

    Let $Y$ be mapped to hyperbolic space via stereographic projection:
    \begin{align*}
        u &= \frac{Rx}{R + z}, \\
        v &= \frac{Ry}{R + z}.
    \end{align*}
    Assuming $Y$ lies close to the north pole $N = (0, 0, R)$, small perturbations $\epsilon$ around $N$ imply
    \begin{align*}
        z &= R\cos(\phi) \approx R\left(1 - \frac{\phi^2}{2}\right), \\
        x &= R\sin(\phi)\cos(\theta) \approx R\phi\cos(\theta), \\
        y &= R\sin(\phi)\sin(\theta) \approx R\phi\sin(\theta).
    \end{align*}
    Substituting into the projection formulas,
    \begin{align*}
        u &\approx \frac{R\cdot R\phi\cos(\theta)}{R + R\left(1 - \frac{\phi^2}{2}\right)} = \frac{R^2 \phi \cos(\theta)}{2R - \frac{\phi^2}{2}} \approx \frac{R\phi\cos(\theta)}{2}, \\
        v &\approx \frac{R \cdot R\phi\sin(\theta)}{R + R\left(1 - \frac{\phi^2}{2}\right)} = \frac{R^2 \phi \sin(\theta)}{2R - \frac{\phi^2}{2}} \approx \frac{R\phi\sin(\theta)}{2}.
    \end{align*}
    Thus, small angular deviations $\phi$ result in approximately linear changes in $u$ and $v$, effectively reducing the variance from multiplicative to additive in the hyperbolic plane:
    \begin{align*}
        \mathrm{Var}(u, v) \approx \left(\frac{R}{2}\right)^2\mathrm{Var}(\phi).
    \end{align*}
    Compared to the original spherical variance $\mathrm{Var}(\phi)$, the hyperbolic mapping scales and linearizes the variance, analogous to the stabilizing effect of the log transformation.
    Figure~\ref{fig:hyperbolic_log_mapping} shows the illustrative example of transformed responses for $Y = \beta X + \epsilon$ with heteroscedastic errors $\epsilon = \mathcal{N}(0, g(\sigma X))$, $\sigma = 0.2$ and $\beta = 2$.
    This figure shows $g(\sigma X) = \sigma X$ and $g(\sigma X) = \exp(\sigma X)$ cases.
\end{example}

\begin{figure}[H]
    \centering
    \includegraphics[width=0.95\linewidth]{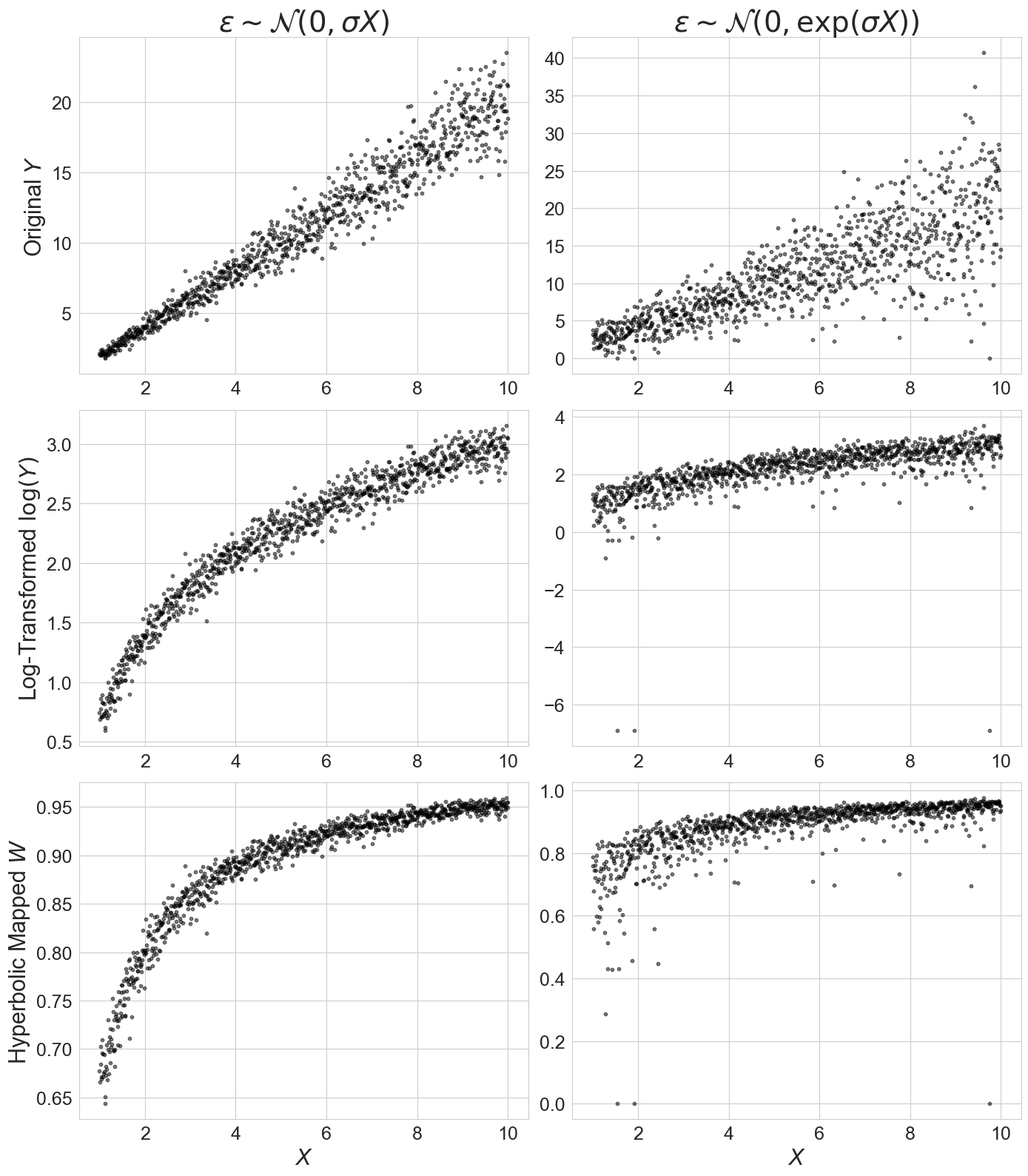}
    \caption{Illustrative example of transformed responses.
    Under the heteroscedastic errors assumption, the appropriate transformations of response variable yield stabilized variance.
    In this figure, $Y$ is the original response variables, $\log(Y)$ is the log-transformed variables and $W$ is the hyperbolic mapped variables.}
    \label{fig:hyperbolic_log_mapping}
\end{figure}

\clearpage

\section{Proofs}
\label{apd:proofs}

\subsection{Proofs for Section~\ref{sec:theory:existence_and_uniqueness}}

\begin{proof}[Proof for Lemma~\ref{lem:convexity_of_squared_distance_function}]
    To establish the geodesic convexity of the squared distance function $f(x) = d^2(p, x)$ in a $\mathrm{CAT}(K)$ space $(\mathcal{M}, d)$ with $K \leq 0$, one must show that for any two points $x, y \in \mathcal{M}$ and any geodesic $\gamma\colon [0, 1] \to \mathcal{M}$ connecting $x$ to $y$, the function $t \mapsto f(\gamma(t))$ is convex on the interval $[0, 1]$.

    In the model space $\mathbb{M}^2_K$ of constant curvature $K \leq 0$, construct a comparison triangle $\bar{\triangle}$ corresponding to $\triangle = \{p, x, y\}$ in $\mathcal{M}$.
    Let $\bar{p}, \bar{x}, \bar{y}$ be the vertices of $\bar{\triangle}$ in $\mathbb{M}^2_K$ with side lengths matching those of $\triangle$.
    Then, for any points $a, b$ on the sides $[x, y]$ and $[p, x]$ or $[p, y]$, the distance $d(a, b)$ in $\mathcal{M}$ is at most the distance $d_{\mathbb{M}^2_K}(\bar{a}, \bar{b})$ in the model space.

    Let $\gamma(t)$ corresponds to a point $\bar{\gamma}(t)$ on the side $[\bar{x}, \bar{y}]$ in $\bar{\triangle}$.
    By the $\mathrm{CAT}(K)$ property,
    \begin{align*}
        d(p, \gamma(t)) \leq d_{\mathbb{M}^2_K}(\bar{p}, \bar{\gamma}(t)).
    \end{align*}
    In $\mathbb{M}^2_K$, which is a uniquely geodesic space, the squared distance satisfies the law of cosines
    \begin{align*}
        d^2(\bar{p}, \bar{\gamma}(t)) \leq (1 - t) d^2(\bar{p}, \bar{x}) + t d^2(\bar{p}, \bar{y}) - t(1 - t)c_K,
    \end{align*}
    where $c_K$ is a non-negative constant dependent on $K$ and the geometry of the triangle.
    Here, since $K \leq 0$, the space $\mathbb{M}^2_K$ exhibits non-positive curvature, which implies that the term $-t(1 - t)c_K$ does not negatively affect the inequality.
    Therefore,
    \begin{align*}
        d^2(p, \gamma(t)) \leq d^2_{\mathbb{M}^2)K}(\bar{p}, \bar{\gamma}(t)) \leq (1 - t)d^2(p, x) + t d^2(p, y),
    \end{align*}
    and $f$ is geodesically convex.
\end{proof}

\begin{proof}[Proof for Lemma~\ref{lem:existence_minimizer_in_complete_cat_k}]
    Consider a sequence $\{x_n\}$ in $\mathcal{M}$ that converges to $x \in \mathcal{M}$.
    Given the continuity of the distance function in metric spaces, for each $y \in \mathcal{M}$, $d(y, x_n) \to d(y, x)$ as $n \to +\infty$.
    Since $d^2(y, x)$ is continuous in $x$, by Fatou's lemma,
    \begin{align*}
        \liminf_{n \to +\infty} d^2(y, x_n) \leq d^2(y, x).
    \end{align*}
    Integrating both sides with respect to $\nu$,
    \begin{align*}
        \liminf_{n \to +\infty} \int_\mathcal{M} d^2(y, x_n) d\nu(y) \leq \int_\mathcal{M} d^2(y, x) d\nu(y).
    \end{align*}
    Thus, $F$ is lower semicontinuous.
    Also, since
    \begin{align*}
        F(x) = \int_\mathcal{M} d^2(y, x) d\nu(y) \geq 0,
    \end{align*}
    for any $x \in \mathcal{M}$, $F$ is bounded below by zero.
    Therefore, there exists a sequence $\{m_m\}$ in $\mathcal{M}$ such that
    \begin{align*}
        F(m_n) \to \inf_{x \in \mathcal{M}} F(x),
    \end{align*}
    as $n \to +\infty$.
    Let $\{m_n\}$ be called a minimizing sequence.
    Given that the support of $\nu$, denoted by $\mathrm{supp}(\nu)$, is compact, denote it by $S \subseteq \mathcal{M}$.
    That is, $S$ is compact and $\nu(S) = 1$.

    To ensure that the existence of a convergent subsequence, one need to show that $\{m_n\}$ is contained within a compact subset of $\mathcal{M}$.
    Since $S$ is compact, it is bounded.
    Thus, there exists a radius $R > 0$ and a point $p \in \mathcal{M}$ such that $S \subseteq B(p, R)$, where $B(p, R) = \{x \in \mathcal{M} \mid d(p, x) \leq R\}$.
    Using the triangle inequality in metric spaces,
    \begin{align*}
        d(y, m_n) \geq d(p, m_n) - d(y, p) \geq d(p, m_n) - R.
    \end{align*}
    Then,
    \begin{align*}
        F(m_n) &= \int_S d^2(y, m_n)d\nu(y) \\
        &\geq \int_S \left\{d(p, m_n) - d(y, p)\right\}^2 d\nu(y) \\
        &= \int_S \left\{d(p, m_n)^2 - 2d(p, m_n) + d^2(y, p)\right\}d\nu(y) \\
        &= d(p, m_n)^2 - 2d(p, m_n)\int_S d(y, p)d\nu(y) + \int_S d^2(y, p)d\nu(y) \leq C
    \end{align*}
    Let $A = \int_S d(y, p)\nu(y)$ and $B = \int_S d^2(y, p)d\nu(y)$, both finite due to the compactness.
    Thus,
    \begin{align*}
        d(p, m_n)^2 - 2A d(p, m_n) + B &\leq C \\
        d(p, m_n) &\leq A \pm \sqrt{A^2 + C - B}.
    \end{align*}
    Hence, the sequence $\{m_n\}$ lies within the closed ball $\overline{B}(p, A + \sqrt{A^2 + C - B})$, which is compact if $\mathcal{M}$ is proper.
    Here, $\mathrm{CAT}(K)$ spaces are not necessarily proper in general, bu since $\mathrm{supp}(\nu)$ is compact and $\{m_n\}$ is bounded, one can extract a convergent subsequence under the assumption that $\mathcal{M}$ is complete.
    Given that $\{m_n\}$ is bounded and $\mathcal{M}$ is complete, one can utilize the Bolzano-Weierstrass theorem in $\mathrm{CAT}(K)$ spaces to extract a convergent subsequence.
    Specifically, since $\mathcal{M}$ is a geodesic space and $\{m_n\}$ is bounded, there exists a subsequence $\{m_{n_k}\}$ that converges to some $m \in \mathcal{M}$.

    Since $F$ is lower semicontinuous and $m_{n_k} \to m$,
    \begin{align*}
        F(m) \leq \liminf_{k \to +\infty} F(m_{n_k}) = \inf_{x \in \mathcal{M}} F(x).
    \end{align*}
    This implies that $m$ achieves the infimum of $F$,
    \begin{align*}
        F(m) = \inf_{x \in \mathcal{M}} F(x).
    \end{align*}
    Therefore, $m$ is a minimizer of the Fréchet functional.
\end{proof}

\begin{proof}[Proof for Lemma~\ref{lem:uniqueness_frechet_mean_in_strictly_convex_cat_k}]
    For the sake of contradiction, suppose that there are two distinct points $m_1, m_2 \in \mathcal{M}$ such that both are minimizers of the Fréchet functional.
    \begin{align*}
        m_1 &= \argmin_{x \in \mathcal{M}} \int_\mathcal{M} d^2(y, x)d\nu(y), \\
        m_2 &= \argmin_{x \in \mathcal{M}} \int_\mathcal{M} d^2(y, x)d\nu(y),
    \end{align*}
    with $m_1 \neq m_2$.
    Since $\mathcal{M}$ is a $\mathrm{CAT}(K)$ space and thus a geodesic metric space, there exists a unique geodesic $\gamma \colon [0, 1] \to \mathcal{M}$ connecting $m_1$ to $m_2$.
    \begin{align*}
        \gamma(0) &= m_1, \\
        \gamma(1) &= m_2, \\
        d(\gamma(t), \gamma(t')) &= |t - t'|\cdot d(m_1, m_2), \quad \forall t, t' \in [0, 1].
    \end{align*}
    Define a function $F \colon [0, 1] \to \mathbb{R}$ by evaluating the Fréchet functional along the geodesic $\gamma(t)$:
    \begin{align*}
        F(t) = \int_\mathcal{M} d^2(y, \gamma(t))d\nu(y).
    \end{align*}
    Since both $m_1$ and $m_2$ are minimizers,
    \begin{align*}
        F(0) = F(1) = \inf_{x \in \mathcal{M}} F(x).
    \end{align*}
    Given that $\mathcal{M}$ is strictly geodesically convex, the squared distance function $f(x) = d^2(y, x)$ is strictly convex along any geodesic.
    Therefore, for each fixed $y \in \mathcal{M}$, the function $t \mapsto d^2(y, \gamma(t))$ satisfies
    \begin{align*}
        d^2(y, \gamma(t)) < (1 - t) d^2(y, m_1) + t d^2(y, m_1),
    \end{align*}
    for all $t \in (0, 1)$.

    Integrate the strict inequality with respect to the measure $\nu$ yields
    \begin{align*}
        F(t) &= \int_\mathcal{M} d^2(y, \gamma(t)) d\nu(y) \\
        &< \int_\mathcal{M} \left\{(1 - t) d^2(y, m_1) + t d^2(y, m_2)\right\}d\nu(y) \\
        &= (1 - t) \int_\mathcal{M} d^2(y, m_1)d\nu(y) + t \int_\mathcal{M} d^2(y, m_2) d\nu(y).
    \end{align*}
    But since $m_1$ and $m_2$ are both minimizers,
    \begin{align*}
        \int_\mathcal{M} d^2(y, m_1)d\nu(y) = \int_\mathcal{M} d^2(y, m_2)d\nu(y) = \int_{x \in \mathcal{M}}F(x).
    \end{align*}
    Thus,
    \begin{align*}
        F(t) < (1 - t) \inf_{x \in \mathcal{M}} F(x) + t \inf_{x \in \mathcal{M}} F(x) = \inf_{x \in \mathcal{M}} F(x).
    \end{align*}
    However, this is a contradiction because $F(x)$ cannot be less than the infimum $\inf_{x \in \mathcal{M}} F(x)$.
    The contradiction arises from the assumption that two distinct minimizers $m_1$ and $m_2$ exist.
    Therefore, there can be at most one minimizer.
    Given that the Fréchet functional attains its infimum by Lemma~\ref{lem:existence_minimizer_in_complete_cat_k}, this minimizer is unique.
\end{proof}

\begin{proof}[Proof for Proposition~\ref{prp:stability_non_positive_curvature}]
    The Fréchet functional $x \mapsto F_\nu(x)$ for a measure $\nu$ is defined as
    \begin{align*}
        F_\nu(x) = \int_\mathcal{M} d^2(y, x) d\nu(y).
    \end{align*}
    Given that the squared distance function $d^2(y, x)$ is continuous in $y$ for each fixed $x$, weak convergence $\nu_n \Rightarrow \nu$ implies that for each fixed $x \in \mathcal{M}$,
    \begin{align*}
        \lim_{n \to +\infty} F_{\nu_n}(x) = F_\nu(x).
    \end{align*}
    In addition, given that $d^2(y, x)$ is continuous and bounded by zero, and assuming that the measures $\nu_n$ and $\nu$ have compact supports, as established in Lemma~\ref{lem:existence_minimizer_in_complete_cat_k}, the convergence $\nu_n \Rightarrow \nu$ implies that
    \begin{align*}
        \lim_{n \to +\infty} F_{\nu_n}(x) = F_\nu(x), \quad \text{uniformly for $x \in \mathcal{M}$}.
    \end{align*}
    This uniform convergence is a consequence of the boundedness of the squared distance function over compact supports, and the equicontinuity provided by the geometric properties of the $\mathrm{CAT}(K)$ spaces.

    Suppose that $m_n$ does not converge to $m$,
    Then, there exist an $\epsilon > 0$ and a subsequence $\{m_{n_k}\}$ such that
    \begin{align*}
        d(m_{n_k}, m) \geq \epsilon,
    \end{align*}
    for all $k$.
    Since $\mathcal{M}$ is a $\mathrm{CAT}(K)$ space with $K \leq 0$ and hence a geodesic and proper metric space under the assumption of compact support from Lemma~\ref{lem:existence_minimizer_in_complete_cat_k}, the sequence $\{m_{n_k}\}$ has a convergent subsequence.
    Without loss of generality, assume that $m_{n_k} \to m'$ as $k \to +\infty$.
    By the continuity of the Fréchet functional,
    \begin{align*}
        \lim_{k \to +\infty} F_{\nu_{n_k}}(m_{n_k}) &= \lim_{k \to +\infty}\inf_{x \in \mathcal{M}} F_{\nu_{n_k}}(x) \\
        &= F_\nu(m),
    \end{align*}
    since $m$ is the unique minimizer for $\nu$.

    Consider $\nu_n \Rightarrow \nu$ and $m_{n_k} \to m'$,
    \begin{align*}
        \lim_{k \to +\infty} F_{\nu_{n_k}}(m_{n_k}) = F_\nu(m').
    \end{align*}
    Then,
    \begin{align*}
        F_\nu(m') = F_\nu(m).
    \end{align*}
    Therefore, $m'$ is also a minimizer of $F_\nu(x)$.
    Since $\nu$ has a unique Fréchet mean $m$, it must be that $m' = m$.
    Recall that $d(m_{n_k}, m) \geq \epsilon$ for all $k$, but $m_{n_k} \to m' = m$, which implies that
    \begin{align*}
        \lim_{k \to +\infty} d(m_{n_k}, m) = d(m', m) = 0,
    \end{align*}
    contradicting $d(m_{n_k}, m) \geq \epsilon$.
    Therefore, it must be that
    \begin{align*}
        m_n \to m, \quad \text{as $n \to +\infty$}.
    \end{align*}
\end{proof}

\begin{proof}[Proof for Proposition~\ref{prp:compactness_criterion_for_uniqueness_positive_curvature}]
    For $K > 0$, the comparison space is the standard sphere $\mathbb{S}^n$ with radius $1 / \sqrt{K}$.
    In $\mathbb{S}^n$, geodesics are great circles, and the distance between two points is given by the central angle multiplied by $1 / \sqrt{K}$.
    The diameter of $\mathbb{S}^n$ is $\pi / \sqrt{K}$, meaning that the maximal distance between any two points is $\pi / \sqrt{K}$.

    Given $R < \pi / 2\sqrt{K}$, the geodesic ball $B(p, R)$ lies entirely within a hemisphere of $\mathbb{S}^n$.
    In this setting, any two points $x, y \in B(p, R)$ are separated by a distance $d(x, y)$, satisfying
    \begin{align*}
        d(x, y) &\leq d(x, p) + d(p, y) \\
        &< \frac{\pi}{2\sqrt{K}} + \frac{\pi}{2\sqrt{K}} \\
        &= \frac{\pi}{\sqrt{K}}.
    \end{align*}
    Since $d(x, y) < \pi / \sqrt{K}$, there exists a unique minimal geodesic connecting $x$ and $y$ within $\mathbb{S}^n$.

    Assume, for contradiction, that the minimal geodesic $\gamma$ between $x$ and $y$ exits $B(p, R)$.
    Then, there exists a point $z \in \gamma$ such that $d(p, z) = R$.
    Consider the geodesic triagles $\triangle pzx$ and $\triangle pzy$.
    Since $d(p, x) < R$ and $d(p, y) < R$, and $\gamma$ is minimal, the angle at $p$ opposite the side $\gamma$ must satisfy certain angular constraints derived from the spherical law of cosines.
    However, because $R < \pi / 2\sqrt{K}$, the triangle $\triangle pzx$ lies within a convex hemisphere, ensuring that the path from $p$ to $z$ to $x$ remains within $B(p, R)$.
    This contradicts the assumption that $\gamma$ exits $B(p, R)$.
    Therefore, since any two points in $B(p, R)$ can be connected by a unique minimal geodesic that remains entirely within $B(p, R)$, the geodesic ball $B(p, R)$ is geodesically convex in $\mathbb{S}^n$ for all radius $R < \pi / 2\sqrt{K}$.
    This ensures that $\mathrm{CAT}(K)$ condition preserves the strict convexity.

    Given that $\mathrm{diam}(\mathrm{supp}(\nu)) < \pi / 2\sqrt{K}$, for any geodesic $t \mapsto \gamma(t)$ connecting two distinct points $m_1, m_2 \in \mathcal{M}$, the Fréchet functional satisfies
    \begin{align*}
        F(\gamma(t)) < (1 - t) F(m_1) + t F_2(m_2),
    \end{align*}
    for all $t \in (0, 1)$, provided $m_1 \neq m_2$.
    Here, strict convexity of $F(x)$ ensures that any local minimum is a global minimum, and further, that such a minimum is unique within the convex neighborhood.
\end{proof}

\clearpage

\subsection{Proofs for Section~\ref{sec:theory:convergence_rates_and_concentration}}

\begin{proof}[Proof for Theorem~\ref{thm:concentration_for_sample_frechet_mean}]
    Define the population Fréchet functional $F(z)$ and empirical Fréchet functional $F_n(z)$ as follows.
    \begin{align*}
        F(z) &\coloneqq \mathbb{E}[d^2(Y, m)], \\
        F_n(z) &\coloneqq \frac{1}{n}\sum^n_{i=1}d^2(Y_i, z).
    \end{align*}
    By definition,
    \begin{align*}
        \mu &= \argmin_{z \in \mathcal{M}} F(z), \\
        \hat{\mu}_n &= \argmin_{z \in \mathcal{M}} F_n(z).
    \end{align*}
    Assume that $\mu$ is unique, which holds if $\mathrm{diam}(\mathcal{M}) < \pi / 2\sqrt{K}$ when $K > 0$ or automatically if $K \leq 0$, from Lemmas~\ref{lem:existence_minimizer_in_complete_cat_k},~\ref{lem:uniqueness_frechet_mean_in_strictly_convex_cat_k} and Propositions~\ref{prp:stability_non_positive_curvature},~\ref{prp:compactness_criterion_for_uniqueness_positive_curvature}.

    A key geometric fact in $\mathrm{CAT}(K)$ spaces is that the map
    \begin{align*}
        z \mapsto \mathbb{E}[d^2(Y, z)] = F(z)
    \end{align*}
    is $\lambda$-strongly geodesically convex around $\mu$, provided $\mathrm{diam}(\mathcal{M})$ is small enough.
    Concretely, there exists a constant
    \begin{align*}
        \alpha = \alpha(K, D) > 0,
    \end{align*}
    such that for every $z \in \mathcal{M}$,
    \begin{align*}
        F(z) - F(\mu) \geq \alpha d^2(z, \mu).
    \end{align*}
    A fully explicit formula for $\alpha(K, D)$ can be extracted from standard $\mathrm{CAT}(K)$ lemmas.
    \begin{itemize}
        \item If $K \leq 0$, one can take $\alpha(K, D) = \frac{1}{2}$. Indeed, $\mathrm{CAT}(K)$ spaces are sometimes called Hadamard spaces, for which $d^2(y, \cdot)$ is $1$-convex along geodesics.
        \item If $K > 0$ but $\mathrm{diam}(\mathcal{M}) = D < \pi / 2\sqrt{K}$, one obtains an explicit lower bound
        \begin{align*}
            \alpha(K, D) \geq \frac{\sin(2\sqrt{K}R)}{2R},
        \end{align*}
        where $R = D / 2$.
        One often sees, for example,
        \begin{align*}
            \alpha(K, D) = \frac{2}{\pi}\sqrt{K}\sin\left(\frac{\pi}{2} - \sqrt{K}D\right).
        \end{align*}
    \end{itemize}

    Since $\hat{\mu}_n$ is the minimizer of $F_n$, one can obtain
    \begin{align*}
        F_n(\hat{\mu}_n) \leq F_n(\mu).
    \end{align*}
    Here, rewriting $F_n = F_n - F + F$,
    \begin{align*}
        F_n(\hat{\mu}_n) - F_n(\mu) &= \left\{F_n(\hat{\mu}_n) - F(\hat{\mu}_n)\right\} - \left\{F_n(\mu) - F(\mu)\right\} + \left\{F(\mu_n) - F(\mu)\right\} \\
        &\leq 0, \\
        F(\hat{\mu}_n) - F(\mu) &\leq \left\{F_n(\mu) - F(\mu)\right\} - \left\{F_n(\hat{\mu}_n) - F(\hat{\mu}_n)\right\} \\
        &\leq \left|F_n(\mu) - F(\mu)\right| + \left|F_n(\hat{\mu}_n) - F(\hat{\mu}_n)\right| \\
        &\leq 2\sup_{z \in \mathcal{M}}\left|F_n(z) - F(z)\right|.
    \end{align*}
    On the other hand, by the strong convexity of $F(z)$,
    \begin{align*}
        F(\hat{\mu}_n) - F(\mu) \geq \alpha(K, D) d^2(\hat{\mu}_n, \mu).
    \end{align*}
    Therefore, by combining them, if $d(\hat{\mu}_n, \mu) \geq \epsilon$, then
    \begin{align*}
        \alpha(K, D)\epsilon^2 &\leq F(\hat{\mu}_n) - F(\mu) \\
        &\leq 2\sup_{z \in \mathcal{M}}\left|F_n(z) - F(z)\right|.
    \end{align*}
    Hence,
    \begin{align*}
        \left\{d(\hat{\mu}_n, \mu) \geq \epsilon \right\} \subseteq \left\{\sup_{z \in \mathcal{M}} \left|F_n(z) - F(z) \right| \geq \frac{\alpha(K, D)}{2}\epsilon^2 \right\},
    \end{align*}
    and
    \begin{align*}
        \mathbb{P}\left[d(\hat{\mu}_n, \mu) \geq \epsilon \right] \leq \mathbb{P}\left[\sup_{z \in \mathcal{M}}\left|F_n(z) - F(z) \right| \geq \frac{\alpha(K, D)}{2}\epsilon^2 \right].
    \end{align*}
    So, it suffices to control $\sup_{z \in \mathcal{M}} \left|F_n(z) - F(z) \right|$ by an exponential tail.

    Recall that
    \begin{align*}
        F_n(z) - F(z) = \frac{1}{n}\sum^n_{i=1}\left\{d^2(Y_i, z) - \mathbb{E}[d^2(Y, z)]\right\}.
    \end{align*}
    Define
    \begin{align*}
        X_i(z) = d^2(Y_i, z) - \mathbb{E}[d^2(Y, z)].
    \end{align*}
    Then, $\mathbb{E}[X_i(z)] = 0$ and
    \begin{align*}
        F_n(z) - F(z) = \frac{1}{n}\sum^n_{i=1} X_i(z).
    \end{align*}

    Because $\mathcal{M}$ has diameter $\mathrm{diam}(\mathcal{M}) \leq D$, $d^2(\cdot, \cdot) \leq D^2$.
    Hence, for any $z$,
    \begin{align*}
        X_i(z) \in [-D^2, D^2].
    \end{align*}
    By Hoeffding's inequality, for a fixed $z$,
    \begin{align*}
        \mathbb{P}\left[|F_n(z) - F(z)| \geq t \right] &= \mathbb{P}\left[\left|\sum^n_{i=1}X_i(z)\right| \geq nt \right] \\
        &\leq 2\exp\left(-\frac{nt^2}{2D^4}\right).
    \end{align*}
    Here, for every fixed $\epsilon$, one obtains a bound of the form
    \begin{align*}
        \mathbb{P}\left[\sup_{z \in \mathcal{M}}|F_n(z) - F(z)| \geq t \right] \leq c_1' \exp\left(-c_2' nt^2 \right),
    \end{align*}
    for constants $c_1', c_2' > 0$ depending on $K, D$ and on the metric complexity of $\mathcal{M}$,
    \begin{align*}
        c_1' &= 2\left(\frac{\alpha(K, D) D}{\delta}\right)^m, \\
        c_2' &= \frac{\alpha(K, D)}{8D^2},
    \end{align*}
    that are from standard references in manifold‐valued statistics.

    Putting it all together, 
    \begin{align*}
        \mathbb{P}\left[d(\hat{\mu}_n, \mu) \geq \epsilon \right] &\leq \mathbb{P}\left[\sup_{z \in \mathcal{M}}\left|F_n(z) - F(z) \right| \geq \frac{\alpha(K, D)}{2}\epsilon^2 \right] \\
        &\leq  c_1' \exp\left\{-c_2 n\left(\frac{\alpha(K, D)}{2}\epsilon^2 \right)^2\right\}.
    \end{align*}
    This concludes the required proof.
\end{proof}

\begin{proof}[Proof for Proposition~\ref{prp:l_p_concentration}]
    By Theorem~\ref{thm:concentration_for_sample_frechet_mean}, there exist positive constants $c_1 = c_1(K, D)$ and $c_2 = c_2(K, D)$, such that for every $\epsilon > 0$,
    \begin{align*}
        \mathbb{P}\left[d(\hat{\mu}_n, \mu) > \epsilon \right] \leq c_1 \exp\left(-c_2 n \epsilon^2 \right).
    \end{align*}
    For any nonnegative random variable $Z$ and any $p \geq 1$, one has the standard identity
    \begin{align*}
        \mathbb{E}[Z^p] = \int^\infty_0 p \epsilon^{p - 1}\mathbb{P}(Z > \epsilon)d\epsilon.
    \end{align*}
    This follows from writing $\mathbb{E}[Z^p] = \int^\infty_0 p \epsilon^{p - 1}\mathbbm{1}(Z > \epsilon)d\epsilon$ and exchanging expectation and integral.
    Applying this to $Z = d(\hat{\mu}_n, \mu)$,
    \begin{align*}
        \mathbb{E}[d^p(\hat{\mu}_n, \mu)] = \int^\infty_0 p \epsilon^{p - 1}\mathbb{P}[d(\hat{\mu}_n, \mu) > \epsilon]d\epsilon.
    \end{align*}
    Therefore,
    \begin{align*}
        \mathbb{E}[d^p(\hat{\mu}_n, \mu)] &\leq \int^\infty_0 p \epsilon^{p - 1}\left[c_1 \exp(-c_2 n \epsilon^2)\right]d\epsilon \\
        &= c_1 \int^\infty_0 p\epsilon^{p - 1} \exp(-c_2 n \epsilon^2) d\epsilon.
    \end{align*}
    Let $u = \sqrt{n}\epsilon$.
    Then, $\epsilon = u / \sqrt{n}$ and $d\epsilon = \frac{1}{\sqrt{n}}du$.
    Also,
    \begin{align*}
        \epsilon^{p-1} &= (\frac{u}{\sqrt{n}})^{p-1} = n^{-(p-1)/2}u^{p-1}, \\
        \exp(-c_2 n \epsilon^2) &= \exp(-c_2 u^2).
    \end{align*}
    So,
    \begin{align*}
        \int^\infty_0 \epsilon^{p-1}\exp(-c_2 n \epsilon^2)d\epsilon &= \int^\infty_0 n^{-(p-1)/2}u^{p-1} \exp(-c_2 u^2)\frac{1}{\sqrt{n}}du \\
        &= n^{-\frac{p-1}{2}}n^{-\frac{1}{2}}\int^\infty_0 u^{p-1}\exp(-c_2 u^2)du \\
        &= n^{-\frac{p}{2}}\int^\infty_0 u^{p-1}\exp(-c_2 u^2)du.
    \end{align*}
    Now, evaluate $\int^\infty_0 u^{p-1}\exp(-c_2 u^2)du$.
   This is a known integral that can be expressed via the Gamma function.
   Indeed,
   \begin{align*}
       \int^\infty_0 u^{p-1}\exp(-c_2 u^2)du = \frac{1}{2}c_2^{-\frac{p}{2}}\Gamma\left(\frac{p}{2}\right),
   \end{align*}
   and
   \begin{align*}
       \int^\infty_0 \epsilon^{p-1}\exp(-c_2 n \epsilon^2) d\epsilon = n^{-\frac{p}{2}}\left[\frac{1}{2}c_2^{-\frac{p}{2}}\Gamma\left(\frac{p}{2}\right)\right].
   \end{align*}
   Therefore,
   \begin{align*}
       \mathbb{E}\left[d^p(\hat{\mu}_n, \mu)\right] \leq c_1 p \left\{n^{-\frac{p}{2}}\left[\frac{1}{2}c_2^{-\frac{p}{2}}\Gamma\left(\frac{p}{2}\right)\right]\right\}.
   \end{align*}
   Collecting constants and it gives the proof.
\end{proof}

\begin{proof}[Proof for Theorem~\ref{thm:pointwise_consistency_of_nonparametric_frechet_regression}]
    Fix a point $x \in \mathbb{R}^d$.
    Define the weighted empirical measure of $Y$ given $x$ as
    \begin{align*}
        \nu_{n,x} \coloneqq \sum^n_{i=1}w_{n,i}(x)\delta_{Y_i},
    \end{align*}
    where $\delta_{Y_i}$ denotes the Dirac measure at $Y_i$.
    Because $\sum^n_{i=1}w_{n,i}(x) = 1$, this is indeed a probability measure on $\mathcal{M}$.
    Similarly, let $\nu_x$ be the true conditional distribution of $Y$ given $X = x$ as
    \begin{align*}
        \nu_x \coloneqq \mathbb{P}\left[Y \in A \mid X = x\right],
    \end{align*}
    for Borel sets $A \subseteq \mathcal{M}$.
    Then, observe that the estimator $\hat{\mu}^*_n(x)$ can be written as
    \begin{align*}
        \hat{\mu}^*_n(x) &= \argmin_{z \in \mathcal{M}} \sum^n_{i=1} w_{n,i}(x) d^2(Y_i, z) \\
        &= \argmin_{z \in \mathcal{M}} \int^{+\infty}_{-\infty} d^2(y, z)d\nu_{n,x}(y).
    \end{align*}
    That is, $\hat{\mu}^*_n(x)$ is precisely the Fréchet mean of the measure $\nu_{n,x}$.
    Meanwhile, $\mu^*(x)$ is the Fréchet mean of $\nu_x$:
    \begin{align*}
        \mu^*(x) = \argmin_{z \in \mathcal{M}}\int^{+\infty}_{-\infty}d^2(y, z)d\nu_x(y).
    \end{align*}
    Hence, the problem reduces to showing that as $n \to +\infty$, $\nu_{n,x}$ converges to $\nu_x$ in a sense strong enough to force their Fréchet means to converge.

    From Assumption~\ref{asm:kernel_lln_condition}, one can expect that for any bounded function $f\colon \mathcal{M} \to \mathbb{R}$,
    \begin{align*}
        \int f d\nu_{n,x} = \sum^n_{i=1} w_{n,i}(x) f(Y_i) \overset{a.s.}{\underset{n\to\infty}{\to}} \mathbb{E}[f(Y) \mid X = x] = \int f d\nu_x.
    \end{align*}
    Thus, $\nu_{n,x}$ converges to $\nu_x$ in the weak topology on probability measures.

    For each measure $\nu$, define its Fréchet functional $F_\nu \colon \mathcal{M} \to \mathbb{R}$ by
    \begin{align*}
        F_\nu(z) \coloneqq \int d^2(y, z) d\nu(y).
    \end{align*}
    Here,
    \begin{align*}
        \hat{\mu}^*_n(x) &= \argmin_{z \in \mathcal{M}} F_{\nu_{n,x}}(z), \\
        \mu^*(x) &= \argmin_{z \in \mathcal{M}} F_{\nu_x}(z).
    \end{align*}
    One want $F_{\nu_{n,x}} \to F_{\nu_x}$ in a suitable sense that implies $\argmin$ convergence.
    In fact, for pointwise consistency, it suffices to show that for each $z \in \mathcal{M}$,
    \begin{align*}
        F_{\nu_{n,x}}(z) = \sum^n_{i=1}w_{n,i}(x) d^2(Y_i, z) \overset{a.s.}{\to} \int d^2(y, z)d\nu_{x}(y) = F_{\nu_x}(z).
    \end{align*}
    By Assumption~\ref{asm:kernel_lln_condition}, this convergence holds for each $z \in \mathcal{M}$.

    To pass from pointwise convergence of $F_{\nu_{n,x}}$ to convergence of the minimizers $\hat{\mu}^*_n(x) \to \mu^*_(x)$, one can rely on the strict geodesic convexity of $d^2(\cdot, \cdot)$ in a $\mathrm{CAT}(K)$ space with small diameter.
    Concretely, from earlier arguments, there is a constant $\alpha(K, D)$ such that
    \begin{align*}
        F_{\nu_x}(z) - F_{\nu_x}(\mu^*(x)) \geq \alpha(K, D) d^2(z, \mu^*(x)),
    \end{align*}
    for all $z \in \mathcal{M}$.
    This follows from the strong geodesic convexity of $z \mapsto \int d^2(y, z)d\nu_x(y)$.
    Equivalently, if $z$ is $\epsilon$-far from $\mu^*(x)$, then $F_{\nu_x}(z)$ exceeds the global minimum $F_{\nu_x}(\mu^*(x))$ at least $\alpha(K, D)\epsilon^2$.

    Now, let $\epsilon > 0$.
    Suppose, contrary to what one want, that
    \begin{align*}
        d(\hat{\mu}^*_n(x), \mu^*(x)) \geq \epsilon.
    \end{align*}
    By $\mathrm{CAT}(K)$-convexity,
    \begin{align*}
        F_{\nu_x}(\hat{\mu}^*_n(x)) - F_{\nu_x}(\mu^*(x)) \geq \alpha(K, D)\epsilon^2.
    \end{align*}
    On the other hand,
    \begin{align*}
        F_{\nu_x}(\hat{\mu}^*_n(x)) - F_{\nu_x}(\mu^*(x)) = \left\{F_{\nu_{n,x}}(\hat{\mu}^*_n(x)) - F_{\nu_{n,x}}(\mu^*(x))\right\} + (F_{\nu_x} - F_{\nu_{n,x}})(\hat{\mu}^*_n(x)) - (F_{\nu_x} - F_{\nu_{n,x}})(\mu^*(x)).
    \end{align*}
    Since $\hat{\mu}^*_n(x)$ minimizes $F_{\nu_{n,x}}$,
    \begin{align*}
        F_{\nu,x}(\hat{\mu}^*_n(x)) \leq F_{\nu_{n,x}}(\mu^*(x)).
    \end{align*}
    Thus,
    \begin{align*}
        F_{\nu_{n,x}}(\hat{\mu}^*_n(x)) - F_{\nu_x}(\mu^*(x)) \leq (F_{\nu_x} - F_{\nu_{n,x}})(\hat{\mu}^*_n(x)) - (F_{\nu_x} - F_{\nu_{n,x}})(\mu^*(x)).
    \end{align*}
    Hence,
    \begin{align*}
        \alpha(K, D)\epsilon^2 \leq \left|(F_{\nu_x} - F_{\nu_{n,x}})(\hat{\mu}^*_n(x))\right| + \left|(F_{\nu_x} - F_{\nu_{n,x}})(\mu^*(x))\right|.
    \end{align*}
    But as $n \to +\infty$,
    \begin{align*}
        F_{\nu_{n,x}}(z) \to F_{\nu_x}(z),
    \end{align*}
    pointwise for each $z$, so the difference $|F_{\nu_x}(z) - F_{\nu_{n,x}}(z)| \to 0$.
    By dominated convergence theorem,
    \begin{align*}
        \sup_{z \in \{\hat{\mu}^*_n(x), \mu^*(x)\}}\left|F_{\nu_{n,x}}(z) - F_{\nu_x}(z) \right| \overset{a.s.}{\underset{n \to 0}{\to}} 0.
    \end{align*}
    Hence, for large $n$, the right-hand side in the above inequality is smaller than $\frac{1}{2}\alpha(K, D)\epsilon^2$, which is incompatible.
    Thus, for large $n$,
    \begin{align*}
        d(\hat{\mu}^*_n(x), \mu^*(x)) < \epsilon,
    \end{align*}
    and
    \begin{align*}
        \hat{\mu}^*_n(x) \overset{a.s.}{\to} \mu^*(x).
    \end{align*}
    This completes the proof of pointwise consistency.
\end{proof}

\begin{proof}[Proof for Theorem~\ref{thm:convergence_rates_in_cat_k}]
    For each $x$, define the empirical weighted measure as follows.
    \begin{align*}
        \nu_{n,x} \coloneqq \sum^n_{i=1}w_{n,i}(x)\delta_{Y_i},
    \end{align*}
    where $\delta_y$ is the Dirac measure at $y$.
    Then,
    \begin{align*}
        \hat{\mu}^*_n(x) = \argmin_{z \in \mathcal{M}}\int d^2(y,z) d\nu_{n,x}(y).
    \end{align*}
    Simultaneously, define the local population measure near $x$:
    \begin{align*}
        \pi_{n,x} \coloneqq \frac{\mathbb{E}\left[W\left(\frac{\| x - X\|}{h_n}\right)\mathbbm{1}(Y \in \cdot)\right]}{\mathbb{E}\left[W\left(\frac{\| x - X\|}{h_n}\right)\right]},
    \end{align*}
    which is the ideal measure that the kernel weighting is trying to approximate.
    Then define the local population Fréchet mean as
    \begin{align*}
        \tilde{\mu}^*_n(x) = \argmin_{z \in \mathcal{M}} \int d^2(y, z) d\pi_{n,x}(y).
    \end{align*}
    Here, $\tilde{\mu}^*_n(x)$ is the minimizer of the population version of the local kernel functional, and $\hat{\mu}^*_n(x)$ is the minimizer of the empirical version.
    Then one can write
    \begin{align*}
        d(\hat{\mu}^*_n(x), \mu^*(x)) \leq d(\hat{\mu}^*_n(x), \tilde{\mu}^*_n(x)) + d(\tilde{\mu}^*_n(x), \mu^*(x)).
    \end{align*}
    Squaring and taking expectation, and applying $2ab \leq a^2 + b^2$, one can get a bias–variance decomposition:
    \begin{align*}
        \mathbb{E}[d^2(\hat{\mu}^*_n(x), \mu^*(x))] \leq 2\mathbb{E}[d^2(\hat{\mu}^*_n(x), \tilde{\mu}^*_n(x))] + 2d^2(\tilde{\mu}^*_n(x), \mu^*(x)).
    \end{align*}
    The first term in the right-hand side is the variance term, capturing how the empirical local measure $\nu_{n,x}$ fluctuates around $\pi_{n,x}$.
    The second term in the right-hand side is the bias term, capturing how the local population mean $\tilde{\mu}^*_n(x)$ differs from $\mu^*(x)$.

    Recall that in a $\mathrm{CAT}(K)$ space, of diameter $\mathrm{diam}(\mathcal{M}) \leq D$, there is a strong geodesic convexity constant $\alpha(K, D)$ such that
    \begin{align*}
        \int d^2(y, z)d\nu(y) - \int d^2(y, z^*)d\nu(z^*) \geq \alpha(K, D) d^2(z, z^*),
    \end{align*}
    for all probability measures $\nu$ on $\mathcal{M}$, provided the measure is fully supported in a ball of diameter $\mathrm{diam}(\mathcal{M}) \leq D$.
    Hence, for the local measure $\pi_{n,x}$,
    \begin{align*}
        \int d^2(y, \hat{\mu}^*_n(x)) d\pi_{n,x} - \int d^2(y, \tilde{\mu}^*_n(x))d\pi_{n,x}(y) \geq \alpha(K, D) d^2(\hat{\mu}^*_n(x), \tilde{\mu}^*_n(x)).
    \end{align*}
    Because $\hat{\mu}^*_n(x)$ minimizes $\int d^2(y, z) d\nu_{n,x}(y)$,
    \begin{align*}
        \int d^2(y, \hat{\mu}^*_n(x)) d\nu_{n,x}(y) \leq \int d^2(y, \tilde{\mu}^*_n(x)) d\nu_{n,x}(y).
    \end{align*}
    By subtracting the corresponding population measure integrals,
    \begin{align*}
        \left[\nu_{n,x} - \pi_{n,x} \right]d^2(\cdot, \hat{\mu}^*_n(x)) - \left[\nu_{n,x} - \pi_{n,x} \right]d^2(\cdot, \tilde{\mu}^*_n(x)) &\leq \int d^2(y, \tilde{\mu}^*_n(x)) d\pi_{n,x}(y) - \int d^2(y, \hat{\mu}^*_n(x)) d\pi_{n,x}(y) \\
        \int d^2(y, \hat{\mu}^*_n(x)) d\pi_{n,x}(y) - \int d^2(y, \tilde{\mu}^*_n(x)) d\pi_{n,x}(y) &\leq \Delta_n(x),
    \end{align*}
    where
    \begin{align*}
        \Delta_n(x) \coloneqq \left|\left[\nu_{n,x} - \pi_{n,x} \right]d^2(\cdot, \hat{\mu}^*_n(x))\right| + \left|\left[\nu_{n,x} - \pi_{n,x} \right]d^2(\cdot, \tilde{\mu}^*_n(x))\right|.
    \end{align*}
    Combining with the strong convexity inequality,
    \begin{align*}
        \alpha(K, D) d^2(\hat{\mu}^*_n(x), \tilde{\mu}^*_n(x)) &\leq \Delta_n(x) \\
        d^2(\hat{\mu}^*_n(x), \tilde{\mu}^*_n(x)) &\leq \frac{\Delta_n(x)}{\alpha(K, D)}.
    \end{align*}
    Taking expectation with respect to the sample $\{(X_i, Y_i)\}^n_{i=1}$,
    \begin{align*}
        \mathbb{E}[d^2(\hat{\mu}^*_n(x), \tilde{\mu}^*_n(x))] &\leq \frac{\mathbb{E}[\Delta_n(x)]}{\alpha(K, D)}.
    \end{align*}
    Recall that
    \begin{align*}
        \Delta_n(x) &= \left|\left[\nu_{n,x} - \pi_{n,x} \right]d^2(\cdot, \hat{\mu}^*_n(x))\right| + \left|\left[\nu_{n,x} - \pi_{n,x} \right]d^2(\cdot, \tilde{\mu}^*_n(x))\right| \\
        &= \left|\sum^n_{i=1} w_{n,i}(x)\left\{d^2(Y_i, \hat{\mu}^*_n(x)) - \mathbb{E}[d^2(Y, \tilde{\mu}^*_n(x) \mid X \approx x]\right\} \right| \\
        &\quad\quad\quad\quad\quad + \left|\sum^n_{i=1} w_{n,i}(x)\left\{d^2(Y_i, \hat{\mu}^*_n(x)) - \mathbb{E}[d^2(Y, \tilde{\mu}^*_n(x) \mid X \approx x]\right\} \right|.
    \end{align*}
    Since $\hat{\mu}^*_n$ itself depends on the sample, a straightforward application of Hoeffding’s inequality is tricky. 
    However, one can use Efron–Stein or Bennett–type inequalities for U‐statistics, or the bounded differences approach, carefully analyzing how a single $Y_i$ affects $\hat{\mu}^*_n$.
    Such arguments appear in standard references on manifold‐valued kernel regression.
    Thus, one can obtain
    \begin{align*}
        \mathbb{E}[\Delta_{n}(x)] = O\left((n h^d_n)^{-1/2}\right).
    \end{align*}
    Hence,
    \begin{align*}
        \mathbb{E}[d^2(\hat{\mu}^*_n(x), \tilde{\mu}^*_n(x))] \leq \frac{C_{\mathrm{var}}}{\alpha(K, D)}(n h^d_n)^{-1/2},
    \end{align*}
    where $C_{\mathrm{var}}$  is a constant depending on the kernel shape, the distribution of $(X, Y)$ near $x$ and the geometry constants $(K, D)$.

    Next, recall that
    \begin{align*}
        \tilde{\mu}^*_n(x) &= \argmin_{z \in \mathcal{M}}\int d^2(y, z)d\pi_{n,x}(y), \\
        \mu^*(x) &= \argmin_{z \in \mathcal{M}} \int d^2(y, z) d\nu_x(y),
    \end{align*}
    where $\nu_x(\cdot) = \mathbb{P}[Y \in \cdot \mid X = x]$.
    As one move from $X = x$ to a local neighborhood $\{x' \mid \|x - x'\| \leq O(h_n) \}$, it can be expected that $\tilde{\mu}^*_n(x)$ to approximate $\mu^*(x')$ for some $x' \approx x$.
    Then $\mu^*(x')$ is close to $\mu^*(x)$ if $\mu^*$ is $\beta$-Hölder.

    Because $\pi_{n,x}$ is essentially the distribution of $Y \mid X \in \{x' \mid \|x' - x\| \leq c h_n\}$, let $x^\natural$ be some effective point near $x$.
    Then by using smoothness or local Lipschitz condition on the conditional distributions, 
    \begin{align*}
        d(\tilde{\mu}^*_n(x), \mu^*(x')) \leq C_{\mathrm{bias}}(h^\beta_n),
    \end{align*}
    for some constant $C_{\mathrm{bias}} > 0$.
    Then one adds
    \begin{align*}
        d(\mu^*(x'), \mu^*(x)) \leq L \cdot \|x' - x\| \approx L h_n^\beta.
    \end{align*}
    Hence,
    \begin{align*}
        d(\tilde{\mu}^*_n(x), \mu^*(x)) \leq d(\tilde{\mu}^*_n(x), \mu^*(x')) + d(\mu^*(x'), \mu^*(x)) = O(h_n^\beta),
    \end{align*}
    and
    \begin{align*}
        d^2(\tilde{\mu}^*_n(x), \mu^*(x)) = O(h_n^{2\beta}).
    \end{align*}
    Putting it all together in the bias–variance decomposition, it completes the required proof.
\end{proof}

\clearpage
\subsection{Proofs for Section~\ref{sec:theory:angle_stability_for_conditiona_frechet_means}}

\begin{proof}[Proof for Lemma~\ref{lem:angle_comparison}]
    Let $y'$ be a point on the geodesic segment $[xy$ such that $y'$ is very close to $x$.
    Similarly, pick $z'$ on $[xz]$.
    So,
    \begin{align*}
        d(x, y') = \delta, \\
        d(x, z') = \delta,
    \end{align*}
    for some $\delta > 0$.
    Thi triangle $\triangle xy'z'$ has perimeter $\leq d(x, y) + d(y, z) + d(z, x)$, which is assumed $\leq \pi / \sqrt{K}$ if $K > 0$.
    For $\delta$ small enough, the side lengths of $\triangle xy'z'$ are also $\leq \pi / \sqrt{K}$.
    By the $\mathrm{CAT}(K)$ definition,
    \begin{align*}
        d(y', z') \leq d_{\mathbb{M}_k}(\bar{y}', \bar{z}'),
    \end{align*}
    and
    \begin{align*}
        d(x, y') &= d(\bar{x}, \bar{y}') = \delta, \\
        d(x, z') &= d(\bar{x}, \bar{z}') = \delta.
    \end{align*}
    The triangle $\triangle \bar{x} \bar{y}' \bar{z}'$ is in the same model plane as $\triangle \bar{x}\bar{y}\bar{z}$, but its typically much smaller near $\bar{x}$.

    By definition of the Alexandrov angle,
    \begin{align*}
        \angle_x(y, z) = \lim_{\delta \to 0} \angle_x^{(\mathrm{sec})}(y', z'),
    \end{align*}
    where $\angle_x^{(\mathrm{sec})}(y', z')$ is the secular angle of $\triangle xy'z'$ at $x$.
    Equivalently, it is the Euclidean angle $\angle_{\bar{x}}(\bar{y}', \bar{z}')$ in the comparison triangle $\triangle\bar{x}\bar{y}'\bar{z}'$.
    Thus,
    \begin{align*}
        \angle_x(y, z) = \lim_{\delta \to 0}\angle_{\bar{x}}(\bar{y}', \bar{z}').
    \end{align*}
    One also have the angle $\angle_{\bar{x}}(\bar{y}, \bar{z})$ in the large triangle $\triangle \bar{x}\bar{y}\bar{z}$, and want to show
    \begin{align*}
        \angle_{\bar{x}}(\bar{y}', \bar{z}') \leq \angle_{\bar{x}}(\bar{y}, \bar{z}),
    \end{align*}
    for each small $\delta$, from which it will follow in the limit that $\angle_x(y, z) \leq \angle_{\bar{x}}(\bar{y}, \bar{z})$.

    The $\mathrm{CAT}(K)$ condition states that $\triangle x y' z'$ is no thicker than the model $\triangle \bar{x} \bar{y}' \bar{z}'$.
    More precisely, if one place $\triangle x y' z'$ and $\triangle \bar{x} \bar{y}' \bar{z}'$ side by side so that $x \leftrightarrow \bar{x}$, $y' \leftrightarrow \bar{y}'$, $z' \leftrightarrow \bar{z}'$ correspond, one have
    \begin{align*}
        d(y', z') \leq d_{\mathbb{M}_K}(\bar{y}', \bar{z}').
    \end{align*}
    Meanwhile, $\triangle\bar{x}\bar{y}'\bar{z}' \subset \triangle\bar{x}\bar{y}\bar{z}$ or can be inscribed in it, with the property that $as y' \to x$ and $z' \to x$, the points $\bar{y}' \to \bar{x}$ and $\bar{z}' \to \bar{x}$.

    Geometrically, on the model side, it is known (from classical geometry in constant curvature) that
    \begin{align}
        \angle_{\bar{x}}(\bar{y}', \bar{z}') \leq \angle_{\bar{x}}(\bar{y}, \bar{z}).
    \end{align}
    This is because in a convex geometry (like a sphere of radius $1 / \sqrt{K}$ or a Euclidean plane if $K = 0$), drawing smaller radii $\bar{x}\bar{y}'$ and $\bar{x}\bar{z}'$ inside the bigger radii $\bar{x}\bar{y}$ and $\bar{x}\bar{z}$ yields smaller or equal angles from the center $\bar{x}$.

    More precisely, if one revolve the segment $\bar{y}'\bar{z}'$ about $\bar{x}$ within the triangle $\triangle\bar{x}\bar{y}\bar{z}$, the angle $\angle_{\bar{x}}(\bar{y}', \bar{z}')$ cannot exceed $\angle_{\bar{x}}(\bar{y}, \bar{z})$.

    One thus have, for each small $\delta > 0$,
    \begin{align*}
        \angle_{\bar{x}}(\bar{y}', \bar{z}') \leq \angle_{\bar{x}}(\bar{y}, \bar{z}).
    \end{align*}
    By the definition,
    \begin{align*}
        \angle_x(y, z) = \lim_{\delta\to 0}\angle_{\bar{x}}(\bar{y}', \bar{z}') \leq \angle_{\bar{x}}(\bar{y}, \bar{z}).
    \end{align*}
    This completes the proof.
    Thus the angle at $x$ in the real triangle $\triangle xyz$ is bounded above by the corresponding angle at $\bar{x}$ in the comparison triangle $\triangle \bar{x}\bar{y}\bar{z}$.
\end{proof}

\begin{proof}[Proof for Lemma~\ref{lem:angle_continuity}]
    Let $\triangle pqr \subset \mathcal{M}$ have side lengths
    \begin{align*}
        a = d(p, q), \quad b = d(q, r), \quad c = d(r,p),
    \end{align*}
    and let $\angle_p(q, r)$ denote the Alexandrov angle at $p$.
    Similarly, let $\triangle p'q'r'$ have side lengths
    \begin{align*}
        a' = d(p', q'), \quad b' = d(q', r'), \quad c' = d(r',p'),
    \end{align*}
    with angle $\angle_{p'}(q', r')$.

    Assume that both triangles have perimeter $\leq \pi / \sqrt{K}$ if $K > 0$, ensuring they can be compared to triangles in the simply connected model space of curvature $K$ (sphere of radius $1 / \sqrt{K}$ if $K > 0$, Euclidean plane if $K = 0$, or hyperbolic plane if $K < 0$).
    Then, the goal is to show that
    \begin{align*}
        \left|\angle_p(q, r) - \angle_{p'}(q', r') \right| \leq C\left[d(p, p') + d(q, q') + d(r, r') \right],
    \end{align*}
    for some constant $C$ depending on $\alpha(K, D)$ or directly $\pi / \sqrt{K}$.

    From the triangle inequality, one get for instance
    \begin{align*}
        |a - a'| &= |d(p, q) - d(p', q')| \\
        &\leq d(p, p') + d(q, q'),
    \end{align*}
    and similarly,
    \begin{align*}
        |b - b'| &\leq d(q, q') + d(r, r'), \\
        |c - c'| &\leq d(r, r') + d(p, p').
    \end{align*}
    Hence, each difference in corresponding side lengths is at most
    \begin{align*}
        \max\{|a - a'|, |b - b'|, |c - c'|\} \leq d(p, p') + d(q, q') + d(r, r') \eqcolon \delta_{pp'qq'rr'}.
    \end{align*}
    Then,
    \begin{align*}
        |a - a'| \leq \delta_{pp'qq'rr'}, \quad |b - b'| \leq \delta_{pp'qq'rr'}, \quad |c - c'| \leq \delta_{pp'qq'rr'}.
    \end{align*}

    In classical geometry of constant curvature $K$ (sphere, Euclidean plane, and hyperbolic plane), the side lengths $(a, b, c)$ uniquely determine the shape of a triangle (up to rigid motion) provided $a, b, c$ satisfy the triangle inequality.
    The angle $\eta \coloneqq \angle_p(q,r)$ (or its model‐space counterpart $\bar{\eta}$) is a continuous function of $(a, b, c)$.
    \begin{itemize}
        \item If $K = 0$ (Euclidean), one have the law of cosines
        \begin{align*}
            c^2 = a^2 + b^2 - 2ab \cos(\eta),
        \end{align*}
        so
        \begin{align*}
            \cos(\eta) = \frac{a^2 + b^2 + c^2}{2ab}.
        \end{align*}
        This is a rational, continuous function of $(a, b, c)$.
        \item If $K > 0$ (spherical), the spherical law of cosines yield
        \begin{align*}
            \cos(\sqrt{K}c) = \cos(\sqrt{K}a)\cos(\sqrt{K}b) + \sin(\sqrt{K}a)\sin(\sqrt{K}a)\sin(\sqrt{K}b)\cos(\eta).
        \end{align*}
        \item If $ < 0$ (hyperbolic), one have similar hyperbolic law of cosines with $\cosh$ and $\sinh$.
        \begin{align*}
            \cosh(c / K) = \cosh(a / K)\cosh(b / K) - \sinh(a / K)\sinh(b / K)\cos(\eta).
        \end{align*}
    \end{itemize}
    In each case, as long as $a, b, c \leq \pi / \sqrt{|K|}$, one remain in a region where the side‐length–angle relation is well‐defined and continuously differentiable.
    Then, there exists a function
    \begin{align*}
        F \colon \{(a, b, c)\} \subset 
        \mathbb{R}^3_{>0} \to [0, \pi],
    \end{align*}
    so that if $\triangle xyz$ in the model space has sides $(a, b, c)$, then the angle at $x$ is $F(a, b, c)$.
    Moreover, $F$ is Lipschitz continuous on the domain $\{(a, b, c) \mid a + b + c \leq \pi / \sqrt{K} \}$.
    Hence, if $(a, b, c)$ and $(a', b', c')$ are close in $\mathbb{R}^3$, then
    \begin{align*}
        \left|F(a, b, c) - F(a', b', c')\right| \leq K_0\left(|a - a'| + |b - b'| + |c - c'|\right),
    \end{align*}
    for some constant $K_0$ depending only on $\max(a, b, c) \leq \pi / \sqrt{K}$.

    Now connect the actual angles $\angle_p(q,r)$, $\angle_{p'}(q',r')$ in $\mathrm{CAT}(K)$ to their comparison angles $\bar{\alpha}$, $\bar{\alpha}'$ in the model space.  For $\triangle pqr\subset M$, choose the comparison triangle $\triangle \bar{p}\bar{q}\bar{r}\subset \bar{M}$ in the model space of curvature $K$, with side lengths $\bar{p}\bar{q}=a$, $\bar{q}\bar{r}=b$, $\bar{r}\bar{p}=c$.
    Let $\bar{\eta} = \angle_{\bar{p}}(\bar{q},\bar{r})$.  
    For $\triangle p'q'r'\subset M$, choose $\triangle \bar{p}'\bar{q}'\bar{r}'\subset \bar{M}$ similarly with side lengths $a',b',c'$.
    Let $\bar{\eta}' = \angle_{\bar{p}'}(\bar{q}',\bar{r}')$.  
    
    By Lemma~\ref{lem:angle_comparison} in $\mathrm{CAT}(K)$:
    \begin{align*}
        \angle_p(q,r) &\leq \bar{\eta}, \\
        \angle_{p'}(q',r') &\leq \bar{\eta}'.
    \end{align*}
    Symmetrically reversing the roles, one also get
    \begin{align*}
        \bar{\eta} \leq \angle_p(q,r).
    \end{align*}
    Here, $\angle_p(q,r)\approx \bar{\eta}$ and $\angle_{p'}(q',r')\approx \bar{\eta}'$.  
    Hence
    \begin{align*}
        |\angle_p(q,r) - \angle_{p'}(q',r')| &\leq |\bar{\alpha} - \bar{\eta}'| + |\angle_p(q,r) - \bar{\eta}| + |\angle_{p'}(q',r') - \bar{\eta}'|.
    \end{align*}
    But each difference $|\angle_p(q,r)-\bar{\eta}|$ is known to be small by the usual $\mathrm{CAT}(K)$ thin triangle property. 
    Specifically, if the perimeter is $\le \pi/\sqrt{K}$, the difference $\angle_p(q,r)-\bar{\eta}$ can be bounded by a constant times the diameter of $\triangle pqr$; but that diameter is $\le \max(a,b,c)$, already controlled.  

    In fact, in standard statements, one typically get an inequality of the form
    \begin{align*}
        |\angle_p(q,r) - \bar{\eta}| \leq  \varepsilon_1(a,b,c)\quad\text{with }\varepsilon_1\to0\text{ as }a,b,c\to0,
    \end{align*}
    and similarly for $\angle_{p'}(q',r')$.
    Since one are only after a linear bound in the final statement, it suffices that each difference is bounded by a universal constant (depending on $\pi/\sqrt{K}$).
    Thus, effectively
    \begin{align*}
        |\angle_p(q,r) - \angle_{p'}(q',r')| \leq 2\,(\text{const}) + |\bar{\eta} - \bar{\eta}'|.
    \end{align*}
    
    Hence collecting all,
    \begin{align*}
        \bigl|\angle_p(q,r) \;-\; \angle_{p'}(q',r')\bigr| \leq C_1 + C_2\Delta
    \end{align*}
    for constants $C_1$ and $C_2$.
    In typical statements of the lemma, one either arranges that $\Delta$ is small so that the additive constant $C_1$ is overshadowed, or uses a slightly refined thinness difference argument to show $\angle_p(q,r)$ and $\bar{\eta}$ differ by $\le \tilde{C}\cdot\Delta$.
    In either case, one get a final bound of the form
    \begin{align*}
        \bigl|\angle_p(q,r) \;-\; \angle_{p'}(q',r')\bigr| \leq C\Delta = C(d(p,p') + d(q,q') + d(r,r')).
    \end{align*}
    This completes the proof.
\end{proof}

\begin{proof}[Proof for Proposition~\ref{prp:angle_perturbation}]
    First, from the geodesic convexity, if $\nu_x$ and $\nu_{x'}$ are close in distribution, then
    \begin{align*}
        d\bigl(\mu^*(x),\mu^*(x')\bigr) = C''\epsilon,
    \end{align*}
    for some constant $C''$ depending on \(\alpha(K,D)\) and distributional assumptions (e.g. sub‐Gaussianity or bounded diameter ensuring all integrals are finite).

    Compare angles $\angle_{\mu^*(x)}(u,v)$ and $\angle_{\gamma^*(x')}(u,v)$.
    Let $[\mu^*(x),u]$ be the geodesic from $\mu^*(x)$ to $u$, $[\mu^*(x'),u]$ be the geodesic from $\mu^*(x')$ to $u$, and similarly for $[\mu^*(x), v]$ and $[\mu^(x'), v]$.
    Consider two triangles $\triangle \bigl(\mu^*(x),\,u,\,\mu^*(x')\bigr)$ and $\triangle \bigl(\mu^*(x),\,v,\,\mu^*(x')\bigr)$.  
    Observe that $\mathrm{diam}(\mathcal{M})\le D$, so if $\mu^*(x)$ and $\mu^*(x')$ are also $\le O(\epsilon)$ apart, then each of these triangles has perimeter $2D + O(\epsilon)$.
    If $K>0$, $2D + O(\epsilon)< \pi/\bigl(\sqrt{K}\bigr)$ by the initial assumption $D< \frac{\pi}{2\sqrt{K}}$ and $\epsilon$ small enough.
    Hence, each triangle is validly contained in a region where one can apply $\mathrm{CAT}(K)$ angle comparisons (and the model‐space comparison).

    Let
    \begin{align*}
        p = \mu^*(x),\;\; q = u,\;\; r = \mu^*(x'),        
    \end{align*}
    and
    \begin{align*}
        p' = \mu^*(x'),\;\; q' = u,\;\; r' = \mu^*(x).
    \end{align*}
    Then the pair \(\triangle pqr\) and \(\triangle p'q'r'\) have corresponding points:
    \begin{align*}
        p \leftrightarrow p', \quad q \leftrightarrow q',\quad r \leftrightarrow r'.
    \end{align*}
    Notice that \(q=q'\) is actually the same point \(u\).  
    The sum of vertex perturbations is
    \begin{align*}
        d\bigl(p,p'\bigr) + d\bigl(q,q'\bigr) + d\bigl(r,r'\bigr) &= d\bigl(\mu^*(x),\mu^*(x')\bigr) + 0 + d\bigl(\mu^*(x'),\mu^*(x)\bigr) \\
          &= 2 d\bigl(\mu^*(x),\mu^*(x')\bigr),
    \end{align*}
    and \(d(\mu^*(x),\mu^*(x'))\le C''\,\epsilon\).  
    By Lemma~\ref{lem:angle_continuity},
    \begin{align*}
        \bigl|\angle_p(q,r) - \angle_{p'}(q',r')\bigr| \leq C_1 \bigl[d(p,p') + d(q,q') + d(r,r')\bigr].
    \end{align*}
    Hence
    \begin{align*}
        \Bigl|\angle_{\mu^*(x)}\bigl(u,\mu^*(x')\bigr) - \angle_{\mu^*(x')}\bigl(u,\mu^*(x)\bigr) \Bigr| &\leq  C_1\,\bigl(2\,d(\mu^*(x),\mu^*(x'))\bigr) \\
        &\leq 
        2\,C_1\,C''\,\epsilon.
    \end{align*}
    Similarly, for \(\triangle \mu^*(x)\,v\,\mu^*(x')\), one get the same type of bound in terms of \(\epsilon\). 

    Recall that \(\angle_{\mu^*(x)}(u,v)\) is the Alexandrov angle between geodesics \([\mu^*(x)u]\) and \([\mu^*(x)v]\).
    In a \(\mathrm{CAT}(K)\) space, the angle \(\angle_{\mu^*(x)}(u,v)\) can be added or compared if we know angles involving a third point \(\mu^*(x')\). 
    Thus, 
    \begin{align*}
        \bigl|\angle_{\mu^*(x)}(u,v)
        \;-\;
        (\angle_{\mu^*(x)}(u,\mu^*(x')) \;+\; \angle_{\mu^*(x')}(u,v) - \pi)\bigr|
        \;\le\;
        C_2\cdot d(\mu^*(x),\mu^*(x')),
    \end{align*}
    for some constant \(C_2\).

    Putting all these small angle increments together, conclude that
    \begin{align*}
        \bigl|\angle_{\mu^*(x)}(u,v) 
        \;-\;
        \angle_{\mu^*(x')}(u,v)\bigr|
        \;\;\le\;\;
        C\,d(\mu^*(x),\mu^*(x'))
        \;=\;
        O(\epsilon).
    \end{align*}
    Hence the angles at \(\mu^*(x)\) versus \(\mu^*(x')\) differ by a linear factor in \(\epsilon\). 
\end{proof}

\begin{proof}[Proof for Theorem~\ref{thm:angle_stability_conditional_frechet_means}]
    From Proposition~\ref{prp:angle_perturbation}, if \(\nu_x\approx \nu_{x'}\) (i.e.\ their distance is \(\le\epsilon\)), then for any pair \((u,v)\),
    \begin{align*}
        \Bigl|\,
        \angle_{\mu^*(x)}(u,v)
        \;-\;
        \angle_{\mu^*(x')}(u,v)
        \,\Bigr|
        \;\;\le\;\;
        C_1\,\epsilon,
    \end{align*}
    for some constant \(C_1>0\).
    Hence for one pair of directions \((u,v)\), one get a linear‐in‐\(\epsilon\) bound on how much the angle can change.
    
    Now consider not just one pair, but all pairs \((u_i,u_j)\) with \(1\le i<j\le m\). 
    But since each \(\angle_{\mu^*(x)}(u_i,u_j)\) is covered by the same result,
    \begin{align*}
        \Bigl|\,
        \angle_{\mu^*(x)}(u_i,u_j)
        \;-\;
        \angle_{\mu^*(x')}(u_i,u_j)
        \,\Bigr|
        \;\;\le\;\;
        C_1\,\epsilon,
    \end{align*}
    for each pair \((u_i,u_j)\).
    Then the supremum over \(i<j\) is also \(\le C_1\,\epsilon\).
    In fact, it is not even needed a union bound in probability sense, and each pair is bounded by the same linear factor \(C_1\,\epsilon\).
    Hence
    \begin{align*}
        \sup_{1\le i<j\le m}
        \Bigl|\,
        \angle_{\mu^*(x)}(u_i,u_j)
        \;-\;
        \angle_{\mu^*(x')}(u_i,u_j)
        \,\Bigr|
        \;\;\le\;\;
        C_1\,\epsilon.
    \end{align*}
    Thus one immediately extend from one pair to all \(\binom{m}{2}\) pairs \((u_i,u_j)\). 

    In the hypothesis, it is typically stated that whenever \(\|x - x'\|\) is small, then \(\nu_x\) and \(\nu_{x'}\) differ by \(\epsilon(\|x-x'\|)\).
    For instance, in a classical kernel or smoothing scenario, if \(\|x - x'\|\le\delta\), then
    \begin{align*}
        d_W\bigl(\nu_x,\nu_{x'}\bigr)
        \;\le\;
        \epsilon(\delta).
    \end{align*}
    Hence setting \(\epsilon=\epsilon(\delta)\), for \(\|x - x'\|\le\delta\),
    \begin{align*}
        \sup_{1\le i<j\le m}
        \Bigl|\,
        \angle_{\mu^*(x)}(u_i,u_j)
        \;-\;
        \angle_{\mu^*(x')}(u_i,u_j)
        \,\Bigr|
        \;\;\le\;\;
        C_1\;\epsilon(\delta).
    \end{align*}
    Thus the angle difference is a function of \(\delta\).
    Hence define \(C := C_1\) (it might also absorb small distributional constants if needed), and putting it all together yields the proof.
\end{proof}

\clearpage
\subsection{Proofs for Section~\ref{sec:theory:local_jet_expansion_of_frechet_functionals}}

\begin{proof}[Proof for Lemma~\ref{lem:projection_of_angles_in_tangent_cones}]
    In a smooth Riemannian manifold, for sufficiently close \(u\) and \(v\), the unique geodesics \(\gamma_u \colon [0,\|U\|]\to \mathcal{M}\) and \(\gamma_v : [0,\|V\|]\to \mathcal{M}\) from \(z\) to \(u\), respectively from \(z\) to \(v\), have well‐defined initial velocity vectors at \(z\).  
    Let \(\dot{\gamma}_u(0)\in T_z\mathcal{M}\) be the tangent vector to \(\gamma_u\) at \(z\).
    By construction, this is precisely \(U\) if we identify \(U\in T_z\mathcal{M}\) with the velocity vector in normal coordinates.
    Similarly, \(\dot{\gamma}_v(0)=V\in T_z\mathcal{M}\).  

    In Riemannian geometry (without singularities around \(z\)), one then have:
    \begin{align*}
        \angle_z(u,v)
        \;=\;
        \angle\Bigl(\,\dot{\gamma}_u(0),\,\dot{\gamma}_v(0)\Bigr)
        \;=\;
        \cos^{-1}\Bigl(\frac{g_z\bigl(\dot{\gamma}_u(0),\,\dot{\gamma}_v(0)\bigr)}
                                 {\|\dot{\gamma}_u(0)\|\;\|\dot{\gamma}_v(0)\|}\Bigr).
    \end{align*}
    Here \(g_z(\cdot,\cdot)\) is the Riemannian metric at \(z\).
    In simpler notation, if one identify \(\dot{\gamma}_u(0)=U\) and \(\dot{\gamma}_v(0)=V\), then
    \begin{align*}
        \angle_z(u,v)
        \;=\;
        \cos^{-1}\Bigl(
         \frac{g_z(U,V)}
              {\sqrt{\,g_z(U,U)\,g_z(V,V)\,}}
        \Bigr).
    \end{align*}
    
    Use a geodesic coordinate system \(\Phi\colon T_z\mathcal{M} \supset B_{\delta}(0)\to \mathcal{M}\) around \(z\), with \(\Phi(0)=z\) and \(\mathrm{d}\Phi|_0 = \mathrm{Id}\).
    Concretely, \(\Phi(U) = \exp_z(U)\).  
    In these coordinates, the metric \(g_{ij}(X)\) at a point \(X\) in a small ball around \(0\in T_z\mathcal{M}\) has the well‐known expansions:
    \begin{align*}
        g_{ij}(X) 
          \;=\;
          \delta_{ij} 
          \;-\;
          \tfrac13\,R_{ikj\ell}(0)\,X^k\,X^\ell
          \;+\;
          O(\|X\|^3),
    \end{align*}
    where \(R_{ikj\ell}\) is the Riemann curvature tensor at \(z\).
    The \(-\,\tfrac13\) factor is a standard convention from normal coordinate expansions; the main point is that the first non‐trivial corrections appear at second order in \(\|X\|\).  

    Hence, for vectors \(U,V\in T_z\mathcal{M}\) with small norms, the inner product in the manifold at \(z\) is
    \begin{align*}
        g_z(U,V)
        \;=\;
        \delta_{ij}\,U^i\,V^j
        \;-\;
        \tfrac13\,\sum_{k,\ell}\Bigl(\frac12R_{i k j \ell}(0)\Bigr)\,\dots
        \;+\;
        O\bigl(\|U\|\|V\|\max(\|U\|,\|V\|)\bigr).
    \end{align*}
    In simpler notation:
    \begin{align*}
        g_z(U,V)
        \;=\;
        \langle U,V\rangle_{\mathrm{Eucl}}
        \;+\;
        O\bigl(\|U\|\;\|V\|\;\max(\|U\|,\|V\|)\bigr).
    \end{align*}
    
    From the above expansions,
    \begin{align*}
        \sqrt{\,g_z(U,U)\,}
           \;=\;
           \|U\|_{\mathrm{Eucl}}
           \bigl[\,1 + O(\|U\|^2)\bigr]^{1/2}
           \;=\;
           \|U\|
           +O(\|U\|^3).
    \end{align*}
    Similarly for \(\|V\|\).
    In addition,
    \begin{align*}
        g_z(U,V)
       \;=\;
       \langle U,V\rangle_{\mathrm{Eucl}}
       \;+\;
       O(\|U\|\;\|V\|\;\max(\|U\|,\|V\|)).
    \end{align*}
    Thus
    \begin{align*}
        \frac{g_z(U,V)}{\sqrt{\,g_z(U,U)\,g_z(V,V)\,}}
        \;=\;
        \frac{\langle U,V\rangle}{\|U\|\;\|V\|}
        \;+\;
        O(\|U\|^2 + \|V\|^2),
    \end{align*}
    since each correction is second‐order in \(\|U\|\) or \(\|V\|\). 
    Moreover,
    \begin{align*}
        \angle_z(u,v)
        \;=\;
        \cos^{-1}\Bigl(\frac{g_z(U,V)}{\sqrt{\,g_z(U,U)\,g_z(V,V)\,}}\Bigr)
        \;=\;
        \cos^{-1}\Bigl(
         \frac{\langle U,V\rangle}{\|U\|\;\|V\|}
         \;+\;
         O(\|U\|^2+\|V\|^2)
        \Bigr).
    \end{align*}

   When \(\theta_0 = \angle_0(U,V)\) denotes the Euclidean angle in the tangent space,
   \begin{align*}
       \cos(\theta_0)
       \;=\;
       \frac{\langle U,V\rangle}{\|U\|\;\|V\|}.
   \end{align*}
   Then
   \begin{align*}
       \cos(\angle_z(u,v))
       \;=\;
       \cos(\theta_0) + O(\|U\|^2 + \|V\|^2).
   \end{align*}

    Since \(\cos\) is locally invertible around angles not equal to \(0,\pi\) (and we assume \(\theta_0\) is not degenerate or extremely close to \(\pi\) for typical use), a standard expansion yields:
    \begin{align*}
        \angle_z(u,v)
        \;=\;
        \theta_0 
        \;+\;
        O(\|U\|^2 + \|V\|^2).
    \end{align*}
    Concretely, if \(\theta_1=\theta_0+\delta\) satisfies \(\cos(\theta_1)=\cos(\theta_0)+\eta\), then \(\delta=O(\eta)\) for small \(\eta\). Here, \(\eta = O(\|U\|^2 + \|V\|^2)\).  

    Hence,
    \begin{align*}
        \angle_z(u,v)
        \;=\;
        \theta_0
        \;+\;
        O(\|U\|^2 + \|V\|^2),
    \end{align*}
    where \(\theta_0 = \angle_0(U,V)\) is the Euclidean angle of \(U\) and \(V\) in \(T_zM\).
    This completes the proof.
\end{proof}

\begin{proof}[Proof for Proposition~\ref{prp:local_jet_expansion_of_frechet_functionals}]
    Let \(\gamma(t)\) be a geodesic in \((\mathcal{M},g)\) with \(\gamma(0)=\mu^*\) and \(\dot{\gamma}(0)=v\).
    Consider \(F(\gamma(t))\).
    Then
    \begin{align*}
        \frac{d}{dt}\,F(\gamma(t))
        \Big|_{t=0}
        &=
        \frac{d}{dt}\,\int d^2\bigl(y,\gamma(t)\bigr)\,d\nu(y)
        \Big|_{t=0} \\
        &=
        \int \frac{d}{dt}\,d^2\bigl(y,\gamma(t)\bigr)
        \Big|_{t=0}
        \,d\mu(y).
    \end{align*}
    By standard Riemannian geometry formulas, if \(\sigma(s)\) is the geodesic \([\,y\,\gamma(t)]\), then
    \begin{align*}
        \frac{d}{dt}\,d^2\bigl(y,\gamma(t)\bigr)
        \;=\;
        2\,d(y,\gamma(t))
        \,\Bigl\langle 
           \dot{\gamma}(t),\,\dot{\sigma}(0)
        \Bigr\rangle_{g_{\gamma(t)}}.
    \end{align*}
   At \(t=0\), since \(\gamma(0)=\mu^*\), one interpret \(\dot{\sigma}(0)\) as the initial velocity from \(\mu^*\) toward \(y\).  
   If \(\mu^*\) is a minimizer, the directional derivative must vanish for all directions \(v\).
   Formally, this implies
   \begin{align*}
        \nabla F(\gamma^*)
        \;=\;
        0.
   \end{align*}
    Hence the first‐order term in the expansion of \(F(z)\) around \(z=\mu^*\) vanishes.  

    Next, examine the second derivative (or Hessian) of \(F\) at \(\gamma^*\).
    \begin{align*}
        \mathrm{Hess}_z(F)(v,v)
        \;=\;
        \frac{d^2}{dt^2}\,F(\exp_z(t\,v))
        \Big|_{t=0}.
    \end{align*}
    When \(z=\mu^*\), and \(\mu^*\) is the unique minimizer, these second derivatives measure how strongly \(F\) curves upward around \(\mu^*\).  

    In fact, the Gauss–Manasse–Busemann formula for second variation of distance shows that
    \begin{align*}
        \mathrm{H}_{\mu^*}(F)(v,v)
        \;=\;
        \int \mathrm{H}_{\mu^*}\bigl[d^2(y,\cdot)\bigr](v,v)
        \;d\mu(y).
    \end{align*}
    Each term \(\mathrm{H}_{\mu^*}\bigl[d^2(y,\cdot)\bigr](v,v)\) can be computed from the second variation of \(\rho(\mu^*,y)=d(\mu^*,y)\).
    In standard curvature conditions (especially nonpositive curvature or small diameter in positive curvature), this Hessian is positive semidefinite, ensuring local convexity around \(\mu^*\).
    If \(\mathrm{CAT}(0)\) or if \(\mathrm{diam}<\pi/(2\sqrt{K})\) in \(\mathrm{CAT}(K)\), then \(d^2(y,\cdot)\) is geodesically convex with a definite strong convexity modulus \(\alpha>0\).  
    Integrating preserves that positivity, giving \(\mathrm{H}_{\mu^*}(F)\succeq 0\).  
    Hence there is a well‐defined linear operator \(H_{\mu^*}\) on \(T_{\mu^*}\mathcal{M}\) representing \(\mathrm{H}_{\mu^*}(F)\).  

    Because \(F\) is at least \(C^2\), one can write the remainder \(R(v)\) in a standard Taylor expansion form:
    \begin{align*}
        R(v)
       \;=\;
       O\bigl(\|v\|^3\bigr)
       \quad\text{as }v\to 0.
    \end{align*}
    Concretely, one can show this by analyzing the third derivative of \(F\) in normal coordinates:  
    \begin{align*}
        \frac{d^3}{dt^3}F\bigl(\exp_{\mu^*}(t\,v)\bigr)
    \end{align*}
    remains bounded as \(t\to0\), so the third‐order term is well‐defined.
    
    Hence the local expansion is
    \begin{align*}
        F\bigl(\exp_{\mu^*}(v)\bigr)
        \;=\;
        F(\mu^*)
        \;+\;
        \underbrace{\bigl\langle\nabla F(\mu^*),\,v\bigr\rangle}_{=0}
        \;+\;
        \tfrac12\,\bigl\langle H_{\mu^*}\,v,\;v\bigr\rangle
        \;+\;
        R(v),
        \quad
        R(v)=O(\|v\|^3).
    \end{align*}
    That is precisely the jet expansion for the Fréchet functional around \(\mu^*\).
\end{proof}

\subsection{Proofs for Section~\ref{sec:theory:auxiliary_statements}}

\begin{proof}[Proof for Proposition~\ref{prp:angle_splitting_in_distance_sums}]
    From the local Riemannian (or \(\mathrm{CAT}(K)\)) law of cosines in \(\triangle \mu^*\,y\,z\):
    \begin{align*}
        d^2(y,z)
        &=\;
        d^2\bigl(y,\mu^*\bigr)
        \;+\;
        d^2\bigl(z,\mu^*\bigr)
        \;-\;
        2\,d\bigl(y,\mu^*\bigr)\,d\bigl(\mu^*,z\bigr)\,
        \cos\Bigl(\angle_{\mu^*}(y,z)\Bigr).
    \end{align*}
    Rewriting as
    \begin{align*}
        d^2(y,z) 
        \;-\;
        d^2\bigl(y,\mu^*\bigr)
        &=\;
        d^2\bigl(z,\mu^*\bigr)
        \;-\;
        2\,d\bigl(y,\mu^*\bigr)\,d\bigl(\mu^*,z\bigr)\,
        \cos\Bigl(\angle_{\mu^*}(y,z)\Bigr).
    \end{align*}
    Here, let
    \begin{align*}
        \Delta_{\mathrm{dist}}\bigl(y,z,\mu^*\bigr)
        \;\coloneqq\;
        d^2\bigl(\mu^*,z\bigr)
        \;-\;
        2\,d\bigl(y,\mu^*\bigr)\,d\bigl(\mu^*,z\bigr),
    \end{align*}
    \begin{align*}
        \Delta_{\mathrm{angle}}\bigl(y,z,\mu^*\bigr)
        \;:=\;
        2\,d\bigl(y,\mu^*\bigr)\,d\bigl(z,\mu^*\bigr)\,
        \Bigl[\,
        1
        \;-\;
        \cos\bigl(\angle_{\mu^*}(y,z)\bigr)
        \Bigr].
    \end{align*}
    Observe that
    \begin{align*}
        -2\,d(y,\mu^*)\,d(\mu^*,z)\,\cos(\angle_{\mu^*}(y,z))
        \;\;=\;\;
        \bigl[\Delta_{\mathrm{dist}} - d^2(\mu^*,z)\bigr]
        \;-\;
        \Delta_{\mathrm{angle}},
    \end{align*}
    and
    \begin{align*}
        d^2(y,z)
        \;=\;
        d^2\bigl(y,\mu^*\bigr)
        \;+\;
        \Delta_{\mathrm{dist}}\bigl(y,z,\mu^*\bigr)
        \;+\;
        \Delta_{\mathrm{angle}}\bigl(y,z,\mu^*\bigr).
    \end{align*}
    So the desired identity is obtained.
\end{proof}

\begin{proof}[Proof for Proposition~\ref{prp:angle_distance_decomposition}]
    Let
    \begin{itemize}
        \item \(r_0 \;=\; d\bigl(\mu^*(x),\,u_0\bigr)\).  (A constant for each \(x\) if \(u_0\) is fixed.)
        \item \(r(y) \;=\; d\bigl(\mu^*(x),\,y\bigr)=R_x(y)\).  (A variable depending on \(y\).)
        \item \(\alpha(y) \;=\; d(u_0,y)\).  Another side of the triangle.
    \end{itemize}
    Then from the local law of cosines,
    \begin{align*}
        r(y)^2 
        \;=\;
        r_0^2 + \alpha(y)^2 
        \;-\;
        2\,r_0\,\alpha(y)\,\cos\bigl(\angle_{\mu^*(x)}(u_0,y)\bigr).
    \end{align*}
    But \(\angle_{\mu^*(x)}(u_0,y)=\phi_x(y)\).
    So
    \begin{align*}
        r(y)^2
        \;=\;
        r_0^2 + \alpha(y)^2 \;-\; 2\,r_0\,\alpha(y)\,\cos\bigl(\phi_x(y)\bigr).
    \end{align*}
    We write it as
    \begin{align*}
        \Psi_x(y)
        \;=\;
        r(y)^2
        \;=\;
        r_0^2 + \alpha(y)^2 \;-\; 2\,r_0\,\alpha(y)\,\cos\bigl(\phi_x(y)\bigr).
    \end{align*}
    Now, to link \(\alpha(y)=d(u_0,y)\) with \(r(y)\) and \(\phi_x(y)\), we may do yet another small expansion or an additional law-of-cosines approach.
    If the manifold is small enough in diameter, we can treat \(\alpha(y)\) also as a function of \((r(y),\phi_x(y))\).

    Also, let
    \begin{align*}
        \alpha(y)^2
        \;=\;
        r_0^2 + r(y)^2 
        \;-\;
        2\,r_0\,r(y)\,\cos\bigl(\angle_{u_0}(\mu^*(x),\,y)\bigr).
    \end{align*}
    But \(\angle_{u_0}(\mu^*(x),y)\) is not necessarily the same as \(\phi_x(y)\).
    Then,
    \begin{align*}
        \alpha(y) 
        \;=\;
        \alpha\bigl(r(y),\,\phi_x(y)\bigr)
        \;=\;
        r_0 + O\bigl(r(y)\bigr)
    \end{align*}
    plus terms involving \(\phi_x(y)\). 
    In a small neighborhood, these expansions typically become second-order in \(\phi_x(y)\).
    Hence, \(\alpha(y)\) is not an independent variable; it’s determined once \(\phi_x(y)\) and \(r(y)=R_x(y)\) are known. 

    In addition, 
    \begin{align*}
        r(y)^2
        \;=\;
        r_0^2 + \alpha(y)^2 - 2\,r_0\,\alpha(y)\,\cos\bigl(\phi_x(y)\bigr).
    \end{align*}
    This yields a final expression of form
    \begin{align*}
        r(y)^2
          \;=\;
          r_0^2
          \;+\;
          \Bigl(\text{some linear or quadratic function in }r(y)\Bigr)
          \;+\;
          \Bigl(\text{terms in }\phi_x(y)\Bigr).
    \end{align*}

    In short, the function \(\Psi_x(y) = r(y)^2\) can be viewed as
    \begin{align*}
        \Psi_x(y)
        \;=\;
        \underbrace{f_{\mathrm{radial}}\bigl(r(y)\bigr)}_{\text{part ignoring angles}}
        \;+\;
        \underbrace{f_{\mathrm{angle}}\bigl(r(y), \,\phi_x(y)\bigr)}_{\text{angle corrections}},
    \end{align*}
    where \(f_{\mathrm{angle}}\) is typically second‐order or cross‐term in \(\phi_x(y)\).

    Consider
    \begin{align*}
        \mathbb{E}_{\nu_x}\bigl[\Psi_x(Y)\bigr]
        \;=\;
        \int r(y)^2\,d\nu_x(y).
    \end{align*}
    Let
    \begin{itemize}
        \item \( \mathbb{E}_{\nu_x}[\,r(Y)\,] \) as some average radius. 
        \item \( \mathbb{E}_{\nu_x}[\phi_x(Y)] \) as average angle.  
    \end{itemize}

    One obtains expansions, where
    \begin{align*}
        \Psi_x(Y) - r(y)^2_{\big| \phi_x(Y)=0}
    \end{align*}
    is some cross or higher‐order term in \(\phi_x(Y)\).

    Then,
    \begin{align*}
        \mathbb{E}\bigl[\Psi_x(Y)^2\bigr]
        \;=\;
        \int \bigl[r(y)^2\bigr]^2\,d\nu_x(y).
    \end{align*}
    Expanding \(\bigl[r(y)^2\bigr]^2\) yields
    \begin{align*}
        \bigl[r(y)^2\bigr]^2
        \;=\;
        r(y)^4
        \;=\;
        \Bigl( f_{\mathrm{radial}}(r(y)) + f_{\mathrm{angle}}(r(y),\,\phi_x(y)) \Bigr)^2.
    \end{align*}
    One obtains terms:
    \begin{itemize}
        \item \(\bigl[f_{\mathrm{radial}}(r)\bigr]^2\),  
        \item cross terms \(2\,f_{\mathrm{radial}}(r)\,f_{\mathrm{angle}}(r,\phi)\),  
        \item \(\bigl[f_{\mathrm{angle}}(r,\phi)\bigr]^2\).  
    \end{itemize}

    By taking expectation,
    \begin{align*}
        \mathbb{E}\bigl[r(y)^4\bigr]
        \;=\;
        \mathbb{E}\Bigl(\bigl[f_{\mathrm{radial}}(r)\bigr]^2\Bigr)
        \;+\;
        2\,\mathbb{E}\Bigl( f_{\mathrm{radial}}(r)\,f_{\mathrm{angle}}(r,\phi)\Bigr)
        \;+\;
        \mathbb{E}\Bigl(\bigl[f_{\mathrm{angle}}(r,\phi)\bigr]^2\Bigr).
    \end{align*}
    Then, \(\mathrm{Var}[\Psi_x(Y)] = \mathbb{E}[\Psi_x(Y)^2] - (\mathbb{E}[\Psi_x(Y)])^2\) can be rearranged, grouping the radial part of the variance from the angle cross terms:
    \begin{align*}
        \mathrm{Var}\bigl[\Psi_x(Y)\bigr]
        =
        \mathrm{Var}\Bigl(\underbrace{f_{\mathrm{radial}}(r(Y))}_{\text{like }r(Y)^2\text{ ignoring angles}}\Bigr)
        +
        \mathrm{Cov}\bigl[\phi_x(Y),\,r(Y)^2\bigr]
        +
        \bigl(\text{smaller or higher‐order expansions in }\phi_x(Y)\bigr).
    \end{align*}

    Explicitly, let 
    \begin{align*}
        A_x(Y) \;=\; f_{\mathrm{radial}}\bigl(r(Y)\bigr)\quad(\text{often }=r(Y)^2)
    \end{align*}
   ignoring angular corrections, and
   \begin{align*}
       B_x(Y) \;=\; f_{\mathrm{angle}}\bigl(r(Y), \phi_x(Y)\bigr)
    \quad(\text{some function capturing dependence on angle }\phi_x(Y)).
   \end{align*}
   Then
   \begin{align*}
       \Psi_x(Y)
       \;=\;
       A_x(Y) \;+\; B_x(Y).
   \end{align*}

    Using
    \begin{align*}
        \mathrm{Var}[A+B] 
        = 
        \mathrm{Var}[A] + \mathrm{Var}[B] + 2\,\mathrm{Cov}(A,B),
    \end{align*}
    one have
    \begin{align*}
        \mathrm{Var}[\Psi_x(Y)]
        \;=\;
        \mathrm{Var}[A_x(Y)]
        \;+\;
        \mathrm{Var}[B_x(Y)]
        \;+\;
        2\,\mathrm{Cov}\bigl(A_x(Y),\,B_x(Y)\bigr).
    \end{align*}
    If \(B_x(Y)\) is small or mostly depends on \(\phi_x(Y)\) with some bounding condition, one can interpret \(\mathrm{Var}[B_x(Y)]\) and \(\mathrm{Cov}(A_x(Y),\,B_x(Y))\) as cross/higher‐order expansions.  
    Here, \(\mathrm{Var}[A_x(Y)]\) is the purely radial piece \(\mathrm{Var}[R_x(Y)^2]\). The cross terms or expansions in \(\phi_x(Y)\) become \(\mathrm{Cov}\bigl(\phi_x(Y),\,R_x(Y)^2\bigr)\).
    Hence we get the claimed partial decomposition.
\end{proof}

\clearpage

\section{Additional Analysis on $\epsilon$-Approximate $\mathrm{CAT}(K)$ Space}
\label{sec:additional_analysis_approximate_cat_k}
In comparison geometry framework, the theoretical statements are provided on the model space with constant curvature.
In practice, however, real-world datasets may lie in spaces that only approximately satisfy the curvature conditions.
Below we introduce an $\epsilon$-approximate version of $\mathrm{CAT}(K)$ space, and derive perturbed versions of existence, uniqueness, and convexity-type results.
\begin{definition}[$\epsilon$-Approximate $\mathrm{CAT}(K)$ Space]
    Let $\epsilon > 0$.
    A geodesic metric space $(\mathcal{M}, d)$ is said to be $\epsilon$-approximate $\mathrm{CAT}(K)$ space if for every geodesic triangle $\triangle pqr$ of perimater less than $2D_K$ (where $D_K = \pi / \sqrt{K}$ if $K > 0$, otherwise $D_K = \infty$), and for any points $x$ and $y$ on the edges $[pq]$ and $[qr]$, respectively, one has
    \begin{align}
        d(x, y) \leq d_{\mathbb{M}^2_K}(\bar{x}, \bar{y}) + \epsilon,
    \end{align}
    where $\triangle \bar{p}\bar{q}\bar{r} \subset \mathbb{M}_K^2$ is the usual comparison triangle in the simply connected model space of constant curvature $K$.
\end{definition}
This definition allows a small additive slack $\epsilon$ in the usual comparison inequality.
When $\epsilon = 0$, we recover the standard definition of $\mathrm{CAT}(K)$.

\begin{theorem}[Approximate Geodesic Convexity of Squared Distance]
\label{thm:approximate_geodesic_convexity_of_squared_distance}
    Let $(\mathcal{M}, d)$ be an $\epsilon$-approximate $\mathrm{CAT}(K)$ space with $K < 0$.
    Fix any $p \in \mathcal{M}$, and define $f(x) = d^2(p, x)$.
    Then, for any geodesic $\gamma \colon [0, 1] \to \mathcal{M}$,
    \begin{align}
        f(\gamma(t)) \leq (1 - t) f(\gamma(0)) + t f(\gamma(1)) + O(\epsilon D),
    \end{align}
    where $D$ is the diameter of the relevant geodesic segment under consideration, or the whole space if bounded.
\end{theorem}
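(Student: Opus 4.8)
The plan is to adapt the comparison-triangle argument used in the proof of Lemma~\ref{lem:convexity_of_squared_distance_function}, but now carrying the additive slack $\epsilon$ through each step. Fix $p \in \mathcal{M}$ and a geodesic $\gamma \colon [0,1] \to \mathcal{M}$, and write $x = \gamma(0)$, $y = \gamma(1)$. Since $K < 0$ we have $D_K = \infty$, so the triangle $\triangle p x y$ is admissible with no perimeter restriction. Form the comparison triangle $\triangle \bar p \bar x \bar y \subset \mathbb{M}^2_K$ with matching side lengths, and let $\bar\gamma(t)$ be the point on $[\bar x \bar y]$ with $d_{\mathbb{M}^2_K}(\bar x, \bar\gamma(t)) = t\, d_{\mathbb{M}^2_K}(\bar x, \bar y)$; because $\gamma$ is a geodesic, this is exactly the point corresponding to $\gamma(t)$.

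First I would apply the $\epsilon$-approximate $\mathrm{CAT}(K)$ inequality to the vertex $p$, regarded as an endpoint of the edge $[px]$, and to the point $\gamma(t)$ on the opposite edge $[xy]$, obtaining
\[
    d(p, \gamma(t)) \;\leq\; d_{\mathbb{M}^2_K}(\bar p, \bar\gamma(t)) + \epsilon .
\]
Squaring and using that the relevant distances are bounded by $D$ gives
\[
    f(\gamma(t)) \;=\; d^2(p,\gamma(t)) \;\leq\; d^2_{\mathbb{M}^2_K}(\bar p, \bar\gamma(t)) + 2\epsilon D + \epsilon^2 .
\]
Next I would control the first term purely inside the model space. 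Since $K<0$, the space $\mathbb{M}^2_K$ is itself a genuine $\mathrm{CAT}(K)$ (indeed $\mathrm{CAT}(0)$) space, so Lemma~\ref{lem:convexity_of_squared_distance_function} applies to it: the map $\bar z \mapsto d^2_{\mathbb{M}^2_K}(\bar p, \bar z)$ is geodesically convex along $[\bar x \bar y]$, hence
\[
    d^2_{\mathbb{M}^2_K}(\bar p, \bar\gamma(t)) \;\leq\; (1-t)\, d^2_{\mathbb{M}^2_K}(\bar p, \bar x) + t\, d^2_{\mathbb{M}^2_K}(\bar p, \bar y) .
\]
By construction of the comparison triangle, $d_{\mathbb{M}^2_K}(\bar p, \bar x) = d(p,x) = d(p,\gamma(0))$ and $d_{\mathbb{M}^2_K}(\bar p, \bar y) = d(p,y) = d(p,\gamma(1))$, so the right-hand side equals $(1-t) f(\gamma(0)) + t f(\gamma(1))$. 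Combining with the squared inequality and absorbing $2\epsilon D + \epsilon^2 = O(\epsilon D)$ (using that $\epsilon$ is no larger than a constant multiple of $D$) completes the argument.

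The main obstacle — really the only place where anything beyond bookkeeping happens — is handling the interaction between squaring and the additive slack: the $\epsilon$ in the comparison inequality lives at the level of distances, and after squaring it produces a cross term $2\epsilon\, d_{\mathbb{M}^2_K}(\bar p, \bar\gamma(t))$ that must be absorbed. This is precisely why the statement involves the diameter $D$ of the relevant segment (or of the whole space when bounded); one must note that $d_{\mathbb{M}^2_K}(\bar p, \bar\gamma(t)) \leq D$, which holds since the comparison triangle preserves side lengths and $\bar\gamma(t)$ lies on the side $[\bar x\bar y]$. A secondary point worth stating explicitly is that the model-space convexity invoked in the third step is \emph{exact} (carries no $\epsilon$), because $\mathbb{M}^2_K$ satisfies the genuine $\mathrm{CAT}(K)$ condition; the only error term is the one introduced by the approximate comparison in $\mathcal{M}$ itself.
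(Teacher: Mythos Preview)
Your proposal is correct and follows essentially the same route as the paper's own proof: form the comparison triangle $\triangle \bar p \bar x \bar y$, apply the $\epsilon$-approximate comparison inequality to get $d(p,\gamma(t)) \leq d_{\mathbb{M}^2_K}(\bar p,\bar\gamma(t)) + \epsilon$, square and absorb the cross term using the diameter bound, then invoke the exact convexity of the squared distance in the non-positively curved model space. Your discussion of the main obstacle (the cross term $2\epsilon\,d_{\mathbb{M}^2_K}(\bar p,\bar\gamma(t))$ produced by squaring) is precisely the point the paper handles, and you are in fact slightly more careful than the paper in justifying $d_{\mathbb{M}^2_K}(\bar p,\bar\gamma(t)) \leq D$ and in citing Lemma~\ref{lem:convexity_of_squared_distance_function} for the model-space step.
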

\begin{proof}
    Let $\gamma \colon [0, 1] \to \mathcal{M}$ be a geodesic from $\gamma(0) = x$ to $\gamma(1) = y$.
    Define $\gamma(t)$ as the point at parameter $t$.
    We form a (possibly degenerate) triangle $\triangle pxy$ in $\mathcal{M}$.
    Then, $\triangle \bar{p}\bar{x}\bar{y}$ is the comparison triangle in the model space $\mathbb{M}_K^2$ that has side lengths
    \begin{align*}
        d_{\mathbb{M}_K^2}(\bar{p}, \bar{x}) = d(p, x), \quad d_{\mathbb{M}^2_K}(\bar{x}, \bar{y}) = d(x, y), \quad _{\mathbb{M}_K^2}(\bar{y}, \bar{p}) = (y, p).
    \end{align*}
    Let $\bar{\gamma}(t)$ be the point on $[\bar{x}, \bar{y}] \subset \triangle \bar{p}\bar{x}\bar{y}$ at fraction $t$.
    Because $\gamma$ is a geodesic and $[\bar{x}, \bar{y}]$ is also a geodesic in $\mathbb{M}_K^2$, the pair $\gamma(t) \leftrightarrow \bar{\gamma}(t)$ correspond naturally for the sub-segment ratio $t$.
    Here, we have
    \begin{align*}
        d(p, \gamma(t)) \leq d_{\mathbb{M}_K^2}(\bar{p}, \bar{\gamma}(t)) + C_1\epsilon,
    \end{align*}
    for some constant $C_1$.
    By taking squares,
    \begin{align*}
        d^2(p, \gamma(t)) \leq \left(d_{\mathbb{M}_K^2}(\bar{p}, \bar{\gamma}(t))\right)^2 + 2C_1 \epsilon d_{\mathbb{M}_K^2}(\bar{p}, \bar{\gamma}(t)) + (C_1\epsilon)^2.
    \end{align*}
    Since $K < 0$, the model space $\mathbb{M}_K^2$ is either Euclidean or hyperbolic.
    In both cases, it is known that
    \begin{align*}
        \{\bar{\gamma}(t) \mid t \in [0, 1] \} \subset [\bar{x}, \bar{y}],
    \end{align*}
    which yields $\bar{\gamma}(t)$ satisfying the usual convexity of the squared distance in a non-positive curvature setting.
    \begin{align*}
        \left(d_{\mathbb{M}_K^2}(\bar{p}, \bar{\gamma}(t))\right)^2 \leq (1 - t)\left(d_{\mathbb{M}_K^2}(\bar{p}, \bar{x})\right)^2 + t\left(d_{\mathbb{M}_K^2}(\bar{p}, \bar{y})\right)^2.
    \end{align*}
    Therefore,
    \begin{align*}
        d_{\mathbb{M}_K^2}(\bar{p}, \bar{\gamma}(t))^2 \leq (1 - t)d^2(p, x) + t d^2(p, y),
    \end{align*}
    and
    \begin{align*}
        d^2(p, \gamma(t)) &\leq (1 - t)d^2(p, x) + t d^2(p, y) + 2 C_1 \epsilon \left(d_{\mathbb{M}_K^2}(\bar{p}, \bar{\gamma}(t))\right) + (C_1 \epsilon)^2 \\
        &\leq (1 - t)d^2(p, x) + t d^2(p, y) + 2 C_1 \epsilon D' + (C_1 \epsilon)^2 \\
        &\leq (1 - t)d^2(p, x) + t d^2(p, y) + C_2 \epsilon D,
    \end{align*}
    for some constant $C_2 > 0$, where $D'$ is the diameter of the model space, and can be bounded by local diameter $D$.
    This can be written as
    \begin{align*}
        f(\gamma(t)) = d^2(p, \gamma(t)) \leq (1 - t) f(\gamma(0)) + t f(\gamma(1)) + C_2 \epsilon D,
    \end{align*}
    and it exactly states the approximate geodesic convexity for $f(x) = d^2(p, x)$.
\end{proof}
\begin{corollary}[Approximate Uniqueness of Fréchet Mean]
    Under the same $\epsilon$-approximate $\mathrm{CAT}(K)$ assumptions, consider the Fréchet functional
    \begin{align}
        F(x) = \int_\mathcal{M} d^2(y, x) d\nu(y),
    \end{align}
    for a compactly supported probability measure $\nu$.
    Then, one has the following.
    \begin{itemize}
        \item  A minimizer of $F$ exists for any $\epsilon > 0$.
        \item If $\epsilon$ is small, any two minimizers $m_1$ and $m_2$ must lie within a small neighborhood of each other:
        \begin{align}
            d(m_1, m_2) \leq O(\sqrt{\epsilon}).
        \end{align}
        Hence, strict uniqueness is replaced by an $\epsilon$-dependent bound.
    \end{itemize}
\end{corollary}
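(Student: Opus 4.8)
The plan is to treat the two bullets separately, reducing the second to an $\epsilon$-perturbed \emph{strong}-convexity inequality for the Fréchet functional $F$.

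For existence, I would observe that the proof of Lemma~\ref{lem:existence_minimizer_in_complete_cat_k} never invokes the $\mathrm{CAT}(K)$ comparison inequality itself: it uses only that $d^2(y,\cdot)$ is continuous (so $F$ is lower semicontinuous by Fatou's lemma), that $F\ge 0$, that a minimizing sequence stays inside a bounded ball since $\mathrm{supp}(\nu)$ is compact (a pure triangle-inequality estimate), and that $\mathcal{M}$ is complete. None of these is affected by weakening $\mathrm{CAT}(K)$ to $\epsilon$-approximate $\mathrm{CAT}(K)$, so the same argument produces a minimizer $m$ of $F$ for every $\epsilon>0$; the one hypothesis that must be carried over is completeness of $\mathcal{M}$.

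For the approximate uniqueness, the essential step is to sharpen Theorem~\ref{thm:approximate_geodesic_convexity_of_squared_distance} from ordinary convexity-with-slack to strong convexity-with-slack. The proof of that theorem already yields, for a geodesic $\gamma$ with endpoints $a=\gamma(0)$, $b=\gamma(1)$ and any $p\in\mathcal{M}$,
\begin{align*}
    d^2\bigl(p,\gamma(t)\bigr) \le \bigl(d_{\mathbb{M}^2_K}(\bar p,\bar\gamma(t))\bigr)^2 + C_0\,\epsilon D .
\end{align*}
The only change I would make is to bound the model-space term using the \emph{sharp} (CN-type) inequality available in $\mathbb{M}^2_K$ when $K<0$ — the hyperbolic plane of curvature $K$, which is $\mathrm{CAT}(K)\subseteq\mathrm{CAT}(0)$ — so that, since the comparison triangle has $d_{\mathbb{M}^2_K}(\bar a,\bar b)=d(a,b)$,
\begin{align*}
    \bigl(d_{\mathbb{M}^2_K}(\bar p,\bar\gamma(t))\bigr)^2 \le (1-t)\,d^2(p,a) + t\,d^2(p,b) - t(1-t)\,d^2(a,b).
\end{align*}
Combining these two displays and integrating in $p$ against $\nu$ gives
\begin{align*}
    F(\gamma(t)) \le (1-t)\,F(a) + t\,F(b) - t(1-t)\,d^2(a,b) + C\,\epsilon D .
\end{align*}

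The conclusion then follows by taking $a=m_1$ and $b=m_2$ to be two minimizers, $t=\tfrac12$, and $F^{\ast}:=\inf_{\mathcal{M}}F=F(m_1)=F(m_2)$: since $F(\gamma(\tfrac12))\ge F^{\ast}$, the last display collapses to $F^{\ast}\le F^{\ast}-\tfrac14 d^2(m_1,m_2)+C\epsilon D$, whence $d(m_1,m_2)\le 2\sqrt{CD}\,\sqrt{\epsilon}=O(\sqrt{\epsilon})$. The main obstacle is exactly this sharpening: the stated form of Theorem~\ref{thm:approximate_geodesic_convexity_of_squared_distance} records only ordinary convexity-with-slack, which by itself cannot localize minimizers at all, so one must go back to the hyperbolic model space, extract the quantitative gain $-t(1-t)d^2(a,b)$ there (legitimate because $\mathbb{M}^2_K$ is $\mathrm{CAT}(0)$ for $K<0$), and verify that transferring it back through the $\epsilon$-approximate comparison inequality costs only an additive $O(\epsilon D)$. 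In particular one needs $\epsilon$ small relative to $D$ so that the $(C_1\epsilon)^2$ term arising when squaring $d(p,\gamma(t))\le d_{\mathbb{M}^2_K}(\bar p,\bar\gamma(t))+C_1\epsilon$ is absorbed into $C\epsilon D$ rather than competing with the strong-convexity gain.
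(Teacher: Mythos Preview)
The paper does not supply a proof of this corollary; it is stated immediately after Theorem~\ref{thm:approximate_geodesic_convexity_of_squared_distance} without argument. Your proposal is therefore the natural proof the paper omits, and it is correct.

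Your diagnosis of the main obstacle is exactly right and worth emphasizing: the theorem as stated in the paper records only
\[
f(\gamma(t)) \le (1-t)f(\gamma(0)) + tf(\gamma(1)) + O(\epsilon D),
\]
i.e.\ ordinary convexity with additive slack, and that inequality alone cannot bound $d(m_1,m_2)$ at all (two minimizers of an approximately convex function can be far apart). Your remedy---going back inside the proof of the theorem and replacing the plain convexity bound on $d_{\mathbb{M}^2_K}^2(\bar p,\bar\gamma(t))$ by the CN inequality, legitimate since $\mathbb{M}^2_K$ with $K<0$ is $\mathrm{CAT}(0)$---recovers the quantitative term $-t(1-t)d^2(a,b)$, and this term survives the $\epsilon$-approximate comparison with only an additive $O(\epsilon D)$ cost. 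The remaining steps (integrate in $p$ against $\nu$, specialize to $a=m_1$, $b=m_2$, $t=\tfrac12$, and use $F(\gamma(\tfrac12))\ge F^\ast$) are routine and correct.

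For existence, your observation is likewise correct: the proof of Lemma~\ref{lem:existence_minimizer_in_complete_cat_k} uses only continuity of $d^2(y,\cdot)$, the triangle inequality to confine a minimizing sequence, and completeness---none of which invokes the $\mathrm{CAT}(K)$ comparison---so it carries over verbatim to the $\epsilon$-approximate setting.
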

\begin{proposition}[Local Existence and Uniqueness]
    \label{prp:local_existence_and_uniqueness}
    Let $\mathcal{M}$ be a geodesic metric space that is $\mathrm{CAT}(K)$ (or $\epsilon$-approximately $\mathrm{CAT}(K)$ space) locally in a geodesic ball $B(p_0, R)$.
    That is, for any geodesic triangle fully contained in $B(p_0, R)$, the usual $\mathrm{CAT}(K)$ (or approximate) triangle comparison property holds.
    Suppose $\nu$ is a probability measure on $\mathcal{M}$ whose support $\mathrm{supp}(\nu)$ is contained in $B(p_0, R)$.
    Define the Fréchet functional
    \begin{align*}
        F(x) = \int_\mathcal{M} d^2(y, x) d\nu(y).
    \end{align*}
    Then, one has the following.
    \begin{itemize}
        \item The function $F(x)$ attains its minimum at some $m \in B(p_0, R)$.
        \item If $K > 0$ but $\mathrm{diam}(\mathrm{supp}(\nu)) < \frac{\pi}{2\sqrt{K}}$, or if $K \leq 0$ (no diameter restriction), then $m$ is unique within $B(p_0, R)$.
    \end{itemize}
\end{proposition}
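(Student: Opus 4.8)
The plan is to reduce the global minimization to a minimization over a bounded convex region sitting well inside $B(p_0,R)$, and then to transplant the existence argument of Lemma~\ref{lem:existence_minimizer_in_complete_cat_k} and the strict-convexity uniqueness argument of Lemma~\ref{lem:uniqueness_frechet_mean_in_strictly_convex_cat_k} / Proposition~\ref{prp:compactness_criterion_for_uniqueness_positive_curvature}, keeping careful track of the fact that $\mathcal{M}$ is only \emph{locally} $\mathrm{CAT}(K)$ and (in the second case) of the additive $\epsilon$ slack supplied by Theorem~\ref{thm:approximate_geodesic_convexity_of_squared_distance}.

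First I would localize the minimizer. Set $r_0 \coloneqq \sup_{y \in \mathrm{supp}(\nu)} d(p_0,y)$; since $\mathrm{supp}(\nu)$ is compact (as elsewhere) and contained in the open ball, $r_0 < R$. As in Lemma~\ref{lem:existence_minimizer_in_complete_cat_k}, $F$ is lower semicontinuous by Fatou, and it is coercive: for $d(p_0,x)\ge r_0$ one has $d(y,x)\ge d(p_0,x)-r_0$ for every $y\in\mathrm{supp}(\nu)$, hence $F(x)\ge (d(p_0,x)-r_0)^2\to\infty$, while $F(p_0)\le r_0^2<\infty$. Next I would use that the closed ball $\overline{B}(p_0,r_0)$ is geodesically convex inside $B(p_0,R)$ (this holds in the exact $\mathrm{CAT}(K)$ case when $r_0<D_K/2$, by the argument in the proof of Proposition~\ref{prp:compactness_criterion_for_uniqueness_positive_curvature}, and up to an $O(\epsilon)$ enlargement in the $\epsilon$-approximate case), together with the fact that the nearest-point projection $\Pi$ onto this complete convex set is distance-nonincreasing: $d(y,\Pi(x))\le d(y,x)$ for all $y\in\overline{B}(p_0,r_0)\supseteq\mathrm{supp}(\nu)$, so $F(\Pi(x))\le F(x)$. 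Hence it suffices to minimize $F$ over $\overline{B}(p_0,r_0)$, and on this complete set the minimizing-sequence argument of Lemma~\ref{lem:existence_minimizer_in_complete_cat_k} (bounded sequence, l.s.c. functional, extraction of a convergent subsequence by completeness) produces a minimizer $m\in\overline{B}(p_0,r_0)\subset B(p_0,R)$. Alternatively one invokes geodesic convexity from Lemma~\ref{lem:convexity_of_squared_distance_function} / Theorem~\ref{thm:approximate_geodesic_convexity_of_squared_distance} plus coercivity to get attainment on the complete convex set directly.

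For uniqueness, suppose $m_1\ne m_2$ both minimize $F$ in $B(p_0,R)$. By the previous step both lie in $\overline{B}(p_0,r_0)$, which is convex and contained in the locally-$\mathrm{CAT}(K)$ region, so the geodesic $\gamma$ from $m_1$ to $m_2$ stays inside it, and for each $y\in\mathrm{supp}(\nu)$ the triangle $\triangle y m_1 m_2$ has perimeter at most $2r_0+d(m_1,m_2)$, which is $<D_K$ when $K>0$ under the hypothesis $\mathrm{diam}(\mathrm{supp}(\nu))<\pi/(2\sqrt{K})$ and is unconditional when $K\le 0$. Thus $t\mapsto d^2(y,\gamma(t))$ is strictly geodesically convex by the reasoning of Lemma~\ref{lem:convexity_of_squared_distance_function} and Proposition~\ref{prp:compactness_criterion_for_uniqueness_positive_curvature}; integrating the strict inequality against $\nu$ gives $F(\gamma(\tfrac12))<\tfrac12 F(m_1)+\tfrac12 F(m_2)=\inf F$, a contradiction — exactly as in Lemma~\ref{lem:uniqueness_frechet_mean_in_strictly_convex_cat_k}. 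In the $\epsilon$-approximate setting the strict inequality is degraded by an $O(\epsilon D)$ term (Theorem~\ref{thm:approximate_geodesic_convexity_of_squared_distance}), so instead of a contradiction one concludes $d(m_1,m_2)=O(\sqrt{\epsilon})$, which is the statement of the preceding corollary.

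The genuinely non-routine point is the localization step: showing that the minimizer cannot escape to the boundary sphere of $B(p_0,R)$ and that the geodesic joining two candidate minimizers never leaves the region where the $\mathrm{CAT}(K)$ comparison is assumed. This rests on convexity of small geodesic balls and the distance-nonincreasing nearest-point projection, which in turn require $r_0$ (hence an implicit upper bound on $R$) to lie below the $\pi/(2\sqrt{K})$ threshold when $K>0$, and incur an $O(\epsilon)$ correction in the approximate case; outside that regime geodesic balls need not be convex and this reduction breaks down. Once the reduction is in place, everything else is a transcription of the already-established global results.
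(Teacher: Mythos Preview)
The paper does not actually supply a proof of Proposition~\ref{prp:local_existence_and_uniqueness}: after stating it, the authors add only the one-sentence gloss ``In other words, the Fr\'echet mean $m$ exists in the local ball $B(p_0,R)$ and is unique when the (local) curvature constraints enforce strict geodesic convexity,'' and then move on. So there is no paper proof to compare against.

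Your strategy---localize by showing $\inf F$ is attained inside a smaller closed convex ball $\overline{B}(p_0,r_0)$ via coercivity plus the nonexpansive nearest-point projection, then rerun the global existence argument (Lemma~\ref{lem:existence_minimizer_in_complete_cat_k}) and the strict-convexity contradiction (Lemma~\ref{lem:uniqueness_frechet_mean_in_strictly_convex_cat_k}, Proposition~\ref{prp:compactness_criterion_for_uniqueness_positive_curvature}) inside that ball---is the natural way to fill the gap, and is evidently what the paper's remark is gesturing at. Your identification of the genuinely delicate step (convexity of the ball and confinement of the connecting geodesic, requiring an implicit size restriction $r_0<\pi/(2\sqrt{K})$ when $K>0$) is accurate and is precisely the hypothesis the paper leaves tacit. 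One small quibble: your perimeter estimate ``$2r_0+d(m_1,m_2)$'' for $\triangle y m_1 m_2$ undercounts, since $d(y,m_i)\le 2r_0$ (not $r_0$) via the triangle inequality through $p_0$; the correct bound is $4r_0+d(m_1,m_2)$, which still sits below $2D_K$ under the stated diameter hypothesis, so the argument survives.
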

In other words, the Fréchet mean $m$ exists in the local ball $B(p_0, R)$ and is unique when the (local) curvature constraints enforce strict geodesic convexity.

\begin{proposition}[Heavy-Tailed Distributions and Slower Convergence]
    \label{prp:tail_bound_frechet_mean}
    Let $\mathcal{M}$ be either a strict $\mathrm{CAT}(K)$ space or an $\epsilon$-approximate $\mathrm{CAT}(K)$ space of diameter $\leq D$.
    Suppose $Y_1,Y_2,\dots,Y_n$ are i.i.d. random points in $\mathcal{M}$ with common distribution $\nu$.
    Denote by
    \begin{align*}
        \mu &= \argmin_{z \in \mathcal{M}}\mathbb{E}[d^2(Y, z)] \\
        \hat{\mu} &= \argmin_{z \in \mathcal{M}}\frac{1}{n}\sum^n_{i=1}d^2(Y_i, z).
    \end{align*}
    Assume that
    \begin{enumerate}
        \item $\nu$ has finite second moments $\mathbb{E}[d^2(Y, z_0)] < \infty$ for some reference point $z_0$, and
        \item the random variable $d^2(Y, z_0)$ satisfies a sub-exponential-type tail bound: there exist constants $\alpha \geq 0$, $\gamma \in (0, 1]$ such that
        \begin{align}
            \mathbb{P}\left(d^2(Y, z_0) > t\right) \leq \exp(-\alpha t^\gamma),
        \end{align}
        for all $t > 0$.
    \end{enumerate}
    Then, there exist constants $c, C$ such that for all $n \geq 1$ and all $\epsilon > 0$,
    \begin{align}
        \mathbb{P}\left(d(\hat{\mu}_n, \mu) \geq \epsilon \right) \leq C \exp\left(-cn \epsilon^{2\gamma} \right).
    \end{align}
    Hence $\hat{\mu}_n$ converges to $\mu$ in probability, and its deviation tails decay sub-exponentially with arte $\epsilon^{2\gamma}$.
\end{proposition}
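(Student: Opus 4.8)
The plan is to run the same two-step template as the proof of Theorem~\ref{thm:concentration_for_sample_frechet_mean} — a strong-convexity reduction, then an empirical-process tail bound — but to replace the bounded/sub-Gaussian deviation estimate by a sub-Weibull (generalized Bernstein) one that carries the exponent $\gamma$. Write $F(z) = \mathbb{E}[d^2(Y,z)]$ and $F_n(z) = \tfrac1n\sum_{i=1}^n d^2(Y_i,z)$. As established inside the proof of Theorem~\ref{thm:concentration_for_sample_frechet_mean}, in a $\mathrm{CAT}(K)$ space of diameter $\le D$ the map $z\mapsto F(z)$ is $\alpha(K,D)$-strongly geodesically convex around $\mu$, so $F(\hat\mu_n)-F(\mu)\ge\alpha(K,D)\,d^2(\hat\mu_n,\mu)$, while optimality of $\hat\mu_n$ for $F_n$ gives $F(\hat\mu_n)-F(\mu)\le 2\sup_z|F_n(z)-F(z)|$; hence
\[
  \{\,d(\hat\mu_n,\mu)\ge\epsilon\,\}\ \subseteq\ \Bigl\{\,\sup_z\,|F_n(z)-F(z)|\ \ge\ \tfrac{\alpha(K,D)}{2}\,\epsilon^2\,\Bigr\}.
\]
In the $\epsilon_0$-approximate case one substitutes the approximate convexity of Theorem~\ref{thm:approximate_geodesic_convexity_of_squared_distance}, contributing an extra $O(\epsilon_0 D)$ slack that is harmless once $\epsilon_0$ is small relative to $\epsilon^2$. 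Setting $s := \tfrac{\alpha(K,D)}{2}\epsilon^2$, everything reduces to showing $\mathbb{P}[\sup_z|F_n(z)-F(z)|\ge s]\le C\exp(-c\,n\,s^{\gamma})$, because $s^{\gamma}\propto\epsilon^{2\gamma}$.

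For the empirical process, I would first record that the summands are uniformly sub-Weibull of order $\gamma$: from $d^2(Y,z)\le 2d^2(Y,z_0)+2d^2(z_0,z)\le 2d^2(Y,z_0)+2D^2$ together with the assumed tail bound on $d^2(Y,z_0)$, the centred variables $X_i(z):=d^2(Y_i,z)-F(z)$ satisfy $\mathbb{P}(|X_i(z)|>t)\le 2\exp(-\alpha' t^{\gamma})$ with $\alpha'$ depending only on $\alpha,\gamma,D$, uniformly in $z$ (if the space is not assumed bounded one first localizes: the tail bound and a union bound over $i$ confine all $Y_i$, hence $\mu$ and $\hat\mu_n$, to a ball $B(z_0,R_n)$ with $R_n$ polylogarithmic in $n$, except on an event of probability at most $ne^{-cR_n^{2\gamma}}$). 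Then cover the relevant region by a $\delta$-net of cardinality $N\le(cD/\delta)^m$ (using that $\mathcal{M}$ is an $m$-dimensional manifold, as in Theorem~\ref{thm:concentration_for_sample_frechet_mean}); since $|d^2(y,z)-d^2(y,z')|\le 2D\,d(z,z')$ the process $z\mapsto F_n(z)-F(z)$ is $4D$-Lipschitz, so $\sup_z|F_n-F|\le\max_{z\in\mathrm{net}}|F_n(z)-F(z)|+4D\delta$. Applying a Bernstein-type inequality for i.i.d.\ sums of sub-Weibull($\gamma$) variables at each net point, choosing $\delta\asymp s$, and taking a union bound over the $N$ points yields $\mathbb{P}[\sup_z|F_n(z)-F(z)|\ge s]\le C(K,D,m)\exp(-c\,n\,s^{\gamma})$ in the deviation range of interest; substituting $s=\tfrac{\alpha(K,D)}{2}\epsilon^2$ and absorbing constants gives the claimed $C\exp(-c\,n\,\epsilon^{2\gamma})$.

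\textbf{Main obstacle.} The delicate part is the empirical-process step for genuinely heavy tails ($\gamma<1$): one must invoke a Bernstein-type inequality valid for sub-Weibull variables, whose ``slow'' branch $\exp(-c\,n\,s^{\gamma})$ is precisely what produces the $\epsilon^{2\gamma}$ exponent and which was absent in the bounded/sub-Gaussian analysis of Theorem~\ref{thm:concentration_for_sample_frechet_mean}; one then has to track carefully which deviation branch (the Gaussian one versus the $\gamma$-Weibull one) governs for a given $\epsilon$, and make sure the net cardinality (and, in the unbounded case, the localization radius $R_n$) enter only polynomially, respectively polylogarithmically, so as to be absorbed into the exponential rate. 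The cleanest way to keep the Lipschitz-constant-times-diameter factor from degrading the constants is a chaining/peeling argument localized around $\mu$ rather than a crude net over the whole space; carrying that bookkeeping through is the real work.
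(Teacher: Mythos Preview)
Your proposal is correct and follows essentially the same route as the paper's own proof: the strong-convexity reduction $\{d(\hat\mu_n,\mu)\ge\epsilon\}\subseteq\{\sup_z|F_n(z)-F(z)|\ge \tfrac{\alpha(K,D)}{2}\epsilon^2\}$, followed by a $\delta$-net covering of $\mathcal{M}$ and a union bound with a sub-exponential concentration inequality of exponent $\gamma$ at each net point. If anything, your write-up is more careful than the paper's (you make the sub-Weibull structure of the summands explicit, handle the Lipschitz oscillation term, and flag the localization and branch-tracking issues), whereas the paper simply asserts $\mathbb{P}(\sup_z|F_n-F|\ge t)\lesssim N_\delta\exp(-c'nt^\gamma)$ and absorbs $\log N_\delta$ into the constants.
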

\begin{proof}
    Define the population and empirical Fréchet functionals
    \begin{align*}
        F(z) = \mathbb{E}[d^2(Y, z)], \quad F_n(z) = \frac{1}{n}\sum^n_{i=1}d^2(Y_i, z).
    \end{align*}
    By definition,
    \begin{align*}
        \mu = \argmin_{z \in \mathcal{M}} F(z), \quad \hat{\mu}_n = \argmin_{z \in \mathcal{M}} F_n(z).
    \end{align*}
    Observe that
    \begin{align*}
        F(\hat{\mu}_n) - F(\mu) &= \left\{F(\hat{\mu}_n) - F_n(\hat{\mu}_n)\right\} + \left\{F_n(\hat{\mu}_n) - F_n(\mu)\right\} + \left\{F_n(\mu) - F(\mu)\right\} \\
        &\leq \left\{F(\hat{\mu}_n) - F_n(\hat{\mu}_n)\right\} - \left\{F(\mu) - F_n(\mu)\right\}, \\
        \left|F(\hat{\mu}_n) - F(\mu)\right| &\leq \left|F(\hat{\mu}_n) - F_n(\hat{\mu}_n)\right| + \left|F(\mu) - F_n(\mu)\right|.
    \end{align*}
    Therefore,
    \begin{align*}
        \left\{d(\hat{\mu}_n, \mu) \geq \epsilon \right\} \subseteq \left\{F(\hat{\mu}_n) - F(\mu) \geq \alpha(K, D)\epsilon^2 \right\} \subseteq \left\{\sup_{z \in \mathcal{M}}\left|F_n(z) - F(z) \geq \frac{\alpha(K, D)}{2}\epsilon^2 \right|\right\}.
    \end{align*}
    Here,
    \begin{align*}
        \sup_{z \in \mathcal{M}}\left|F_n(z) - F(z)\right| \leq \max_{1\leq j \leq N_\delta}\left|F_n(z_j) - F(z_j)\right| + \eta(\delta),
    \end{align*}
    where $N_\delta \leq \exp(C_1(D / \delta)^m)$ is a $\delta$-net for some $m$ and $\eta(\delta) \to 0$ as $\delta \to 0$.
    Taking $\delta \to 0$,
    \begin{align*}
        \mathbb{P}\left(\sup_{z \in \mathcal{M}}\left|F_n(z) - F(z)\right| \geq t \right) &\leq N_\delta \cdot 2\exp\left(-c' n t^\gamma \right) + \mathbb{P}(\eta(\delta) \geq t / 2) \\
        &\approx \exp(\ln N_\delta - c' n t^\gamma).
    \end{align*}
    For fixed $D$, $\log N_\delta$ is polynomial in $(1 / \delta)$ so we can absorb that into a constant factor.
\end{proof}

\clearpage

\section{Details of Experiments}
\label{sec:details_of_experiments}
This section describes the details of experiments in Section~\ref{sec:experiments}.

\paragraph{Model Details}
Throughout the experiment, we use an implementation of Fréchet regression based on the Nadaraya-Watson estimator~\citep{davis2010population,hein2009robust,steinke2008non}.
\begin{align*}
    \mu^*(x) = \argmin_{z \in \mathcal{M}}\frac{1}{n}\sum^n_{i=1}K_h(X_i - x)d^2(Y_i, z),
\end{align*}
where $K_h$ is a smoothing kernel that corresponds to a probability density with $K_h(\cdot) = h^{-1}K(\cdot / h)$.
For the optimization, we use Limited-memory BFGS~\citep{liu1989limited}.

\begin{listing}[H]
    \begin{minted}[frame=lines,framesep=2mm,baselinestretch=1.2,fontsize=\footnotesize,linenos]{python}
        import numpy as np
        from scipy.optimize import minimize

        # Kernel function (Gaussian kernel)
        def gaussian_kernel(x, x_data, bandwidth):
            dists = np.linalg.norm(x_data - x, axis=1)
            weights = np.exp(-0.5 * (dists / bandwidth) ** 2)
            return weights / np.sum(weights)
        
        # Fréchet objective function
        def frechet_objective(y, responses, weights, distance_func):
            dists = np.array([distance_func(y, r) for r in responses])
            return np.sum(weights * dists**2)
        
        # Fréchet regression function
        def frechet_regression(X, Y, x_query, bandwidth, distance_func):
            weights = gaussian_kernel(x_query, X, bandwidth)
            y_init = np.mean(Y, axis=0)
            result = minimize(
                frechet_objective,
                y_init,
                args=(Y, weights, distance_func),
                method='L-BFGS-B'
            )
            return result.x
    \end{minted}
    \caption{Python code for the Fréchet regression.}
    \label{lst:python_code_frechet_regression}
\end{listing}

\paragraph{Stereographic Projection}
Listing~\ref{lst:python_code_hyperbolic_mapping} shows the Python code for the stereographic projection from sphere surface to hyperbolic plane.

\begin{listing}[H]
    \begin{minted}[frame=lines,framesep=2mm,baselinestretch=1.2,fontsize=\footnotesize,linenos]{python}    
    # Define the stereographic projection function
    def stereographic_projection(x, y, z, R):
        u = R * x / (R + z)
        v = R * y / (R + z)
    return u, v
    \end{minted}
    \caption{Python code for the stereographic projection.}
    \label{lst:python_code_hyperbolic_mapping}
\end{listing}

\subsection{Details for Illustrative Example~\ref{sec:experiments:illustrative_example}}

\paragraph{Data Generating Process}
To assess the performance of the Fréchet regression estimator, consider to generate simulated data.
The regression function is
\begin{align*}
    \mu(x)(\cdot) = ((1 - x^2){1/2}\cos(\pi x), (1 - x^2)^{1/2}\sin(\pi x), x), \quad x \in (0, 1),
\end{align*}
which maps a spiral on the sphere.
To generate a random sample $\{(X_i, Y_i)\}^n_{i=1}$, let $X_i \sim \mathcal{U}(0, 1)$ followed by a bivariate normal
random vector $U_i$, and
\begin{align*}
    Y_i = \cos(\|U_i\|)\mu(X_i) + \sin(\|U_i\|)\frac{U_i}{\|U_i\|}.
\end{align*}
The sample size of the simulation data is $n = 50$, and Gaussian noise with variance $0.4$ is added to each instance.

\subsection{Details for Experiments on Real-world Datasets~\ref{sec:experiments:real_world_dataset}}

\paragraph{Details of Datasets}
\begin{itemize}
    \item {\bf HYG Stellar}: The HYG Stellar Database is a comprehensive star catalog that amalgamates data from several prominent astronomical catalogs, including HIPPARCOS, the Yale Bright Star Catalog, and the Gliese Catalog of Nearby Stars. This integration provides detailed information on stars' positions, brightness, spectral types, and various identifiers such as traditional names and Bayer designations.
    It contains detailed information on 119,614 stars including position data, photometric data and luminosity and variability.
    \item {\bf USGS Earthquake}: The USGS Earthquake catalogue provides information on earthquakes worldwide with a magnitude of 2.5 and above that have occurred over the past week, and it contains 300 instances.
    \item {\bf NOAA Climate}: The NOAA Climate data provides Two-Line Element (TLE) sets for weather satellites, including those operated by NOAA, and contains 72 instances.
    A TLE consists of two 69-character lines of data, each containing specific parameters that describe the satellite's orbit.
\end{itemize}
Table~\ref{tab:dataset_details} shows the detailed breakdown of variables $X$ and $Y$ for each dataset.

\begin{table}[h]
    \centering
    \begin{tabular}{c|rll}
        \toprule
         Dataset &  Sample size & Predictor $X$ & Response $Y$ \\
         \midrule
         HYG Stellar & 119,614 & \makecell[l]{\textbullet~~Observation time $t$ \\ \textbullet~~Brightness of the star $m$ \\ \textbullet~~Absolute Magnitude $m'$ \\ \textbullet~~Spectral type $s$} & Position on the celestial sphere \\
         \hline
          USGS Earthquake & 300 & \makecell[l]{\textbullet~~Observation time $t$ \\ \textbullet~~Magnitude of the earthquake $m$ \\ \textbullet~~Depth of the earthquake $d$} & Earthquake location \\
          \hline
          NOAA Climate & 72 & \makecell[l]{\textbullet~~Timestamp of the TLE $t$ \\ \textbullet~~Orbital parameters $\theta$ \\ \textbullet~~Inclination $i$} & Satellite position \\
         \bottomrule
    \end{tabular}
    \caption{Detailed breakdown of variables for each dataset.}
    \label{tab:dataset_details}
\end{table}

\begin{figure*}[t]
    \centering
    \includegraphics[width=\linewidth]{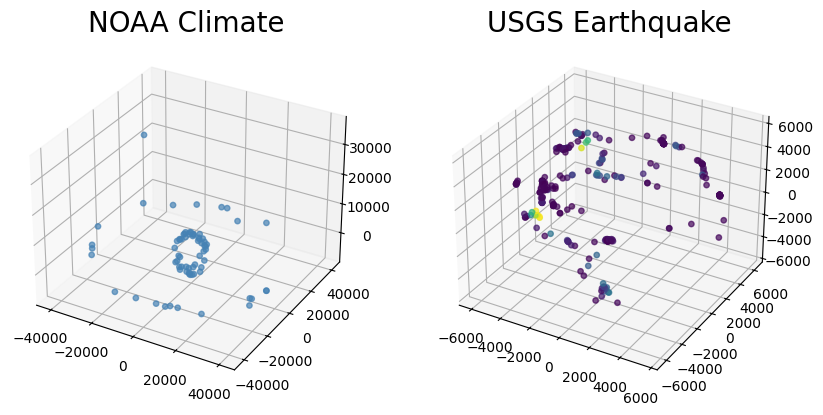}
    \caption{Visualizations for USGS Earthquake catalogue and NOAA Climate dataset.}
    \label{fig:noaa_usgs}
\end{figure*}
\begin{figure*}[t]
    \centering
    \includegraphics[width=\linewidth]{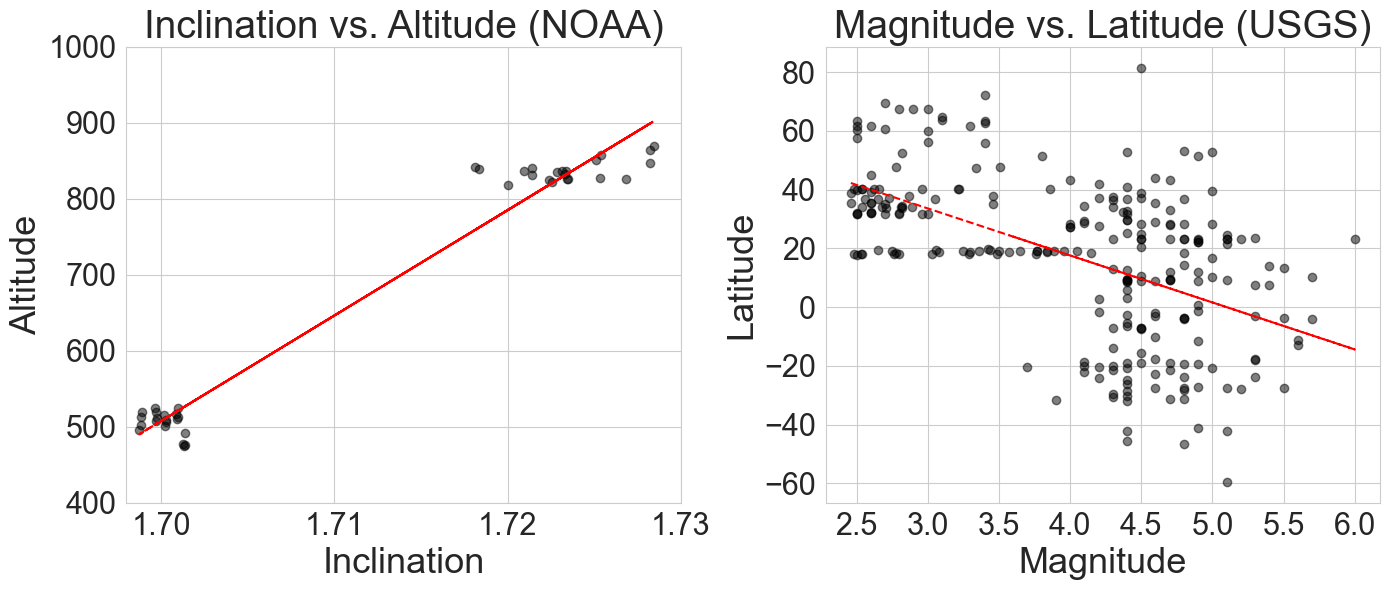}
    \caption{Heteroscedasticity in the NOAA and USGS datasets.}
    \label{fig:heteroscedasticity_noaa_usgs}
\end{figure*}

\paragraph{Visualizations of Real-world Spherical Datasets}
Figure~\ref{fig:noaa_usgs} shows the additional visualizations of real-world spherical datasets, and Figure~\ref{fig:heteroscedasticity_noaa_usgs} shows the heteroscedasticity in the NOAA and USGS datasets.
In addition, Python code in Listing~\ref{lst:python_code_visualization_hyg} shows the implementation for the visualization of HYG Steller dataset.

\begin{listing}[t]
    \begin{minted}[frame=lines,framesep=2mm,baselinestretch=1.2,fontsize=\footnotesize,linenos]{python}
    
    import numpy as np
    import matplotlib.pyplot as plt
    from astropy.io import ascii
    
    # Load the Bright Star Catalog
    url = '{Data URL}' # URL for HYG Steller database
    data = ascii.read(url)
    
    # Extract Right Ascension and Declination
    ra = np.array(data['ra'])  # in hours
    dec = np.array(data['dec'])  # in degrees
    
    # Convert RA from hours to degrees
    ra_deg = ra * 15
    
    # Convert RA and Dec to radians for plotting
    ra_rad = np.radians(ra_deg)
    dec_rad = np.radians(dec)
    
    
    # Create a 3D scatter plot
    fig = plt.figure(figsize=(12, 8))
    ax = fig.add_subplot(111, projection='3d')
    
    # Convert spherical coordinates to Cartesian for plotting
    x = np.cos(dec_rad) * np.cos(ra_rad)
    y = np.cos(dec_rad) * np.sin(ra_rad)
    z = np.sin(dec_rad)
    
    # Plot the stars
    ax.scatter(x, y, z, color='white', s=0.01, label="data points")
    
    ax.xaxis.set_ticklabels([])
    ax.yaxis.set_ticklabels([])
    ax.zaxis.set_ticklabels([])
    
    # Set plot parameters
    ax.set_facecolor('black')
    ax.set_xlabel('X')
    ax.set_ylabel('Y')
    ax.set_zlabel('Z')
    plt.legend(markerscale=80, fontsize=30)
    plt.show()
    \end{minted}
    \caption{Python code for the visualization of HYG Steller database.}
    \label{lst:python_code_visualization_hyg}
\end{listing}

\end{document}